\newtheorem{lemma}{Lemma}
\newtheorem{example}{Example}
\newtheorem{theorem}{Theorem}
\newtheorem{proposition}{Proposition}
\newtheorem{definition}{Definition}
\DeclareMathOperator*{\argmax}{arg\,max}
\DeclareMathOperator*{\argmin}{arg\,min}
\newcommand{\bbP}{\mathbb{P}}
\newcommand{\bbE}{\mathbb{E}}
\newcommand{\dboijn}{\Delta \beta_{1ij}^{n}}
\newcommand{\daijn}{\Delta a_{ij}^{n}}
\newcommand{\dbijn}{\Delta b_{ij}^{n}}
\newcommand{\dsijn}{\Delta \sigma_{ij}^{n}}
\newcommand{\bzin}{\beta^{n}_{0i}}
\newcommand{\boin}{\beta^{n}_{1i}}
\newcommand{\ain}{a_i^n}
\newcommand{\bin}{b_i^n}
\newcommand{\sigmain}{\sigma_i^n}
\newcommand{\bzjn}{\beta^{n}_{0j}}
\newcommand{\bojn}{\beta^{n}_{1j}}
\newcommand{\ajn}{a_j^n}
\newcommand{\bjn}{b_j^n}
\newcommand{\sigmajn}{\sigma_j^n}
\newcommand{\bzj}{\beta_{0j}^{*}}
\newcommand{\boj}{\beta_{1j}^{*}}
\newcommand{\aj}{a_{j}^{*}}
\newcommand{\bj}{b_{j}^{*}}
\newcommand{\sigmaj}{\sigma_{j}^{*}}
\newcommand{\bzi}{\beta_{0i}^{*}}
\newcommand{\boi}{\beta_{1i}^{*}}
\newcommand{\ai}{a_{i}^{*}}
\newcommand{\bi}{b_{i}^{*}}
\newcommand{\sigmai}{\sigma_{i}^{*}}
\newcommand{\zerod}{\mathbf{0}_d}
\newcommand{\brj}{\bar{r}(|\mathcal{C}_j|)}
\newcommand{\brlj}{\bar{r}(|\mathcal{C}_{\ell_j}|)}
\newcommand{\brone}{\bar{r}(|\mathcal{C}_1|)}
\newcommand{\brjp}{\bar{r}(|\mathcal{C}_{j'}|)}
\newcommand{\dboione}{\Delta \beta_{1i1}^{n}}
\newcommand{\daione}{\Delta a_{i1}^{n}}
\newcommand{\dbione}{\Delta b_{i1}^{n}}
\newcommand{\dsione}{\Delta \sigma_{i1}^{n}}
\newcommand{\dint}{\mathrm{d}}
\newcommand{\softmax}{\mathrm{Softmax}}
\newcommand{\topK}{\mathrm{TopK}}
\title{Statistical Perspective of Top-K Sparse Softmax Gating Mixture of Experts}
\author{Huy Nguyen \\
Department of Statistics and Data Sciences\\
The University of Texas at Austin\\
Austin, TX 78712 \\
\texttt{huynm@utexas.edu} \\
\And
Pedram Akbarian \\
Department of Electrical and Computer Engineering\\
The University of Texas at Austin\\
Austin, TX 78712 \\
\texttt{akbarian@utexas.edu} \\
\And
Fanqi Yan \\
Department of Computer Science\\
The University of Texas at Austin\\
Austin, TX 78712 \\
\texttt{fanqi.yan@utexas.edu} \\
\And
Nhat Ho \\
Department of Statistics and Data Sciences\\
The University of Texas at Austin\\
Austin, TX 78712 \\
\texttt{minhnhat@utexas.edu} 
}
\begin{document}

\maketitle

\begin{abstract}
Top-K sparse softmax gating mixture of experts has been widely used for scaling up massive deep-learning architectures without increasing the computational cost. Despite its popularity in real-world applications, the theoretical understanding of that gating function has remained an open problem. The main challenge comes from the structure of the top-K sparse softmax gating function, which partitions the input space into multiple regions with distinct behaviors. By focusing on a Gaussian mixture of experts, we establish theoretical results on the effects of the top-K sparse softmax gating function on both density and parameter estimations. Our results hinge upon defining novel loss functions among parameters to capture different behaviors of the input regions. When the true number of experts $k_{\ast}$ is known, we demonstrate that the convergence rates of density and parameter estimations are both parametric on the sample size. However, when $k_{\ast}$ becomes unknown and the true model is over-specified by a Gaussian mixture of $k$ experts where $k > k_{\ast}$, our findings suggest that the number of experts selected from the top-K sparse softmax gating function must exceed the total cardinality of a certain number of Voronoi cells associated with the true parameters to guarantee the convergence of the density estimation. Moreover, while the density estimation rate remains parametric under this setting, the parameter estimation rates become substantially slow due to an intrinsic interaction between the softmax gating and expert functions.

\end{abstract}
\section{Introduction}
Introduced by \cite{Jacob_Jordan-1991} and \cite{Jordan-1994}, the mixture of experts (MoE) has been known as a statistical machine learning design that leverages softmax gating functions to blend different expert networks together in order to establish a more intricate model. Recently, there has been a huge interest in a variant of this model called top-K sparse softmax gating MoE, which activates only the best $K$ expert networks for each input based on sparse gating functions~\citep{shazeer2017sparse,fedus2022review,chen2023sparse}. Thus, this surrogate can be seen as a form of conditional computation \citep{Bengio2013DeepLO,cho2014exponentially} since it significantly scales up the model capacity while avoiding a proportional increase in the computational cost. These benefits have been empirically demonstrated in several deep learning applications, including natural language processing \citep{lepikhin_gshard_2021,Du_Glam_MoE,Shazeer_JMLR,zhou2023brainformers,pham2024competesmoe}, speech recognition \citep{peng_bayesian_1996,gulati_conformer_2020,You_Speech_MoE_2}, computer vision \citep{dosovitskiy_image_2021,Riquelme2021scalingvision,liang_m3vit_2022,bao_vlmo_2022}, multi-task learning \citep{Hazimeh_multitask_MoE,gupta2022sparsely} and other applications \citep{rives_biological_2021,chow_mixture_expert_2023,li2023sparse,han2024fusemoe}. Nevertheless, to the best of our knowledge, the full theoretical analysis of the top-K sparse softmax gating function has remained missing in the literature. In this paper, we aim to shed new light on the theoretical understanding of that gating function from a statistical perspective via the convergence analysis of maximum likelihood estimation (MLE) under the top-K sparse softmax gating Gaussian MoE.

There have been previous efforts to study the parameter estimation problem in Gaussian MoE models. Firstly, \cite{ho2022gaussian} utilized the generalized Wasserstein loss functions~\cite{Villani-03,Villani-09} to derive the rates for estimating parameters in the input-free gating Gaussian MoE. They pointed out that due to an interaction among expert parameters, these rates became increasingly slow when the number of extra experts rose. Subsequently,~\cite{nguyen2023demystifying} and~\cite{nguyen2024gaussian} took into account the Gaussian MoE with softmax gating and Gaussian gating functions, respectively. Since these gating functions depended on the input, another interaction between gating parameters and expert parameters arose. Therefore, they designed Voronoi loss functions to capture these interactions and observe that the convergence rates for parameter estimation under these settings can be characterized by the solvability of systems of polynomial equations~\citep{Sturmfels_System}. 

Turning to the top-K sparse softmax gating Gaussian MoE, the convergence analysis of the MLE, however, becomes substantially challenging due to the sophisticated structure of the top-K sparse softmax gating function compared to those of softmax gating and Gaussian gating functions. To comprehend these obstacles better, let us introduce the formal formulation of that model.

\textbf{Problem setup.} Suppose that $(X_1,Y_1),\ldots,(X_n,Y_n)\in\mathbb{R}^d\times\mathbb{R}$ are i.i.d. samples of size $n$ from the top-K sparse softmax gating Gaussian MoE of order $k_*$ with the conditional density function 
\begin{align}
    \label{eq:conditional_form}
    g_{G_*}(Y|X)=\sum_{i=1}^{k_*}\softmax(\topK((\boi)^{\top}X,K;\bzi))\cdot f(Y|(\ai)^{\top}X+\bi,\sigmai),
\end{align}
where $G_*:=\sum_{i=1}^{k_*}\exp(\bzi)\delta_{(\boi,\ai,\bi,\sigmai)}$ is a true but unknown \emph{mixing measure} of order $k_*$ (i.e., a linear combination of $k_*$ Dirac measures $\delta$) associated with true parameters $(\bzi,\boi,\ai,\bi,\sigmai)$ for $i\in\{1,2,\ldots,k_*\}$. Here, $h_1(X,a,b):=a^{\top}X+b$ is referred to as an expert function, while we denote $f(\cdot|\mu,\sigma)$ as an univariate Gaussian density function with mean $\mu$ and variance $\sigma$ (The results for other settings of $f(\cdot|\mu, \sigma)$ are in Appendix~\ref{appendix:additional_results}). Additionally, we define for any vectors $v=(v_1,\ldots,v_{k_*})$ and $u=(u_1,\ldots,u_{k_*})$ in $\mathbb{R}^{k_*}$ that $\softmax(v_i):={\exp(v_i)}/{\sum_{j=1}^{k_*}\exp(v_j)}$ and
\begin{align*}
   \topK(v_i,K;u_i):=\begin{cases}
        v_i +u_{i},\hspace{0.82cm} \text{if } v_i \text{ is in the top } K \text{ elements of } v;\\
        -\infty, \hspace{1.2cm} \text{otherwise}.
    \end{cases}
\end{align*}
More specifically, before applying the softmax function in equation~\eqref{eq:conditional_form}, we keep only the top $K$ values of $(\beta^*_{11})^{\top}X,\ldots,(\beta^*_{1k_*})^{\top}X$ and their corresponding bias vectors among $\beta^*_{01},\ldots,\beta^*_{0k_*}$, 
while we set the rest to $-\infty$ to make their gating values vanish.
An instance of the top-K sparse softmax gating function is given in equation~\eqref{eq:explicit_form}. Furthermore, linear expert functions considered in equation~\eqref{eq:conditional_form} are only for the simplicity of presentation. With similar proof techniques, the results in this work can be extended to general settings of the expert functions, including deep neural networks. In order to obtain an estimate of $G_*$, we use the following maximum likelihood estimation (MLE):
\begin{align}
    \label{eq:MLE}
    \widehat{G}_n\in\argmax_{G}\frac{1}{n}\sum_{i=1}^{n}\log(g_{G}(Y_i|X_i)).
\end{align}
Under the \textit{exact-specified} settings, i.e., when the true number of expert $k_*$ is known, the maximum in equation~\eqref{eq:MLE} is subject to the set of all mixing measures of order $k_*$ denoted by $\mathcal{E}_{k_*}(\Omega):=\{G=\sum_{i=1}^{k_*}\exp(\beta_{0i})\delta_{(\beta_{1i},a_i,b_i,\sigma_i)}:(\beta_{0i},\beta_{1i},a_i,b_i,\sigma_i)\in\Omega \}$. 
On the other hand, under the \textit{over-specified} settings, i.e., when $k_*$ is unknown and the true model is over-specified by a Gaussian mixture of $k$ experts where $k>k_*$, the maximum is subject to the set of all mixing measures of order at most $k$, i.e., $\mathcal{O}_{k}(\Omega):=\{G=\sum_{i=1}^{k'}\exp(\beta_{0i})\delta_{(\beta_{1i},a_i,b_i,\sigma_i)}: 1\leq k'\leq k, \ (\beta_{0i},\beta_{1i},a_i,b_i,\sigma_i)\in\Omega\}$.

\textbf{Universal assumptions.} For the sake of theory, we impose four main assumptions on the parameters:

\textbf{(U.1) Convergence of MLE:} To ensure the convergence of parameter estimation, we assume that the parameter space $\Omega$ is compact subset of $\mathbb{R}\times\mathbb{R}^d\times\mathbb{R}^d\times\mathbb{R}\times\mathbb{R}_+$, while the input space $\mathcal{X}$ is bounded.  

\textbf{(U.2) Identifiability:} Next, we assume that $\beta^*_{1k_*}=\zerod$ and $\beta^*_{0k_*}=0$ so that the top-K sparse softmax gating Gaussian mixture of experts is identifiable. Under that assumption, we show in Appendix C that if $G$ and $G'$ are two mixing measures such that $g_{G}(Y|X)=g_{G'}(Y|X)$ for almost surely $(X,Y)$, then it follows that that $G\equiv G'$. Without that assumption, the result that $G\equiv G'$ does not hold, which leads to unncessarily complicated loss functions (see [Proposition 1,\cite{nguyen2023demystifying}]).

\textbf{(U.3) Distinct Experts:} To guarantee that experts  in the mixture~\eqref{eq:conditional_form} are different from each other, we assume that parameters $(a^*_1,b^*_1,\sigma^*_1),\ldots,(a^*_{k_*},b^*_{k_*},\sigma^*_{k_*})$ are pairwise distinct.

\textbf{(U.4) Input-dependent Gating Functions:} To make sure that the gating functions depend on the input $X$, we assume that at least one among parameters $\beta^*_{11},\ldots,\beta^*_{1k_*}$ is different from zero. Otherwise, the gating functions would be independent of the input $X$, which simplifies the problem significantly. In particular, the model~\eqref{eq:conditional_form} would reduce to an input-free gating Gaussian mixture of experts, which was already studied in \cite{ho2022gaussian}.


\textbf{Challenge discussion.} In our convergence analysis, there are two main challenges attributed to the structure of the top-K sparse softmax gating function. \textbf{(1)} First, since this gating function divides the input space into multiple regions and each of which has different convergence behavior of density estimation, there could be a mismatch between the values of the top-K sparse softmax gating function in the density estimation $g_{\widehat{G}_n}$ and in the true density $g_{G_*}$ (see Figure~\ref{fig:input_partition}). 
\textbf{(2)} Second, under the over-specified settings, each component of $G_*$ could be fitted by at least two components of $\widehat{G}_n$. Therefore, choosing only the best $K$ experts in the formulation of $g_{\widehat{G}_n}(Y|X)$ is insufficient to estimate the true density $g_{G_*}(Y|X)$. As a result, it is of great importance to figure out the minimum number of experts selected in the top-K sparse softmax gating function necessary for ensuring the convergence of density estimation.

\textbf{Contributions.} In this work, we provide rigorous statistical guarantees for density estimation and parameter estimation in the top-K sparse softmax gating Gaussian MoE under both the exact-specified and over-specified settings. Our contributions are two-fold and can be summarized as follows (see also Table~\ref{table:parameter_rates}):

\textbf{1. Exact-specified settings:} When the true number of experts $k_*$ is known, 
we point out that the density estimation $g_{\widehat{G}_n}$ converges to the true density $g_{G_*}$ under the Hellinger distance $h$ at the parametric rate, that is, $\bbE_X[h(g_{\widehat{G}_n}(\cdot|X),g_{G_*}(\cdot|X))]=\widetilde{\mathcal{O}}(n^{-1/2})$. Then, we propose a novel Voronoi metric $\mathcal{D}_1$ defined in equation~\eqref{eq:exact_Voronoi_loss} to characterize the parameter estimation rates. By establishing the Hellinger lower bound $\bbE_X[h(g_{G}(\cdot|X),g_{G_*}(\cdot|X))]\gtrsim\mathcal{D}_1(G,G_*)$ for any mixing measure $G\in\mathcal{E}_{k_*}(\Theta)$, we arrive at another bound $\mathcal{D}_1(\widehat{G}_n,G_*)=\widetilde{\mathcal{O}}(n^{-1/2})$, which indicates that the rates for estimating true parameters $\exp(\beta^*_{0i}),\beta^*_{1i},a^*_i,b^*_i,\sigma^*_i$ are of the optimal order $\widetilde{\mathcal{O}}(n^{-1/2})$.

\textbf{2. Over-specified settings:} When $k_*$ is unknown and the true model is over-specified by a Gaussian of $k$ experts where $k>k_*$, we demonstrate that the density estimation $g_{\widehat{G}_n}$ converges to the true density $g_{G_*}$ only if the number of experts chosen from $g_{\widehat{G}_n}$, denoted by $\overline{K}$, is greater than the total cardinality of a certain number of Voronoi cells generated by the support of $G_*$. 
Given these results, the density estimation rate is shown to remain parametric on the sample size. Additionally, by designing a novel Voronoi loss function $\mathcal{D}_2$ in equation~\eqref{eq:overfit_Voronoi_loss} to capture an interaction between parameters of the softmax gating and expert functions, we prove that the MLE $\widehat{G}_n$ converges to the true mixing measure $G_*$ at a rate of $\widetilde{\mathcal{O}}(n^{-1/2})$. Then, it follows from the formulation of $\mathcal{D}_2$ that the estimation rates for true parameters $\beta^*_{1j},a^*_j,b^*_j,\sigma^*_j$ depend on the solvability of a system of polynomial equations arising from the previous interaction, and turn out to be significantly slow.

\textbf{High-level proof ideas.} Following from the challenge discussion, to ensure the convergence of density estimation under the exact-specified settings (resp. over-specified settings), we show that the input regions divided by the true gating functions match those divided by the estimated gating functions in Lemma~\ref{lemma:partition_exact} (resp. Lemma~\ref{lemma:partition_over}). Then, we leverage fundamental results on density estimation for M-estimators in \citep{Vandegeer-2000} to derive the parametric density estimation rate under the Hellinger distance in Theorem~\ref{theorem:exact_fitted_density} (resp. Theorem~\ref{theorem:over_fitted_density}). Regarding the parameter estimation problem, a key step is 
to decompose the density discrepancy $g_{\widehat{G}_n}(Y|X)-g_{G_*}(Y|X)$ into a combination of linearly independent terms. Thus, when the density estimation $g_{\widehat{G}_n}(Y|X)$ converges to the true density $g_{G_*}(Y|X)$, the coefficients in that combination also tend to zero, which leads to our desired parameter estimation rates in Theorem~\ref{theorem:exact_fitted_MLE} (resp. Theorem~\ref{theorem:over_fitted_MLE}).

\begin{table*}[!t]
\centering
\caption{Summary of density estimation and parameter estimation rates under the top-K sparse softmax gating Gaussian MoE model. 
Here, the function $\bar{r}(\cdot)$ stands for the solvability of the system of polynomial equations~\eqref{eq:system}, and $\mathcal{C}_j$ is a Voronoi cell defined in equation~\eqref{eq:Voronoi_cells}. If $|\mathcal{C}_j|=2$, then we have $\brj=4$. Meanwhile, if $|\mathcal{C}_j|=3$, then we get $\brj=6$.}
\begin{tabular}{ | c | c | c |c|c|c|c|} 
\hline
\textbf{Setting}  & $g_{G_{*}}(Y|X)$ & $\exp(\beta_{0j}^{*})$ &  $\beta_{1j}^{*}, b_{j}^{*}$&  $a_{j}^{*}, \sigma_{j}^{*}$\\
\hline 
{Exact-specified}  & $\mathcal{O}(n^{-1/2})$ &  $\mathcal{O}(n^{-1/2})$ &$\mathcal{O}(n^{-1/2})$ &  $\mathcal{O}(n^{-1/2})$ \\
\hline
{Over-specified}  & $\mathcal{O}(n^{-1/2})$  & $\mathcal{O}(n^{-1/2})$ &$\mathcal{O}(n^{-1/2\bar{r}(|\mathcal{C}_{j}|)})$ & $\mathcal{O}(n^{-1/\bar{r}(|\mathcal{C}_{j}|)})$ \\
\hline
\end{tabular}
\label{table:parameter_rates}
\end{table*}

\textbf{Organization.} The rest of our paper is organized as follows. In Section~\ref{sec:exact_specified}, we establish the convergence rates of density estimation and parameter estimation for the top-K sparse softmax gating Gaussian MoE under the exact-specified settings, while the results for the over-specified settings are presented in Section~\ref{sec:over_specified}. Subsequently, we provide practical implications of those results in Section~\ref{sec:practical_implications} before concluding the paper and discussing some future directions in Section~\ref{sec:conclusion}. Finally, full proofs, numerical experiments and additional results are deferred to the supplementary material.

\textbf{Notations.} For any natural numbers $m\geq n$, we denote $[n]:=\{1,2,\ldots,n\}$ and $\binom{m}{n}:=\frac{m!}{n!(m-n!)}$. Next, for any vector $u,v\in\mathbb{R}^d$, we let $|u|:=\sum_{i=1}^{d}u_i$, $u!:=u_1!\ldots u_d!$, $u^v:=u_1^{v_1}\ldots u_d^{v_d}$ and denote $\|u\|$ as its 2-norm value. Meanwhile, we define $|A|$ as the cardinality of some set $A$.
Then, for any two probability densities $f_1$ and $f_2$ dominated by the Lebesgue measure $\nu$, we denote $V(f_1,f_2):=\frac{1}{2}\int|f_1-f_2|\dint\nu$ as their Total Variation distance, whereas $h^2(f_1,f_2):=\frac{1}{2}\int(\sqrt{f_1}-\sqrt{f_2})^2\dint\nu$ represents the squared Hellinger distance. Finally, for any two sequences of positive real numbers $(a_n)$ and $(b_n)$, the notations $a_n=\mathcal{O}(b_n)$ and $a_n\lesssim b_n$ both stand for $a_n\leq Cb_n$ for all $n\in\mathbb{N}$ for some constant $C>0$, while the notation $a_n=\widetilde{\mathcal{O}}(b_n)$ indicates that the previous inequality may depend on some logarithmic term.

\begin{figure}[!t]
    \centering
    \includegraphics[scale = .075]{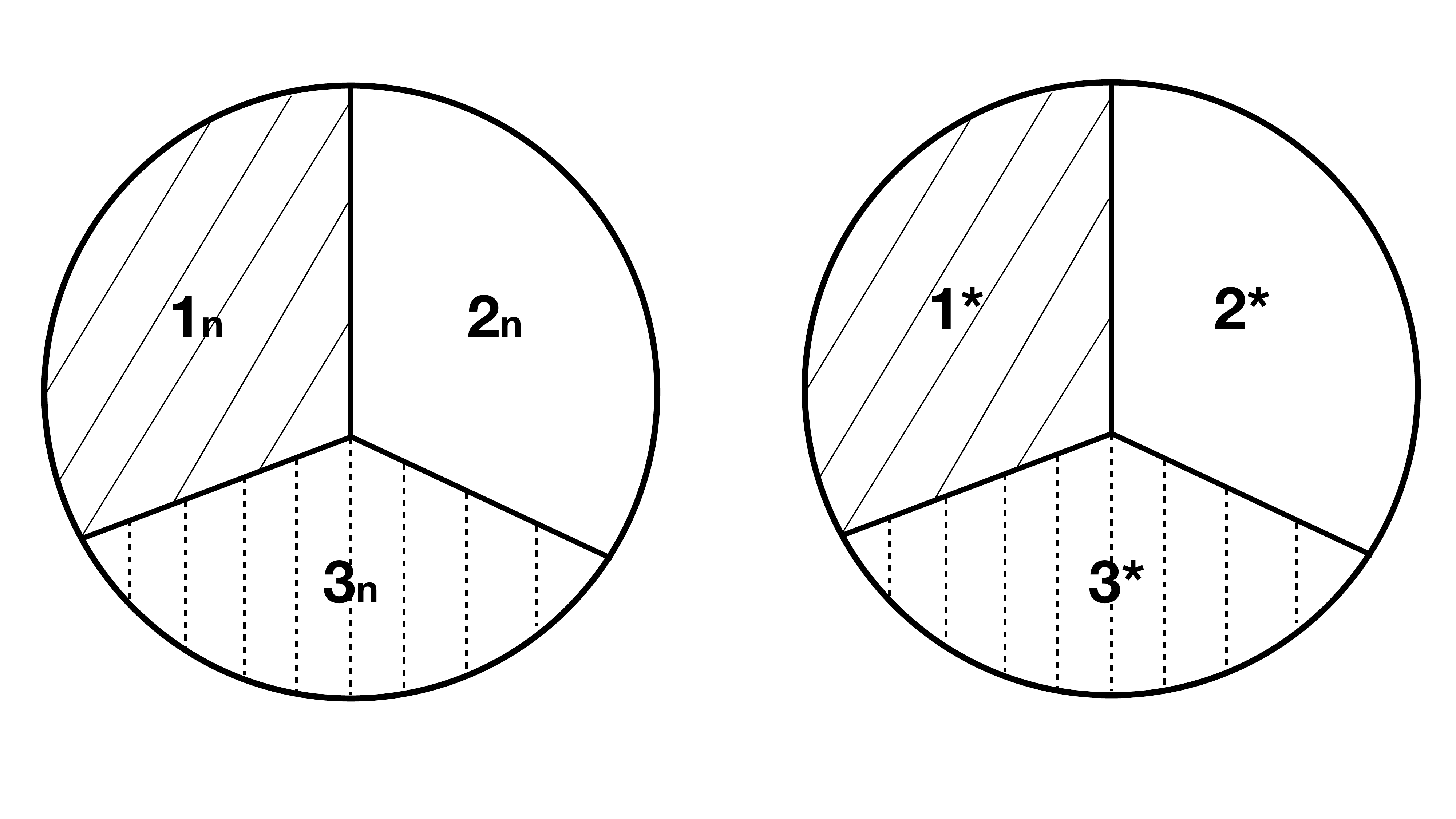}
    \caption{Illustration of two partitions of the input space with respect to the $\topK$ function in the density estimation $g_{\widehat{G}_n}$ \textit{(left)} and the true density $g_{G_*}$ \textit{(right)} under the exact-specified settings when $k_*=3$ and $K=1$. Here, the regions labelled as $\boldsymbol{1_n}$ and $\boldsymbol{1^*}$ contain $X\in\mathcal{X}$ such that $(\widehat{\beta}^n_{11})^{\top}X$ and $(\beta^*_{11})^{\top}X$ are the top-1 elements of $((\widehat{\beta}^n_{1i})^{\top}X)_{i=1}^{3}$ and $((\beta^*_{1i})^{\top}X)_{i=1}^{3}$, respectively. Other regions are defined similarly. Assume that $\widehat{\beta}^n_{1i}\to\beta^*_{1i}$ as $n\to\infty$ for any $i\in\{1,2,3\}$, then the regions $\boldsymbol{1_n},\boldsymbol{2_n},\boldsymbol{3_n}$ should respectively match their counterparts $\boldsymbol{1^*},\boldsymbol{2^*},\boldsymbol{3^*}$ to guarantee the convergence of $g_{\widehat{G}_n}$ to $g_{G_*}$. Lemma~\ref{lemma:partition_exact} reads that this property holds when the sample size $n$ is sufficiently large. \vspace{-0.2cm}}
    \label{fig:input_partition}
\end{figure}

\section{Exact-specified Settings}
\label{sec:exact_specified}
In this section, we characterize respectively the convergence rates of density estimation and parameter estimation in the top-K sparse softmax gating Gaussian MoE under the exact-specified settings, namely when the true number of experts $k_*$ is known.

To begin with, let us introduce some essential notations and key results to deal with the top-K sparse softmax gating function. It can be seen from equation~\eqref{eq:conditional_form} that whether $(a^*_i)^{\top}X+b^*_i$ belongs to the top $K$ experts in the true density $g_{G_*}(Y|X)$ or not is determined based on the ranking of $(\beta^*_{1i})^{\top}X+\beta^*_{0i}$, which depends on the input $X$. Additionally, it is also worth noting that there are a total of $q:=\binom{k_*}{K}$ different potential choices of top $K$ experts. Thus, we first partition the input space $\mathcal{X}$ into $q$ regions in order to specify the top $K$ experts and obtain an according representation of $g_{G_*}(Y|X)$ in each region. In particular, for each $\ell\in[q]$, we denote $\{\ell_1,\ell_2,\ldots,\ell_K\}$ as an $K$-element subset of $[k_*]$ and $\{\ell_{K+1},\ldots,\ell_{k_*}\}:=[k_*]\setminus\{\ell_1,\ell_2,\ldots,\ell_{K}\}$. Then, the $\ell$-th region of $\mathcal{X}$ is defined as 
\begin{align*}
   \mathcal{X}^*_{\ell}:=\Big\{x\in\mathcal{X}:(\beta^*_{1i})^{\top}x\geq (\beta^*_{1i'})^{\top}x,\forall i\in\{\ell_1,\ldots,\ell_K\},i'\in\{\ell_{K+1},\ldots,\ell_{k_*}\}\Big\},
\end{align*}
for any $\ell\in[q]$. From this definition, we observe that if $X\in\mathcal{X}^{*}_{\ell}$ for some $\ell\in[q]$ such that $\{\ell_1,\ldots,\ell_K\}=[K]$, then it follows that $\topK((\beta^*_{1i})^{\top}X,K;\beta^*_{0i})=(\beta^*_{1i})^{\top}X+\beta^*_{0i}$ for any $i\in[K]$. As a result, 
$(a^*_1)^{\top}X+b^*_1,\ldots,(a^*_K)^{\top}X+b^*_K$ become the top $K$ experts, and the true conditional density $g_{G_*}(Y|X)$ is reduced to:
\begin{align}
    \label{eq:explicit_form}
    g_{G_*}(Y|X)=\sum_{i=1}^{K}\frac{\exp((\beta^*_{1i})^{\top}X+\beta^*_{0i})}{\sum_{j=1}^{K}\exp((\beta^*_{1j})^{\top}X+\beta^*_{0j})}\cdot f(Y|(a^*_i)^{\top}X+b^*_i,\sigma^*_i).
\end{align}
Subsequently, we assume that the MLE $\widehat{G}_n$ takes the form $\widehat{G}_n:=\sum_{i=1}^{k_*}\exp(\widehat{\beta}^n_{0i})\delta_{(\widehat{\beta}^n_{1i},\widehat{a}^n_i,\widehat{b}^n_i,\widehat{\sigma}^n_i)}$, where the MLE's component $\widehat{\omega}^n_i:=(\widehat{\beta}^n_{0i},\widehat{\beta}^n_{1i},\widehat{a}^n_i,\widehat{b}^n_i,\widehat{\sigma}^n_i)$ converges to the true component $\omega^*_i:=(\beta^*_{0i},\beta^*_{1i},a^*_i,b^*_i,\sigma^*_i)$ for any $i\in[k_*]$. 
We figure out in the following lemma a relation between the values of the $\topK$ function in $g_{G_*}(Y|X)$ and $g_{\widehat{G}_n}(Y|X)$: 
\begin{lemma}
    \label{lemma:partition_exact}
    For any $i\in[k_*]$, let $\beta_{1i},\beta^*_{1i}\in\mathbb{R}^{d}$ such that $\|\beta_{1i}-\beta^*_{1i}\|\leq \eta_i$ for some sufficiently small $\eta_i>0$. Then, for any $\ell\in[q]$, unless the set $\mathcal{X}^*_{\ell}$ has measure zero, we obtain that $\mathcal{X}^*_{\ell}= \mathcal{X}_{\ell}$ where
    \begin{align*}
        \mathcal{X}_{\ell}:=\{x\in\mathcal{X}:(\beta_{1i})^{\top}x\geq (\beta_{1i'})^{\top}x, \forall i\in\{\ell_1,\ldots,\ell_K\},i'\in\{\ell_{K+1},\ldots,\ell_{k_*}\}\}.
    \end{align*}
\end{lemma}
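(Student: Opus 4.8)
The plan is to show the two sets $\mathcal{X}^*_{\ell}$ and $\mathcal{X}_{\ell}$ coincide by exploiting the fact that each is a (non-degenerate) polyhedral cone cut out by the hyperplanes $\{x : (\beta_{1i} - \beta_{1i'})^\top x = 0\}$ (resp. with starred parameters), and that these hyperplanes move continuously — hence, for small perturbations, negligibly — as $\beta_{1i} \to \beta^*_{1i}$. The key structural observation is that if $\mathcal{X}^*_{\ell}$ has positive Lebesgue measure, then it has nonempty interior; moreover, for each pair $(i,i')$ with $i \in \{\ell_1,\dots,\ell_K\}$, $i' \in \{\ell_{K+1},\dots,\ell_{k_*}\}$, the \emph{strict} inequality region $\{x \in \mathcal{X} : (\beta^*_{1i})^\top x > (\beta^*_{1i'})^\top x\}$ must also have positive measure (otherwise $\beta^*_{1i} = \beta^*_{1i'}$ on the affine span of the bounded open set $\mathcal{X}$, forcing $\beta^*_{1i} = \beta^*_{1i'}$, and then $\mathcal{X}^*_\ell$ would be contained in a boundary hyperplane, contradicting positive measure). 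This gives a point $x_0$ in the interior of $\mathcal{X}^*_\ell$ at which \emph{all} the defining inequalities are strict, with some uniform margin $\delta > 0$: $(\beta^*_{1i})^\top x_0 - (\beta^*_{1i'})^\top x_0 \ge \delta$ for every relevant pair.

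First I would make the "sufficiently small $\eta_i$" quantitative: since $\mathcal{X}$ is bounded by assumption (U.1), there is $R > 0$ with $\|x\| \le R$ for all $x \in \mathcal{X}$, so $|(\beta_{1i} - \beta^*_{1i})^\top x| \le \eta_i R$. Choosing all $\eta_i < \delta / (4R)$ ensures that at the witness point $x_0$ the perturbed inequalities $(\beta_{1i})^\top x_0 > (\beta_{1i'})^\top x_0$ still hold, so $\mathcal{X}_\ell$ is also nonempty with nonempty interior. Next I would argue set equality by a symmetric argument plus a key wrinkle: the "top-$K$" structure means the $q$ regions $\{\mathcal{X}^*_\ell\}_{\ell \in [q]}$ (and likewise $\{\mathcal{X}_\ell\}$) together cover $\mathcal{X}$ up to a measure-zero set (the ties), and their interiors are pairwise disjoint. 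So it suffices to show $\mathcal{X}_\ell \cap \mathrm{int}(\mathcal{X}^*_{\ell'}) = \emptyset$ for every $\ell' \ne \ell$ with $\mathcal{X}^*_{\ell'}$ of positive measure; combined with the covering property this forces $\mathcal{X}_\ell \subseteq \mathcal{X}^*_\ell$ (up to measure zero / closure), and the reverse inclusion follows by swapping the roles of $\beta_{1i}$ and $\beta^*_{1i}$. To see the disjointness: if $x$ lay in both, then for the index sets to differ there is some $i$ that is "top-$K$ for $\ell$ but not for $\ell'$" and some $i'$ vice versa, giving simultaneously $(\beta_{1i})^\top x \ge (\beta_{1i'})^\top x$ and $(\beta^*_{1i'})^\top x \ge (\beta^*_{1i})^\top x$ with a uniform margin in the starred inequality on the interior; transferring the perturbation bound $\eta_i R$ again yields a contradiction once the $\eta_i$ are small enough relative to the minimal margin over the finitely many region pairs and index pairs.

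The main obstacle I anticipate is handling the degenerate boundary cases cleanly: the raw sets $\mathcal{X}^*_\ell$ and $\mathcal{X}_\ell$ are defined by \emph{non-strict} inequalities, so their topological boundaries — where ties occur — can behave erratically under perturbation (a tie hyperplane can tilt either way), and these are exactly the inputs where the $\topK$ value is genuinely ambiguous. The resolution is to work modulo Lebesgue-null sets throughout: replace each region by the closure of its interior (the set where the corresponding strict inequalities hold), show the strict-inequality regions match exactly via the uniform-margin argument above, and observe that the tie sets have measure zero and hence do not affect the conclusion "$\mathcal{X}^*_\ell = \mathcal{X}_\ell$" in the relevant almost-everywhere sense used downstream in Theorem~\ref{theorem:exact_fitted_density}. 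A secondary subtlety is extracting a single $\delta > 0$ that works uniformly: since $[q]$, $[k_*]$ are finite and only finitely many pairs $(\ell,\ell',i,i')$ arise, I would take $\delta$ to be the minimum over all of them of the relevant positive margin, which is strictly positive precisely because each non-degenerate region contributes a strict-inequality witness; then set $\eta := \delta/(4R)$ and declare this the "sufficiently small" threshold in the lemma statement.
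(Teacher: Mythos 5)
Your route is genuinely different from the paper's, and it has a gap at the step where the difference matters most. The paper's proof does not use a witness point or a covering argument: it asserts that the margin $\min\{(\beta^*_{1i})^{\top}x-(\beta^*_{1i'})^{\top}x\}$, taken over \emph{all} $x\in\mathcal{X}^*_{\ell}$ and all relevant pairs $(i,i')$, equals some $c^*_{\ell}\varepsilon$ with $c^*_{\ell}>0$ (arguing that otherwise $\mathcal{X}^*_{\ell}$ would sit inside a tie hyperplane and be null), and then transfers each inequality pointwise via $\|x\|\le B$ with $\eta_i$ chosen below $c^*_{\ell}\varepsilon/(2B)$, giving $\mathcal{X}^*_{\ell}\subseteq\mathcal{X}_{\ell}$ directly; the reverse inclusion is obtained by symmetry with a constant $c_{\ell}$ for the perturbed region. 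Your margin $\delta$ is established only at a single interior point $x_0$, which is strictly weaker: it shows $\mathcal{X}_{\ell}$ is nonempty but cannot by itself control what happens elsewhere in the region, so everything rests on your disjointness claim $\mathcal{X}_{\ell}\cap\mathrm{int}(\mathcal{X}^*_{\ell'})=\emptyset$ for $\ell'\neq\ell$.

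That disjointness claim is where the argument breaks. It invokes ``a uniform margin in the starred inequality on the interior,'' but no such uniform margin exists: as $x\in\mathrm{int}(\mathcal{X}^*_{\ell'})$ approaches the tie hyperplane $\{(\beta^*_{1i}-\beta^*_{1i'})^{\top}x=0\}$, the margin tends to $0$, and once it falls below $2\eta R$ the perturbation can flip the inequality. Concretely, take $d=2$ with one active constraint, so that $\mathcal{X}^*_{\ell'}=\{x\in\mathcal{X}:x_1\le 0\}$ while the perturbed competitor is $\mathcal{X}_{\ell}=\{x\in\mathcal{X}:x_1+\eta x_2\ge 0\}$: the wedge $\{-\eta x_2\le x_1<0,\ x_2>0\}$ lies in $\mathcal{X}_{\ell}\cap\mathrm{int}(\mathcal{X}^*_{\ell'})$ and has Lebesgue measure of order $\eta$, not zero. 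So the covering argument does not deliver $\mathcal{X}_{\ell}\subseteq\mathcal{X}^*_{\ell}$, and retreating to ``equality modulo null sets'' does not repair it, because for fixed $\eta>0$ the offending sliver is not null. You correctly identified the boundary behaviour as the main obstacle, but the proposed resolution does not close it. To make your strategy work you would need the starred margin to be bounded below on all of $\mathcal{X}^*_{\ell'}$ outside an $\eta$-independent null set --- which is exactly the whole-region claim $c^*_{\ell}>0$ that the paper's proof leans on --- or else you must weaken the target to the pointwise-asymptotic form actually consumed downstream (for a.e.\ fixed $x$, membership in the two regions agrees once the $\eta_i$ are small enough \emph{depending on $x$}), which your witness-point construction does not yet establish.
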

Proof of Lemma~\ref{lemma:partition_exact} is in Appendix~\ref{appendix:partition_exact}. This lemma indicates that for almost surely $X$, $\topK((\beta^*_{1i})^{\top}X,K;\beta^*_{0i})=(\beta^*_{1i})^{\top}X+\beta^*_{0i}$ is equivalent to $\topK((\widehat{\beta}^n_{1i})^{\top}X,K;\widehat{\beta}^n_{0i})=(\widehat{\beta}^n_{1i})^{\top}X+\widehat{\beta}^n_{0i}$, for any $i\in[k_*]$ and sufficiently large $n$. 
Based on this property, we provide in Theorem~\ref{theorem:exact_fitted_density} the rate for estimating the true conditional density function $g_{G_*}$: 

\begin{theorem}[Density estimation rate] Given the MLE $\widehat{G}_n$ defined in equation~\eqref{eq:MLE}, the convergence rate of the conditional density estimation $g_{\widehat{G}_n}$ to the true conditional density $g_{G_*}$ under the exact-specified settings is illustrated by the following inequality:
    \label{theorem:exact_fitted_density}
    \begin{align*}
        \mathbb{P}\Big(\bbE_X[h(g_{\widehat{G}_n}(\cdot|X),g_{G_*}(\cdot|X))]>C\sqrt{\log(n)/n}\Big)\lesssim n^{-c},
    \end{align*}
    where $C>0$ and $c>0$ are some universal constants that depend on $\Omega$ and $K$.
\end{theorem}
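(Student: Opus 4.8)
The plan is to derive the parametric density estimation rate from the classical theory of maximum likelihood estimation for M-estimators, following the framework in \citep{Vandegeer-2000}. The key reduction, which we already have available, is Lemma~\ref{lemma:partition_exact}: for sufficiently large $n$, the partition $\{\mathcal{X}_\ell\}_{\ell\in[q]}$ of the input space induced by the estimated gating parameters $\widehat{\beta}^n_{1i}$ coincides (up to measure-zero sets) with the true partition $\{\mathcal{X}^*_\ell\}_{\ell\in[q]}$. Hence, on each region, $g_{\widehat{G}_n}(Y|X)$ and $g_{G_*}(Y|X)$ admit the same explicit softmax-over-$K$-experts form as in equation~\eqref{eq:explicit_form}, with only the parameter values differing. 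This lets us treat $g_G$ as a smooth parametric family indexed by $G\in\mathcal{E}_{k_*}(\Omega)$ within each region, so the usual bracketing-entropy machinery applies to the whole conditional density.

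The concrete steps I would carry out are as follows. First, define the class of densities $\mathcal{P}_{k_*}(\Omega) := \{g_G(\cdot|\cdot) : G\in\mathcal{E}_{k_*}(\Omega)\}$ and, following \citep{Vandegeer-2000}, the associated class of square-root density ratios (or half-densities) $\overline{\mathcal{P}}_{k_*}(\Omega) := \{ \tfrac{1}{2}(g_G + g_{G_*}) : G\in\mathcal{E}_{k_*}(\Omega)\}$, all viewed as functions of $(X,Y)$ after integrating against the marginal of $X$. Second, bound the bracketing entropy: using the compactness of $\Omega$ (assumption (U.1)), the boundedness of $\mathcal{X}$, and the Lipschitz dependence of $g_G$ on the parameters $(\beta_{0i},\beta_{1i},a_i,b_i,\sigma_i)$ — uniform over the finitely many regions $\mathcal{X}_\ell$ — one shows the bracketing number $N_{[]}(\varepsilon,\overline{\mathcal{P}}_{k_*}(\Omega),\|\cdot\|_{L^2(\mu)})$ grows polynomially, i.e. $\log N_{[]}(\varepsilon,\cdot,\cdot)\lesssim \log(1/\varepsilon)$, where the implicit constant depends on $\Omega$ and $K$. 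One subtlety is the Gaussian tails in $Y$ and the variance parameter bounded away from $0$ (again by compactness of $\Omega$), which are handled in the standard way by an envelope/tail argument to get genuine $L^2$ brackets. Third, the bracketing integral $J_{[]}(\delta) := \int_0^\delta \sqrt{\log N_{[]}(u,\overline{\mathcal{P}}_{k_*}(\Omega),\|\cdot\|_{L^2(\mu)})}\,\dint u \lesssim \delta\sqrt{\log(1/\delta)}$ satisfies the entropy condition required by Theorem~7.4 (and the surrounding results) of \citep{Vandegeer-2000}, which yields that with $\mathbb{P}$-probability at least $1 - c_1\exp(-c_2 n \delta_n^2)$ one has $\bbE_X[h(g_{\widehat{G}_n}(\cdot|X),g_{G_*}(\cdot|X))] \lesssim \delta_n$ for $\delta_n \asymp \sqrt{\log(n)/n}$. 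Converting $\exp(-c_2 n\delta_n^2)$ with $\delta_n^2 \asymp \log(n)/n$ into a polynomially small probability $n^{-c}$ gives exactly the stated inequality.

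There is one gap to close before the entropy argument can be invoked globally: Lemma~\ref{lemma:partition_exact} only guarantees the partition match on the (high-probability, large-$n$) event where $\|\widehat{\beta}^n_{1i} - \beta^*_{1i}\|\leq\eta_i$ for all $i$. So I would first argue that this event holds with probability at least $1 - n^{-c'}$ — this follows because $\widehat{G}_n$ is consistent (a consequence of identifiability (U.2), compactness, and standard MLE consistency / the fact that $h(g_{\widehat{G}_n},g_{G_*})\to 0$ together with the lower bound $h \gtrsim \mathcal{D}_1$ that will be established in Theorem~\ref{theorem:exact_fitted_MLE}), combined with the density rate itself; this is slightly circular-looking but is resolved in the usual way by first running the M-estimation argument on the unrestricted class $\mathcal{E}_{k_*}(\Omega)$ — the partition-match is \emph{not} needed for the entropy bound, only to get the explicit form~\eqref{eq:explicit_form} that makes the Lipschitz/smoothness estimates clean — and noting the entropy bound for the top-K gated family holds regardless because the $\topK$ operation only ever selects among $\binom{k_*}{K}$ fixed subsets, so $\overline{\mathcal{P}}_{k_*}(\Omega)$ is a finite union of smooth parametric classes.

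The main obstacle I anticipate is the bracketing-entropy computation for the top-K gated family: unlike the ordinary softmax MoE, the map $G\mapsto g_G(Y|X)$ is only piecewise smooth in $X$ — it has kinks across the region boundaries $\{(\beta_{1i}-\beta_{1i'})^\top x = 0\}$, and the boundaries themselves move with the parameters. The resolution is precisely Lemma~\ref{lemma:partition_exact}: once we know the relevant boundaries are fixed (equal to the true ones) on the good event, $g_G$ becomes globally Lipschitz in the parameters on each fixed region $\mathcal{X}^*_\ell$, with a Lipschitz constant uniform over $\Omega$ thanks to (U.1); summing the finitely many regions preserves the polynomial bracketing bound. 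The remaining work — envelopes for the Gaussian tails, converting the deviation bound's exponential tail into the polynomial $n^{-c}$, and tracking the dependence of constants on $\Omega$ and $K$ — is routine and mirrors the arguments in \citep{nguyen2023demystifying,ho2022gaussian}.
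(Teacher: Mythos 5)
Your overall architecture is correct and matches the paper's: bound the bracketing entropy of the class $\mathcal{P}_{k_*}(\Omega)$ by $\log(1/\varepsilon)$, feed it through the bracketing integral, and invoke Theorem~7.4 of \citep{Vandegeer-2000} with $\delta_n \asymp \sqrt{\log(n)/n}$ to get the exponential-tail bound, which becomes $n^{-c}$ after substituting $\delta_n$. The paper does exactly this, with its Lemma~\ref{lemma:covering_bracketing_bound} playing the role of your Step 2 (via an $L^1$ covering number first, then a standard envelope argument to pass to Hellinger bracketing).

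However, your discussion of where the partition--stability device enters is muddled and partly self-contradictory. You raise a ``circularity'' worry (that Lemma~\ref{lemma:partition_exact} only applies on the high-probability event that $\widehat{\beta}^n_{1i}$ is close to $\beta^*_{1i}$, which you'd need to establish first), and then ``resolve'' it by asserting that the bracketing bound holds regardless because the top-$K$ selection ranges over finitely many subsets, ``so $\overline{\mathcal{P}}_{k_*}(\Omega)$ is a finite union of smooth parametric classes.'' This claim is false, and your own last paragraph says so: the region boundaries $\{(\beta_{1i}-\beta_{1i'})^\top x = 0\}$ move with the parameters, so the class is not a finite union of globally smooth families --- the map $G\mapsto g_G$ has kinks in $G$ as the boundaries sweep across $\mathcal{X}$. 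The actual resolution in the paper is simpler than either of your two stories and involves no probabilistic event at all: the bracketing/covering computation is a \emph{deterministic} statement about the function class, and the partition-stability argument of Lemma~\ref{lemma:partition_exact} is applied inside that computation to \emph{pairs of nearby points of the cover} (the paper compares an arbitrary $G$ to its closest grid point $\overline{G}$ and shows that the regions induced by their respective $\beta_1$-parameters coincide up to measure zero, after which the softmax is Lipschitz on each matched region). No application of Lemma~\ref{lemma:partition_exact} to $\widehat{G}_n$ versus $G_*$ is ever needed for the density rate --- that only matters later, in the parameter-estimation theorems. So the ``good event'' framing and the need to first prove consistency of $\widehat{G}_n$ are red herrings; drop them, and instead state and use the deterministic region-matching for close pairs within the covering argument, which is what actually makes the Lipschitz estimate clean across the kinks.
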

Proof of Theorem~\ref{theorem:exact_fitted_density} can be found in Appendix~\ref{appendix:exact_fitted_density}. It follows from the result of Theorem~\ref{theorem:exact_fitted_density} that the conditional density estimation $g_{\widehat{G}_n}$ converges to its true counterpart $g_{G_*}$ under the Hellinger distance at the parametric rate of order $\widetilde{\mathcal{O}}(n^{-1/2})$. This rate plays a critical role in establishing the convergence rates of parameter estimation. In particular, if we are able to derive the following lower bound: $\mathbb{E}_X[h(g_{G}(\cdot|X),g_{G_*}(\cdot|X))]\gtrsim\mathcal{D}_1(G,G_*)$ for any mixing measure $G\in\mathcal{E}_{k_*}(\Omega)$, where the metric $\mathcal{D}_1$ will be defined in equation~\eqref{eq:exact_Voronoi_loss}, we will achieve our desired parameter estimation rates. Before going into further details, let us introduce the formulation of Voronoi metric $\mathcal{D}_1$ that we use for our convergence analysis under the exact-specified settings.

\textbf{Voronoi metric.} Given an arbitrary mixing measure $G$ with $k'$ components, we distribute those components to the following Voronoi cells generated by the components of $G_*$ \cite{manole22a}:
\begin{align}
    \label{eq:Voronoi_cells}
    \mathcal{C}_j\equiv\mathcal{C}_j(G):=\{i\in[k']:\|\theta_i-\theta^*_j\|\leq\|\theta_i-\theta^*_{j'}\|,\forall j'\neq j\},
\end{align}
where $\theta_i:=(\beta_{1i},a_i,b_i,\sigma_i)$ and $\theta^*_j:=(\beta^*_{1j},a^*_j,b^*_j,\sigma^*_j)$ for any $j\in[k_*]$. Recall that under the exact-specified settings, the MLE $\widehat{G}_n$ belongs to the set $\mathcal{E}_{k_*}(\Omega)$. Therefore, we consider $k'=k_*$ in this case. Then, the Voronoi metric $\mathcal{D}_1$ is defined as follows: 
\begin{align}
    \label{eq:exact_Voronoi_loss}
    \mathcal{D}_1(G,G_*):=\max_{\{\ell_j\}_{j=1}^{K}\subset[k_*]}~\sum_{j=1}^{K}\Bigg[\sum_{i\in\mathcal{C}_{\ell_j}}\exp(\beta_{0i})\Big(&\|\Delta\beta_{1i\ell_j}\|+\|\Delta a_{i\ell_j}\|+\|\Delta b_{i\ell_j}\|+\|\Delta\sigma_{i\ell_j}\|\Big)\nonumber\\ 
    &+\Big|\sum_{i\in\mathcal{C}_{\ell_j}}\exp(\beta_{0i})-\exp(\beta^*_{0\ell_j})\Big|\Bigg].
\end{align}
Here, we denote $\Delta\beta_{1i\ell_i}:=\beta_{1i}-\beta^*_{1\ell_j}$, $\Delta a_{i\ell_j}:=a_i-a^*_{\ell_j}$, $\Delta b_{i\ell_j}:=b_i-b^*_{\ell_j}$ and $\Delta\sigma_{i\ell_j}:=\sigma_i-\sigma^*_{\ell_j}$. Additionally, the above maximum operator helps capture the behaviors of all input regions separated by the top-K sparse softmax gating function. Now, we are ready to characterize the convergence rates of parameter estimation in the top-K sparse softmax gating Gaussian MoE.
\begin{theorem}[Parameter estimation rate]
    \label{theorem:exact_fitted_MLE}
    Under the exact-specified settings, the Hellinger lower bound $\bbE_X[h(g_{G}(\cdot|X),g_{G_*}(\cdot|X))]\gtrsim \mathcal{D}_1(G,G_*)$ holds for any mixing measure $G\in\mathcal{E}_{k_*}(\Omega)$. Consequently, we can find a universal constant $C_1>0$ depending only on $G_*$, $\Omega$ and $K$ such that
    \begin{align*}
        \bbP\Big(\mathcal{D}_1(\widehat{G}_n,G_*)> C_1\sqrt{\log(n)/n}\Big)\lesssim n^{-c_1},
    \end{align*}
    where $c_1>0$ is a universal constant that depends only on $\Omega$.  
\end{theorem}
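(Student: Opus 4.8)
The plan is to prove the theorem in two stages: first establish the local Hellinger lower bound $\bbE_X[h(g_G(\cdot|X),g_{G_*}(\cdot|X))]\gtrsim\mathcal{D}_1(G,G_*)$ for all $G\in\mathcal{E}_{k_*}(\Omega)$, and then combine it with the density convergence rate from Theorem~\ref{theorem:exact_fitted_density} to transfer the rate to $\mathcal{D}_1(\widehat{G}_n,G_*)$. The second stage is routine: by Theorem~\ref{theorem:exact_fitted_density}, with probability at least $1-n^{-c}$ we have $\bbE_X[h(g_{\widehat{G}_n}(\cdot|X),g_{G_*}(\cdot|X))]\le C\sqrt{\log(n)/n}$, and then the lower bound immediately yields $\mathcal{D}_1(\widehat{G}_n,G_*)\le C_1\sqrt{\log(n)/n}$ on the same event. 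So essentially all the work is in the lower bound.

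For the lower bound I would argue by contradiction in the usual way for such Voronoi-metric results. First reduce the global inequality to a local one: suppose it fails, then there is a sequence $G_t\in\mathcal{E}_{k_*}(\Omega)$ with $\bbE_X[h(g_{G_t}(\cdot|X),g_{G_*}(\cdot|X))]/\mathcal{D}_1(G_t,G_*)\to 0$. By compactness of $\Omega$ we may assume $G_t\to G$ for some limiting mixing measure; a separate argument handles the case $G\not\equiv G_*$ (there $\bbE_X[h]$ is bounded below by a constant while $\mathcal{D}_1$ is bounded, contradiction), so we reduce to the case $G_t\to G_*$. Now fix a maximizing index set $\{\ell_j\}_{j=1}^K$ realizing $\mathcal{D}_1(G_t,G_*)$ (passing to a further subsequence so it is constant in $t$), and WLOG relabel so that $\{\ell_1,\dots,\ell_K\}=[K]$. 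The key step is to invoke Lemma~\ref{lemma:partition_exact}: since $\widehat\beta^n_{1i}\to\beta^*_{1i}$ (here $\beta^{t}_{1i}\to\beta^*_{1i}$), for large $t$ the input regions $\mathcal{X}_\ell$ of $G_t$ coincide with $\mathcal{X}^*_\ell$ up to measure zero, so on each region of positive measure $g_{G_t}$ and $g_{G_*}$ are given by the \emph{same} explicit softmax-over-top-$K$ formula~\eqref{eq:explicit_form}. Restricting attention to the region where the top-$K$ experts are exactly $[K]$, the problem becomes a \emph{standard} (non-sparse) softmax gating Gaussian MoE density-discrepancy analysis over that region, of the kind handled in \cite{nguyen2023demystifying}.

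On that region I would Taylor-expand $g_{G_t}(Y|X)-g_{G_*}(Y|X)$ to first order around the true parameters, grouping the components $i\in\mathcal{C}_j$ that collapse onto $\theta^*_j$. The expansion produces a linear combination of the functions $X^{(\nu)}f(Y|\mu^*_j,\sigma^*_j)$, $X^{(\nu)}\partial_\mu f$, $X^{(\nu)}\partial_b f$, $X^{(\nu)}\partial_\sigma f$ (for $|\nu|\le 1$), whose coefficients are, up to constants, exactly the atomic terms appearing in $\mathcal{D}_1$: the discrepancies $\sum_{i\in\mathcal{C}_j}\exp(\beta_{0i})-\exp(\beta^*_{0j})$ and the weighted parameter gaps $\exp(\beta_{0i})(\Delta\beta_{1ij},\Delta a_{ij},\Delta b_{ij},\Delta\sigma_{ij})$, all divided by $\mathcal{D}_1(G_t,G_*)$, which stay bounded and do not all vanish. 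One then shows these functions are linearly independent in $(X,Y)$ — using the universal assumptions, in particular (U.2) identifiability and (U.3) distinct experts, which guarantee the Gaussian densities with distinct means/variances and their derivatives are linearly independent, and (U.4) so the polynomial-in-$X$ factors are genuinely present — and that $\bbE_X[h]\gtrsim V\gtrsim$ (sum of absolute values of normalized coefficients). Letting $t\to\infty$, Fatou / the linear-independence argument forces all normalized coefficients to zero, contradicting that their sum is $1$. The main obstacle is making the region-restriction step rigorous: one must check that $\bbE_X[h(g_{G_t},g_{G_*})]$ restricted to the fixed region $\mathcal{X}^*_{[K]}$ still dominates the corresponding piece of $\mathcal{D}_1$ (which requires $\mathcal{X}^*_{[K]}$ to have positive measure, handled by the "unless measure zero" clause and the $\max$ over index sets), and that the top-$K$ structure does not introduce cross-region cancellations — precisely what Lemma~\ref{lemma:partition_exact} is designed to rule out.
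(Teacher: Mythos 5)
Your proposal follows essentially the same route as the paper's proof: split the Hellinger/TV lower bound into a local part (sequence $G_t\to G_*$, Taylor-expand the density discrepancy, linear independence plus Fatou to force normalized coefficients to zero) and a global part (compactness plus identifiability), with Lemma~\ref{lemma:partition_exact} used exactly where you say — to match the top-$K$ partitions of the input space so that on each positive-measure region both densities reduce to the same explicit softmax formula~\eqref{eq:explicit_form}, after which the analysis mirrors the dense softmax gating case. Two small notes: in the exact-specified setting each Voronoi cell $\mathcal{C}_j$ is a singleton once $\mathcal{D}_1(G_t,G_*)\to 0$, so there is no genuine ``grouping of collapsing components'' (that phenomenon belongs to the over-specified case); and the paper's basis functions are $X^{\eta_1}\exp((\beta^*_{1i})^\top X)\,\partial^{\eta_2}f/\partial h_1^{\eta_2}$ with $\partial f/\partial\sigma$ rewritten as $\tfrac12\partial^2 f/\partial h_1^2$ via the Gaussian heat-equation identity, rather than treating $\partial_\mu f$, $\partial_b f$, $\partial_\sigma f$ as separate objects — your list is redundant since $\partial_\mu f=\partial_b f$ — but this does not affect the argument at first order.
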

Proof of Theorem~\ref{theorem:exact_fitted_MLE} is in Appendix~\ref{appendix:exact_fitted_MLE}. This theorem reveals that the Voronoi metric $\mathcal{D}_1$ between the MLE $\widehat{G}_n$ and the true mixing measure $G_*$, i.e. $\mathcal{D}_1(\widehat{G}_n,G_*)$, vanishes at the parametric rate of order $\widetilde{\mathcal{O}}(n^{-1/2})$. As a result, the rates for estimating ground-truth parameters $\exp(\beta^*_{0i}),\beta^*_{1i},a^*_i,b^*_i,\sigma^*_i$ are optimal at $\widetilde{\mathcal{O}}(n^{-1/2})$ for any $i\in[k_*]$.
\section{Over-specified Settings}
\label{sec:over_specified}
In this section, we continue to carry out the same convergence analysis for the top-K sparse softmax gating Gaussian MoE as in Section~\ref{sec:exact_specified} but under the over-specificed settings, that is, when the true number of experts $k_*$ becomes unknown.

Recall that under the over-specified settings, we look for the MLE $\widehat{G}_n$ within $\mathcal{O}_k(\Omega)$, i.e. the set of all mixing measures with at most $k$ components, where $k>k_*$. Thus, there could be some components $(\beta^*_{1i},a^*_i,b^*_i,\sigma^*_i)$ of the true mixing measure $G_*$ approximated by at least two fitted components $(\widehat{\beta}^n_{1i},\widehat{a}^n_i,\widehat{b}^n_i,\widehat{\sigma}^n_i)$ of the MLE $\widehat{G}_n$. Moreover, since the true density $g_{G_*}(Y|X)$ is associated with top $K$ experts and each of which corresponds to one component of $G_*$, we need to select more than $K$ experts in the formulation of density estimation $g_{\widehat{G}_n}(Y|X)$ to guarantee its convergence to $g_{G_*}(Y|X)$. In particular, for any mixing measure $G=\sum_{i=1}^{k'}\exp(\beta_{0i})\delta_{(\beta_{1i},a_i,b_i,\sigma_i)}\in\mathcal{O}_k(\Omega)$, let us consider a new formulation of conditional density used for estimating the true density under the over-specified settings as follows:
\begin{align*}
    \overline{g}_{G}(Y|X):=\sum_{i=1}^{k'}\softmax(\topK((\beta_{1i})^{\top}X,\overline{K};\beta_{0i}))\cdot f(Y|(a_i)^{\top}X+b_i,\sigma_i).
\end{align*}
Here, $\overline{g}_{G}(Y|X)$ is equipped with top-$\overline{K}$ sparse softmax gating, where $1\leq\overline{K}\leq k'$ is fixed and might be different from $K$. Additionally, the definition of MLE in equation~\eqref{eq:MLE} also changes accordingly to this new density function.
Then, we demonstrate in Proposition~\ref{prop:K_bar_bound} an interesting phenomenon that the density estimation $\overline{g}_{\widehat{G}_n}$ converges to $g_{G_*}$ under the Hellinger distance only if $\overline{K}\geq\max_{\{\ell_1,\ldots,\ell_K\}\subset[k_*]}\sum_{j=1}^{K}|\mathcal{C}_{\ell_j}|$, where $\mathcal{C}_{\ell_j}$ is a Voronoi cell defined in equation~\eqref{eq:Voronoi_cells}. 
\begin{proposition}
    \label{prop:K_bar_bound}
    If $\overline{K}<\max_{\{\ell_1,\ldots,\ell_K\}\subset[k_*]}\sum_{j=1}^{K}|\mathcal{C}_{\ell_j}|$, then the following inequality holds true:
    \begin{align*}
        \inf_{G\in\mathcal{O}_k(\Omega)}\mathbb{E}_X[h(\overline{g}_{G}(\cdot|X),g_{G_*}(\cdot|X))]>0.
    \end{align*}
\end{proposition}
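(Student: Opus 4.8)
The plan is to argue by contradiction: suppose $\overline{K}<\max_{\{\ell_1,\ldots,\ell_K\}\subset[k_*]}\sum_{j=1}^{K}|\mathcal{C}_{\ell_j}|$ yet $\inf_{G\in\mathcal{O}_k(\Omega)}\mathbb{E}_X[h(\overline{g}_{G}(\cdot|X),g_{G_*}(\cdot|X))]=0$. Then there exists a sequence $(G_m)\subset\mathcal{O}_k(\Omega)$ with $\mathbb{E}_X[h(\overline{g}_{G_m}(\cdot|X),g_{G_*}(\cdot|X))]\to 0$. Since $\Omega$ is compact (assumption (U.1)) and each $G_m$ has at most $k$ atoms, by passing to a subsequence we may assume $G_m\to G'$ in the Wasserstein sense for some $G'\in\mathcal{O}_k(\Omega)$, and correspondingly that each atom of $G_m$ converges; this induces a fixed partition of the atom indices into Voronoi-type cells $\mathcal{C}'_1,\ldots,\mathcal{C}'_{k_*}$ around the limiting supports $\theta^*_1,\ldots,\theta^*_{k_*}$, where $|\mathcal{C}'_j|\geq 1$ for each $j$ (so at most $k-k_*$ cells have cardinality larger than one, but more importantly $\sum_j|\mathcal{C}'_j|=k'\le k$, and the cell structure is what forces the combinatorial bound below). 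By Fatou / lower semicontinuity of the Hellinger distance, $\mathbb{E}_X[h(\overline{g}_{G'}(\cdot|X),g_{G_*}(\cdot|X))]=0$, hence $\overline{g}_{G'}(Y|X)=g_{G_*}(Y|X)$ for almost every $(X,Y)$.

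The core of the argument is then to analyze this identity region by region in the input space. First I would establish a partition lemma analogous to Lemma~\ref{lemma:partition_exact}: for the limiting gating parameters $\beta_{1i}$ of $G'$ (which are close to the corresponding $\beta^*_{1\ell_j}$ within their Voronoi cells, after taking $\eta$ small along the sequence), the input regions where a given set of $\overline{K}$ experts of $\overline{g}_{G'}$ is selected must be compatible with the true regions $\mathcal{X}^*_\ell$ where the top $K$ true experts are selected. Fix a true region $\mathcal{X}^*_\ell$ of positive measure with top-$K$ true index set $\{\ell_1,\ldots,\ell_K\}$ achieving the maximum $\sum_{j=1}^K|\mathcal{C}_{\ell_j}|$. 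On $\mathcal{X}^*_\ell$, the true density $g_{G_*}(Y|X)$ is the explicit $K$-term mixture in equation~\eqref{eq:explicit_form}, with experts $(a^*_{\ell_j})^\top X+b^*_{\ell_j}$ for $j\in[K]$. For $\overline{g}_{G'}(Y|X)$ to match this on a positive-measure subset, every fitted atom whose expert parameters converge to one of $\theta^*_{\ell_1},\ldots,\theta^*_{\ell_K}$ — that is, every index in $\bigcup_{j=1}^K\mathcal{C}_{\ell_j}$ — must be among the $\overline{K}$ selected experts on (a positive-measure subset of) that region, because its gating parameter $\beta_{1i}$ is arbitrarily close to $\beta^*_{1\ell_j}$, which is itself a top-$K$ gating value on $\mathcal{X}^*_\ell$; by continuity of the ranking (Lemma~\ref{lemma:partition_exact}-type argument) these fitted experts cannot be ranked out. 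Since the fitted Gaussian experts indexed by $\bigcup_{j=1}^K\mathcal{C}_{\ell_j}$ are (for large $m$, hence in the limit) associated with at least $\sum_{j=1}^K|\mathcal{C}_{\ell_j}|$ distinct location-scale pairs clustered near the $K$ true ones — and one can argue they contribute linearly independent Gaussian components or at least that their total multiplicity cannot collapse below this count without violating the matching — we need $\overline{K}\geq\sum_{j=1}^K|\mathcal{C}_{\ell_j}|$, contradicting the assumption.

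Making the "cannot be ranked out" and "cannot collapse" steps rigorous is the main obstacle. For the ranking step, the subtlety is that along the sequence the fitted $\beta_{1i}$ for $i\in\mathcal{C}_{\ell_j}$ need not all converge to $\beta^*_{1\ell_j}$ — only the concatenated parameter $\theta_i$ does, and the $\beta_{1i}$ component of $\theta_i$ does converge to $\beta^*_{1\ell_j}$, so on the interior of $\mathcal{X}^*_\ell$ (where the top-$K$ inequalities of the true gating are strict) all these fitted gating values eventually exceed the non-top-$K$ true gating values; this gives that each such index is selected, provided $\overline{K}$ is large enough to accommodate all of them simultaneously — which is exactly the contrapositive we want. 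The second subtlety is ruling out that two fitted atoms in the same cell $\mathcal{C}_{\ell_j}$, both converging to $\theta^*_{\ell_j}$, could "share" a single slot: but $\overline{g}_G$ selects experts by index, not by value, so each contributes its own softmax term, and in the matched limit the sum of their (nonnegative, converging-to-positive-total) gating weights must reproduce the single true weight $\softmax$ term for $\ell_j$ on $\mathcal{X}^*_\ell$ — consistent only if all of them are selected. I would formalize this by writing the difference $\overline{g}_{G'}(Y|X)-g_{G_*}(Y|X)$ on $\mathcal{X}^*_\ell$ as a linear combination over $Y$ of the distinct Gaussian densities $f(Y|\cdot,\cdot)$ appearing among both sides, invoking their linear independence (as in the density-decomposition step flagged in the high-level proof ideas) to conclude that the total gating mass attached to each true expert location must come from selected fitted atoms, forcing $\overline{K}\ge\sum_{j=1}^K|\mathcal{C}_{\ell_j}|$ and completing the contradiction.
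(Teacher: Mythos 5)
Your overall strategy coincides with the paper's: argue by contradiction, use Fatou's lemma to upgrade convergence in expectation to pointwise convergence of the conditional densities, and then observe that on a positive-measure true region the top-$\overline{K}$ selection would have to contain every index of $\mathcal{C}_{\ell_1}\cup\cdots\cup\mathcal{C}_{\ell_K}$, which is impossible by cardinality when $\overline{K}<\sum_{j}|\mathcal{C}_{\ell_j}|$. That combinatorial core is exactly the paper's argument, and your discussion of why an unselected atom in one of these cells forces a mismatch (linear independence of the distinct Gaussian components, so the gating mass attached to each true expert cannot be reproduced) is the right mechanism.

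Two points in the execution need repair. First, you pass to a limit $G'$ and then reason about the identity $\overline{g}_{G'}(Y|X)=g_{G_*}(Y|X)$. Fatou does not give you this: it gives $\overline{g}_{G_m}(Y|X)\to g_{G_*}(Y|X)$ pointwise along a subsequence, and the map $G\mapsto\overline{g}_{G}$ is not continuous where atoms coalesce or gating values tie, so $\lim_m\overline{g}_{G_m}$ need not equal $\overline{g}_{G'}$. In the interesting case --- two atoms of $G_m$ converging to the same $\theta^*_j$ with one of them never selected --- the two objects genuinely differ, and that discrepancy is the content of the proposition; the paper therefore runs the whole argument along the sequence $G_n$ and never forms $\overline{g}_{G'}$. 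Your later ``for large $m$'' phrasing suggests you intend the sequence-level argument, but the proof must be stated that way throughout. Second, you assert from compactness alone that the atoms of $G_m$ cluster around the true supports $\theta^*_1,\ldots,\theta^*_{k_*}$ with every cell nonempty; compactness only gives convergence to some points of $\Omega$. The paper obtains this by splitting into a local regime $\mathcal{D}_2(G,G_*)\leq\varepsilon$, where closeness to the true atoms is part of the hypothesis, and a global regime $\mathcal{D}_2(G,G_*)>\varepsilon$, dispatched by compactness together with Proposition~\ref{prop:identifiable}; in your one-pass version the identifiability step has to be inserted before the Voronoi-cell bookkeeping can start. Finally, one ingredient you leave implicit is essential: an excluded atom $i_0\in\mathcal{C}_{\ell_{j_0}}$ only produces a density mismatch because $\exp(\beta_{0i_0})$ is bounded away from zero by compactness of $\Omega$; without that lower bound its omission could be harmless and the pigeonhole step would prove nothing.
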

\vspace{-0.2cm}
Proof of Proposition~\ref{prop:K_bar_bound} is deferred to Appendix~\ref{appendix:K_bar_bound}. Following from the result of Proposition~\ref{prop:K_bar_bound}, we will consider only the regime when $\max_{\{\ell_j\}_{j=1}^{K}\subset[k_*]}\sum_{j=1}^{K}|\mathcal{C}_{\ell_j}|\leq \overline{K}\leq k$ in the rest of this section to ensure the convergence of density estimation. As the number of experts chosen in the density estimation changes from $K$ to $\overline{K}$, it is necessary to partition the input space $\mathcal{X}$ into $\overline{q}:=\binom{k}{\overline{K}}$ regions. More specifically, for any $\overline{\ell}\in[\overline{q}]$, we denote $\{\overline{\ell}_1,\overline{\ell}_2,\ldots,\overline{\ell}_{\overline{K}}\}$ as an $\overline{K}$-element subset of $[k]$ and $\{\overline{\ell}_{\overline{K}+1},\ldots,\overline{\ell}_{k}\}:=[k]\setminus\{\overline{\ell}_1,\overline{\ell}_2,\ldots,\overline{\ell}_{\overline{K}}\}$. Then, we define the $\overline{\ell}$-th region of $\mathcal{X}$ as follows:
\begin{align*}
    \overline{\mathcal{X}}_{\overline{\ell}}:=\{x\in\mathcal{X}:(\beta_{1i})^{\top}x\geq (\beta_{1i'})^{\top}x, \forall i\in\{\overline{\ell}_1,\ldots,\overline{\ell}_{\overline{K}}\},i'\in\{\overline{\ell}_{\overline{K}+1},\ldots,\overline{\ell}_{k}\}\}.
\end{align*}
Inspired by the result of Lemma~\ref{lemma:partition_exact}, we continue to present in Lemma~\ref{lemma:partition_over} a relation between the values of the $\topK$ functions in the density estimation $\overline{g}_{\widehat{G}_n}$ and the true density $g_{G_*}$.
\begin{lemma}
    \label{lemma:partition_over}
    For any $j\in[k_*]$ and $i\in\mathcal{C}_j$, let $\beta_{1i},\beta^*_{1j}\in\mathbb{R}^{d}$ that satisfy $\|\beta_{1i}-\beta^*_{1j}\|\leq \eta_j$ for some sufficiently small $\eta_j>0$. Additionally, for $\max_{\{\ell_j\}_{j=1}^{K}\subset[k_*]}\sum_{j=1}^{K}|\mathcal{C}_{\ell_j}|\leq\overline{K}\leq k$, we assume that there exist $\ell\in[q]$ and $\overline{\ell}\in[\overline{q}]$ such that $\{\overline{\ell}_1,\ldots,\overline{\ell}_{\overline{K}}\}=\mathcal{C}_{\ell_1}\cup\ldots\cup\mathcal{C}_{\ell_K}$. Then, if the set $\mathcal{X}^*_{\ell}$ does not have measure zero, we achieve that $\mathcal{X}^*_{\ell}=\overline{\mathcal{X}}_{\overline{\ell}}$.
\end{lemma}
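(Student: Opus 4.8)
The plan is to deduce Lemma~\ref{lemma:partition_over} from its exact-specified analogue, Lemma~\ref{lemma:partition_exact}, by using the grouping hypothesis $\{\overline{\ell}_1,\ldots,\overline{\ell}_{\overline{K}}\}=\mathcal{C}_{\ell_1}\cup\cdots\cup\mathcal{C}_{\ell_K}$ to match, one-by-one, the linear constraints that carve out $\overline{\mathcal{X}}_{\overline{\ell}}$ with those that carve out $\mathcal{X}^*_\ell$. The first step is to extract the structural consequences of that hypothesis. Since the Voronoi cells $\mathcal{C}_1,\ldots,\mathcal{C}_{k_*}$ partition the index set of $G$, and are all nonempty in the regime of interest (otherwise $\widehat{G}_n$ could not approximate every atom of $G_*$), the hypothesis forces $\overline{K}=\sum_{j=1}^{K}|\mathcal{C}_{\ell_j}|$, so that $\{\ell_1,\ldots,\ell_K\}$ is in fact a maximizer of $\sum_j|\mathcal{C}_{\ell_j}|$, and, on taking complements, $\{\overline{\ell}_{\overline{K}+1},\ldots,\overline{\ell}_{k}\}=\mathcal{C}_{\ell_{K+1}}\cup\cdots\cup\mathcal{C}_{\ell_{k_*}}$. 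The crucial consequence is that \emph{no Voronoi cell is split across the top/complement partition of $\overline{\ell}$}: for $j\le K$ the whole cell $\mathcal{C}_{\ell_j}$ sits in the top-$\overline{K}$ block, for $j'>K$ the whole cell $\mathcal{C}_{\ell_{j'}}$ sits in its complement, and every index $i$ in the top block satisfies $\|\beta_{1i}-\beta^*_{1\ell_j}\|\le\eta_{\ell_j}$ for the unique $j\le K$ with $i\in\mathcal{C}_{\ell_j}$, and symmetrically for the complement.

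Next I would write both regions as intersections of half-spaces through the origin, restricted to $\mathcal{X}$:
\begin{align*}
\mathcal{X}^*_\ell&=\bigcap_{j\le K}\ \bigcap_{j'>K}\big\{x\in\mathcal{X}:(\beta^*_{1\ell_j}-\beta^*_{1\ell_{j'}})^{\top}x\ge 0\big\},\\
\overline{\mathcal{X}}_{\overline{\ell}}&=\bigcap_{j\le K}\ \bigcap_{j'>K}\ \bigcap_{i\in\mathcal{C}_{\ell_j},\, i'\in\mathcal{C}_{\ell_{j'}}}\big\{x\in\mathcal{X}:(\beta_{1i}-\beta_{1i'})^{\top}x\ge 0\big\}.
\end{align*}
By the previous step, $\overline{\mathcal{X}}_{\overline{\ell}}$ contains \emph{no} constraint between two indices lying in the same block — in particular none between two indices of a single cell, whose slope difference has an a priori uncontrolled sign on $\mathcal{X}$ — exactly as $\mathcal{X}^*_\ell$ contains no constraint among $\{\ell_1,\ldots,\ell_K\}$ nor among $\{\ell_{K+1},\ldots,\ell_{k_*}\}$. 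Hence the two constraint families are in bijection: the half-space attached to $(i,i')$ with $i\in\mathcal{C}_{\ell_j}$, $i'\in\mathcal{C}_{\ell_{j'}}$ corresponds to the one attached to $(\ell_j,\ell_{j'})$, and their normals differ in norm by at most $\eta_{\ell_j}+\eta_{\ell_{j'}}$ because $\|(\beta_{1i}-\beta_{1i'})-(\beta^*_{1\ell_j}-\beta^*_{1\ell_{j'}})\|\le\|\beta_{1i}-\beta^*_{1\ell_j}\|+\|\beta_{1i'}-\beta^*_{1\ell_{j'}}\|$.

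I would then invoke the perturbation argument behind Lemma~\ref{lemma:partition_exact}: for all sufficiently small $\eta_1,\ldots,\eta_{k_*}$, provided $\mathcal{X}^*_\ell$ is not of measure zero, each perturbed half-space $\{x\in\mathcal{X}:(\beta_{1i}-\beta_{1i'})^{\top}x\ge 0\}$ coincides with its true counterpart $\{x\in\mathcal{X}:(\beta^*_{1\ell_j}-\beta^*_{1\ell_{j'}})^{\top}x\ge 0\}$; intersecting over the matched families yields $\overline{\mathcal{X}}_{\overline{\ell}}=\mathcal{X}^*_\ell$, and when $\mathcal{X}^*_\ell$ is null there is nothing to prove. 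A couple of routine checks remain: that any "extra" indices beyond the number of atoms of $G$ (when $G$ has fewer than $k$ components) contribute only vacuous constraints and may be discarded, and that degenerate pairs with $\beta^*_{1\ell_j}=\beta^*_{1\ell_{j'}}$ are treated just as in Lemma~\ref{lemma:partition_exact}.

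The main obstacle, as in the exact case, is this last step — transferring the \emph{exact} set equality out of the perturbation argument rather than merely an approximate one — now combined with the bookkeeping ensuring that within-cell slope differences never become binding constraints. That is precisely what the grouping hypothesis buys: forcing each Voronoi cell entirely into one block of $\overline{\ell}$ is exactly what makes the constraint family of $\overline{\mathcal{X}}_{\overline{\ell}}$ a faithful copy of that of $\mathcal{X}^*_\ell$, so that the argument of Lemma~\ref{lemma:partition_exact} applies verbatim. Checking that the assumption on $\overline{K}$ enters the proof only through this grouping, and nowhere else, is the bulk of the remaining work.
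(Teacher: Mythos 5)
Your structural observation is correct and matches the paper's: the grouping hypothesis $\{\overline{\ell}_1,\ldots,\overline{\ell}_{\overline{K}}\}=\mathcal{C}_{\ell_1}\cup\cdots\cup\mathcal{C}_{\ell_K}$ forces each Voronoi cell entirely into one side of the $\overline{\ell}$ split, so the constraints carving out $\overline{\mathcal{X}}_{\overline\ell}$ pair off with those of $\mathcal{X}^*_\ell$, with normals differing by at most $\eta_{\ell_j}+\eta_{\ell_{j'}}$. This is precisely what the paper exploits (implicitly, through the choice of $j$ with $i\in\mathcal{C}_j$ and $j'$ with $i'\in\mathcal{C}_{j'}$ in its chain of inequalities), so the high-level approach is the same.

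However, the step "each perturbed half-space $\{x\in\mathcal{X}:(\beta_{1i}-\beta_{1i'})^{\top}x\ge 0\}$ coincides with its true counterpart $\{x\in\mathcal{X}:(\beta^*_{1\ell_j}-\beta^*_{1\ell_{j'}})^{\top}x\ge 0\}$" is false, and this is not what the perturbation argument of Lemma~\ref{lemma:partition_exact} establishes. Both half-spaces pass through the origin, so any nonzero perturbation of the normal rotates the bounding hyperplane, and the two restricted half-spaces then differ on a set of positive measure no matter how small $\eta_j$ is. The actual mechanism is a \emph{uniform slack} bound on the intersection, not half-space-by-half-space equality: one writes $\eta_j=M_j\varepsilon$, defines $c^*_\ell\varepsilon:=\min\big[(\beta^*_{1j})^{\top}x-(\beta^*_{1j'})^{\top}x\big]$ with the minimum over $x\in\mathcal{X}^*_\ell$, $j\in\{\ell_1,\ldots,\ell_K\}$, $j'\in\{\ell_{K+1},\ldots,\ell_{k_*}\}$, argues $c^*_\ell>0$ since $\mathcal{X}^*_\ell$ is non-null, and then for $x\in\mathcal{X}^*_\ell$ bounds $\beta_{1i}^{\top}x-\beta_{1i'}^{\top}x\ge c^*_\ell\varepsilon-2M_j\varepsilon B\ge 0$ by the triangle inequality whenever $M_j\le c^*_\ell/(2B)$; this gives $\mathcal{X}^*_\ell\subseteq\overline{\mathcal{X}}_{\overline\ell}$ directly, and the reverse inclusion follows symmetrically with the analogous constant $c_\ell$. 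Your bijection of constraint pairs and the normal-perturbation bound are exactly the ingredients that feed into this chain; the fix is simply to drop the pointwise half-space claim and carry out the uniform slack argument on the intersection, as in the paper's proofs of Lemmas~\ref{lemma:partition_exact} and~\ref{lemma:partition_over}.
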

Proof of Lemma~\ref{lemma:partition_over} is in Appendix~\ref{appendix:partition_over}.  Different from Lemma~\ref{lemma:partition_exact}, we need to impose on Lemma~\ref{lemma:partition_over} an assumption that there exist indices $\ell\in[q]$ and $\overline{\ell}\in[\overline{q}]$ that satisfy $\{\overline{\ell}_1,\ldots,\overline{\ell}_{\overline{K}}\}=\mathcal{C}_{\ell_1}\cup\ldots\cup\mathcal{C}_{\ell_K}$. This assumption means that each component $(\widehat{\beta}^n_{1i},\widehat{a}^n_i,\widehat{b}^n_i,\widehat{\sigma}^n_i)$ corresponding to the top $\overline{K}$ experts in $g_{\widehat{G}_n}$ must converge to some true component which corresponds to the top $K$ experts in $g_{G_*}$. Consequently, for $X\in\mathcal{X}^*_{\ell}$, if $\topK((\beta^*_{1j})^{\top}X,K;\beta^*_{0j})=(\beta^*_{1j})^{\top}X+\beta^*_{0j}$ holds true, then we achieve that $\topK((\widehat{\beta}^n_{1i})^{\top}X,\overline{K};\widehat{\beta}^n_{0i})=(\widehat{\beta}^n_{1i})^{\top}X+\widehat{\beta}^n_{0i}$ and vice versa, for any $j\in[k_*]$ and $i\in\mathcal{C}_j$. Given the result of Lemma~\ref{lemma:partition_over}, we are now able to derive the convergence rate of the density estimation $\overline{g}_{\widehat{G}_n}$ to its true counterpart $g_{G_*}$ under the over-specified settings in Theorem~\ref{theorem:over_fitted_density}.

\begin{theorem}[Density estimation rate]
    \label{theorem:over_fitted_density}
        Under the over-specified settings, the conditional density estimation $\overline{g}_{\widehat{G}_n}$ converges to the true density $g_{G_*}$ under the Hellinger distance at the following rate:
    \begin{align*}
        \bbP\Big(\bbE_X[h(\overline{g}_{\widehat{G}_n}(\cdot|X),g_{G_*}(\cdot|X))]>C'\sqrt{\log(n)/n}\Big)\lesssim n^{-c'},
    \end{align*}
    where $C'>0$ and $c'>0$ are some universal constants that depend on $\Omega$ and $K$.
\end{theorem}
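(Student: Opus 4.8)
\textbf{Proof proposal for Theorem~\ref{theorem:over_fitted_density}.}

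The plan is to cast the problem as a standard density estimation rate for an M-estimator and invoke the machinery of \cite{Vandegeer-2000}, exactly as was done for Theorem~\ref{theorem:exact_fitted_density}, but now with the top-$\overline{K}$ gating and the over-specified mixing class $\mathcal{O}_k(\Omega)$. First I would fix the region-matching: by Lemma~\ref{lemma:partition_over}, under the convention $\max_{\{\ell_j\}\subset[k_*]}\sum_{j=1}^K|\mathcal{C}_{\ell_j}|\le \overline{K}\le k$, for any $G\in\mathcal{O}_k(\Omega)$ whose components lie in small enough Voronoi neighborhoods of the true components, each region $\overline{\mathcal{X}}_{\overline{\ell}}$ coincides (up to a measure-zero set) with some $\mathcal{X}^*_\ell$. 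Thus on each such region $\overline{g}_G(Y|X)$ reduces to an explicit softmax-over-$\overline{K}$-experts form analogous to equation~\eqref{eq:explicit_form}, and the integrand $\log \overline{g}_G(Y|X)$ is a well-behaved (piecewise, but with matching pieces) function of $X$ whose pieces are stable as $G\to G_*$.

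Next I would construct the sieve and bound its bracketing/Hellinger entropy. Let $\overline{\mathcal{G}}_\delta:=\{\overline{g}_G^{1/2}: G\in\mathcal{O}_k(\Omega),\ \mathbb{E}_X[h(\overline{g}_G(\cdot|X),g_{G_*}(\cdot|X))]\le\delta\}$. Using the compactness of $\Omega$ (assumption (U.1)), the boundedness of $\mathcal{X}$, the Lipschitz/analytic dependence of $\softmax$, $\topK$ (on each region) and the Gaussian density $f$ on the parameters, and the fact that $\mathcal{O}_k(\Omega)$ is parametrized by finitely many (at most $k(2d+3)$) real parameters, I would show
\begin{align*}
    \log N_{[\,]}(\varepsilon,\overline{\mathcal{G}}_\delta,\|\cdot\|)\lesssim \log(1/\varepsilon),
\end{align*}
where $\|\cdot\|$ is the $L^2(\nu\otimes P_X)$-type norm used in \cite{Vandegeer-2000}, with constants depending on $\Omega,K,\overline{K},k$. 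The only subtlety relative to the exact-specified case is that the region partition of $\mathcal{X}$ now depends on $G$; but because Lemma~\ref{lemma:partition_over} pins the relevant regions to the fixed sets $\{\mathcal{X}^*_\ell\}$ for all $G$ in a neighborhood of $G_*$, the partition is eventually frozen and the entropy integral
\begin{align*}
    \mathcal{J}_{[\,]}(\delta,\overline{\mathcal{G}}_\delta):=\int_{0}^{\delta}\sqrt{\log N_{[\,]}(\varepsilon,\overline{\mathcal{G}}_\varepsilon,\|\cdot\|)}\,\mathrm{d}\varepsilon\lesssim \delta\sqrt{\log(1/\delta)}
\end{align*}
has the same parametric form as before. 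Plugging this into Theorem 7.4 (or the analogous statement) of \cite{Vandegeer-2000}, solving $\sqrt{n}\,\delta_n^2 \gtrsim \mathcal{J}_{[\,]}(\delta_n,\overline{\mathcal{G}}_{\delta_n})$ gives $\delta_n\asymp\sqrt{\log n/n}$, and the standard concentration bound yields the claimed $\mathbb{P}(\mathbb{E}_X[h(\overline{g}_{\widehat{G}_n},g_{G_*})]>C'\sqrt{\log n/n})\lesssim n^{-c'}$.

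The main obstacle I expect is controlling the behavior uniformly over $G$ whose components are \emph{not} yet close to the true ones — i.e. establishing that the sieve really can be taken inside the Voronoi neighborhoods so that Lemma~\ref{lemma:partition_over} applies and the region partition is frozen. Concretely, one must show the MLE $\widehat{G}_n$ eventually lands in such a neighborhood; this is handled by a compactness/identifiability argument (assumption (U.2), together with the identifiability proof in Appendix C): outside every neighborhood of $G_*$ the map $G\mapsto \mathbb{E}_X[h(\overline{g}_G,g_{G_*})]$ is bounded below by a positive constant, so convergence of the Hellinger risk to zero forces convergence of $\widehat{G}_n$ toward the support of $G_*$ in the Voronoi sense. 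The remaining pieces — Lipschitz bounds on the gating and expert maps over the region interiors, and the bracketing-entropy estimate — are routine given (U.1) and the analyticity of the Gaussian family, and mirror the exact-specified argument almost verbatim, with $K$ replaced by $\overline{K}$ and $k_*$ by $k$.
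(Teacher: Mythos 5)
Your high-level plan (cast the problem into the van de Geer framework via a bracketing-entropy bound and then solve for $\delta_n \asymp \sqrt{\log n / n}$) is the same as the paper's, but the way you propose to handle the $G$-dependent region partition is genuinely different from — and weaker than — what the paper does, and the difference matters.

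You propose to localize: restrict to a Hellinger ball around $g_{G_*}$, invoke Lemma~\ref{lemma:partition_over} to ``freeze'' the partition to $\{\mathcal{X}^*_\ell\}$ for all $G$ in a Voronoi neighborhood of $G_*$, and then argue separately (via identifiability and compactness) that $\widehat{G}_n$ eventually falls in that neighborhood. There are two problems with this. First, it is circularity-prone: the consistency of $\widehat{G}_n$ toward the Voronoi cells of $G_*$ is a downstream consequence of the very density-estimation rate you are trying to prove here (and of the Hellinger lower bounds established only later, in Theorem~\ref{theorem:over_fitted_MLE}); in the paper's logical ordering it cannot be assumed at this stage. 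Second, the entropy integral in Lemma~\ref{lemma:standard_density_rate} must be controlled for every radius $t$ in $[\delta^2/2^{13},\delta]$ and for all $\delta \geq \delta_n$, not just asymptotically small $\delta$. For moderate $\delta$ the local Hellinger ball contains mixing measures whose components are nowhere near the support of $G_*$, so ``freezing'' the partition via Lemma~\ref{lemma:partition_over} — whose hypothesis is precisely that the $\beta_{1i}$ are $\eta_j$-close to $\beta^*_{1j}$ — simply does not apply there.

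The paper sidesteps all of this by bounding the \emph{global} covering number $\log N(\eta, \mathcal{P}_k(\Omega), \|\cdot\|_1) \lesssim \log(1/\eta)$, with no localization and no appeal to consistency. In the paper's construction (Appendix~\ref{appendix:over_fitted_density}), one snaps a given $G$ to a grid point $\overline{G}$ in two stages (expert parameters, then gating parameters), and the partition-matching is invoked between the intermediate $G'$ and the grid point $\overline{G}$ — two mixing measures whose gating vectors are $\eta$-close \emph{to each other}, not to $G_*$. The relevant observation is the perturbation stability of the $\topK$ regions under $\eta$-perturbations of the gating vectors (the same estimate that proves Lemma~\ref{lemma:partition_over}, but applied between $G'$ and $\overline{G}$ rather than between $G$ and $G_*$). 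This yields a $\delta$-uniform entropy bound, and then Theorem 7.4 of~\cite{Vandegeer-2000} applies exactly as in the exact-specified case. So: your concluding remark that the argument ``mirrors the exact-specified argument almost verbatim'' is the right instinct, but the localization machinery you insert in the middle is not only unnecessary, it does not close, because the neighborhood-freezing does not hold uniformly over the radii required by the entropy integral. You should drop the localization and the consistency step and instead bound the global covering number of $\mathcal{P}_k(\Omega)$ directly, using the region-matching between a point and its $\eta$-close cover point.
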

Proof of Theorem~\ref{theorem:over_fitted_density} is in Appendix~\ref{appendix:over_fitted_density}. Although there is a modification in the number of experts chosen from $\overline{g}_{\widehat{G}_n}$, Theorem~\ref{theorem:over_fitted_density} verifies that the convergence rate of this density estimation to $g_{G_*}$ under the over-specified settings remains the same as that under the exact-specified settings, which is of order $\widetilde{\mathcal{O}}(n^{-1/2})$. Subsequently, we will leverage this result for our convergence analysis of parameter estimation under the over-specified settings, which requires us to design a new Voronoi metric.

\textbf{Voronoi metric.} Regarding the top-K sparse softmax gating function, challenges comes not only from the $\topK$ function but also from the $\softmax$ function. In particular, there is an intrinsic interaction between the numerators of softmax weights and the expert functions in the Gaussian density. Moreover, Gaussian density parameters also interacts with each other. These two interactions are respectively illustrated by the following partial differential equations (PDEs):
\begin{align}
    \label{eq:PDE}
    \frac{\partial^2s(X,Y)}{\partial\beta_1\partial b}=\frac{\partial s(X,Y)}{\partial a}; \qquad \frac{\partial s(X,Y)}{\partial \sigma}=\frac{1}{2}\cdot\frac{\partial^2s(X,Y)}{\partial b^2},
\end{align}
where we denote $s(X,Y):=\exp(\beta_1^{\top}X)\cdot f(Y|a^{\top}X+b,\sigma)$. These PDEs arise when we decompose the density difference $g_{\widehat{G}_n}(Y|X)-g_{G_*}(Y|X)$ into a linear combination of linearly independent terms using Taylor expansions.
However, the above PDEs indicates that derivative terms which admit the forms in equation~\eqref{eq:PDE} are linearly dependent. Therefore, we have to group these terms by taking the summation of their coefficients, and then arrive at our desired linear combination of linearly independent elements. Consequently, when $g_{\widehat{G}_n}(Y|X)-g_{G_*}(Y|X)$ converges to zero, all the coefficients in this combination also tend to zero, which leads to the following system of polynomial equations with unknown variables $\{z_{1j},z_{2j},z_{3j},z_{4j},z_{5j}\}_{i=1}^{m}$:
\begin{align}
    \label{eq:system}
    \sum_{i=1}^{m}\sum_{(\alpha_1,\alpha_2,\alpha_3,\alpha_4)\in\mathcal{J}_{\eta_1,\eta_2}}\dfrac{z^2_{5i}~z^{\alpha_1}_{1i}~z^{\alpha_2}_{2i}~z^{\alpha_3}_{3i}~z^{\alpha_4}_{4i}}{\alpha_1!~\alpha_2!~\alpha_3!~\alpha_4!}=0,
\end{align}
for all $(\eta_1,\eta_2)\in\mathbb{N}^d\times\mathbb{N}$ such that $0\leq|\eta_1|\leq r$, $0\leq\eta_2\leq r-|\eta_1|$ and $|\eta_1|+\eta_2\geq 1$, where $\mathcal{J}_{\eta_1,\eta_2}:=\{(\alpha_1,\alpha_2,\alpha_3,\alpha_4)\in\mathbb{N}^d\times\mathbb{N}^d\times\mathbb{N}\times\mathbb{N}:\alpha_1+\alpha_2=\eta_1,|\alpha_2|+\alpha_3+\alpha_4=\eta_2\}$.

Here, a solution to the above system is called non-trivial if all of variables $z_{5i}$ are different from zero, whereas at least one among variables $z_{3i}$ is non-zero.
For any $m\geq 2$, let $\overline{r}(m)$ be the smallest natural number $r$ such that the above system does not have any non-trivial solution. In a general case when $d\geq 1$ and $m\geq 2$, it is non-trivial to determine the exact value of $\overline{r}(m)$ \cite{Sturmfels_System}. However, when $m$ is small, \cite{nguyen2023demystifying} show that $\overline{r}(2)=4$ and $\overline{r}(3)=6$. Additionally, since $\overline{r}(m)$ is a monotonically increasing function of $m$, they also conjecture that $\overline{r}(m)=2m$ for any $m\geq 2$ and $d\geq 1$. 

Given the above results, we design a Voronoi metric $\mathcal{D}_2$ to capture the convergence rates of parameter estimation in the top-K sparse softmax gating Gaussian MoE under the over-specified settings as  
\begin{align}
    \label{eq:overfit_Voronoi_loss}
    &\mathcal{D}_2(G,G_*):=\max_{\{\ell_j\}_{j=1}^{K}\subset[k_*]}\Bigg\{\sum_{\substack{j\in[K],\\ |\mathcal{C}_{\ell_j}|=1}}\sum_{i\in\mathcal{C}_{\ell_j}}\exp(\beta_{0i})\Big[\|\Delta\beta_{1i\ell_j}\|+\|\Delta a_{i\ell_j}\|+|\Delta b_{i\ell_j}|+|\Delta \sigma_{i\ell_j}|\Big]\nonumber\\
    &+\sum_{\substack{j\in[K],\\ |\mathcal{C}_{\ell_j}|>1}}\sum_{i\in\mathcal{C}_{\ell_j}}\exp(\beta_{0i})\Big[\|\Delta\beta_{1i\ell_j}\|^{\brlj}+\|\Delta a_{i\ell_j}\|^{\frac{\brlj}{2}}+|\Delta b_{i\ell_j}|^{\brlj}+|\Delta \sigma_{i\ell_j}|^{\frac{\brlj}{2}}\Big]\nonumber\\
    &\hspace{7.5cm}+\sum_{j=1}^{K}\Big|\sum_{i\in\mathcal{C}_{\ell_j}}\exp(\beta_{0i})-\exp(\beta^*_{0\ell_j})\Big|\Bigg\},
 \end{align}
for any mixing measure $G\in\mathcal{O}_k(\Omega)$. The above maximum operator allows us to capture the behaviors of all input regions partitioned by the top-K sparse softmax gating function in $g_{G_*}$. Then, we show in the following theorem that parameter estimation rates vary with the values of the function $\overline{r}(\cdot)$. 
\begin{theorem}[Parameter estimation rate]
\label{theorem:over_fitted_MLE}
    Under the over-specified settings, the Hellinger lower bound $\bbE_X[h(\overline{g}_{G}(\cdot|X),g_{G_*}(\cdot|X))]\gtrsim \mathcal{D}_2(G,G_*)$ holds for any mixing measure $G\in\mathcal{O}_{k}(\Omega)$.
    As a consequence, we can find a universal constant $C_2>0$ depending only on $G_*$, $\Omega$ and $K$ such that
    \begin{align*}
        \bbP\Big(\mathcal{D}_2(\widehat{G}_n,G_*)> C_2\sqrt{\log(n)/n}\Big)\lesssim n^{-c_2},
    \end{align*}
    where $c_2>0$ is a universal constant that depends only on $\Omega$.  
\end{theorem}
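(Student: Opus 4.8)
The plan is to follow the standard two-stage strategy: (i) establish the Hellinger lower bound $\bbE_X[h(\overline{g}_{G}(\cdot|X),g_{G_*}(\cdot|X))]\gtrsim \mathcal{D}_2(G,G_*)$ for all $G\in\mathcal{O}_k(\Omega)$, then (ii) combine it with the density estimation rate from Theorem~\ref{theorem:over_fitted_density} to conclude that $\mathcal{D}_2(\widehat{G}_n,G_*)=\widetilde{\mathcal{O}}(n^{-1/2})$ with high probability. Step (ii) is immediate: on the event $\bbE_X[h(\overline{g}_{\widehat{G}_n}(\cdot|X),g_{G_*}(\cdot|X))]\le C'\sqrt{\log(n)/n}$, which holds with probability at least $1-\mathcal{O}(n^{-c'})$, the lower bound gives $\mathcal{D}_2(\widehat{G}_n,G_*)\le (C'/\mathrm{const})\sqrt{\log(n)/n}$, so the hard content is all in step (i).

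For the lower bound, I would argue by contradiction in two regimes: a \textbf{local} part and a \textbf{global} part. For the global part, suppose no sequence $G_n$ with $\mathcal{D}_2(G_n,G_*)$ bounded away from $0$ can have $\bbE_X[h(\overline{g}_{G_n}(\cdot|X),g_{G_*}(\cdot|X))]\to 0$; a routine compactness argument (using the identifiability assumption (U.2) and the fact that $\overline{g}_{G}(Y|X)=g_{G_*}(Y|X)$ a.s.\ forces $G\equiv G_*$ once $\overline{K}$ is large enough, per the discussion preceding Lemma~\ref{lemma:partition_over}) reduces everything to a neighborhood of $G_*$. So assume for contradiction that there is a sequence $G_n\to G_*$ (in the sense that Voronoi cells stabilize) with $\bbE_X[h(\overline{g}_{G_n}(\cdot|X),g_{G_*}(\cdot|X))]/\mathcal{D}_2(G_n,G_*)\to 0$. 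By Lemma~\ref{lemma:partition_over}, for $n$ large the input regions $\overline{\mathcal{X}}_{\overline{\ell}}$ match the true regions $\mathcal{X}^*_{\ell}$, so on each region of positive measure $\overline{g}_{G_n}(Y|X)-g_{G_*}(Y|X)$ has the explicit softmax-over-$K$-true-experts form as in~\eqref{eq:explicit_form}. On such a region, I would Taylor-expand the difference around the true parameters $\theta^*_{\ell_j}$ up to order $\overline{r}(|\mathcal{C}_{\ell_j}|)$ in the cells with $|\mathcal{C}_{\ell_j}|>1$ and up to order $1$ in the singleton cells, collecting the result as a linear combination of the functions $X^{\nu}\,\partial^{|\eta_1|+\eta_2} s/\partial\beta_1^{\eta_1}\partial b^{\eta_2}(X,Y)$ (and the $\sigma$-derivatives), where $s(X,Y)=\exp(\beta_1^{\top}X)f(Y|a^{\top}X+b,\sigma)$. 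Using the two PDEs in~\eqref{eq:PDE} to identify the linearly dependent derivative terms, I would merge their coefficients; the merged coefficients are exactly the polynomial-in-$z$ expressions appearing on the left-hand side of system~\eqref{eq:system}, with $z_{1i},z_{2i},z_{3i},z_{4i},z_{5i}$ playing the roles of the normalized parameter deviations $\Delta\beta_{1i\ell_j},\Delta a_{i\ell_j},\Delta b_{i\ell_j},\Delta\sigma_{i\ell_j}$ and $\exp(\beta_{0i})$ (all divided by $\mathcal{D}_2(G_n,G_*)$).

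The decisive step — and the main obstacle — is then to show that this forces a contradiction. Divide $\overline{g}_{G_n}(Y|X)-g_{G_*}(Y|X)$ throughout by $\mathcal{D}_2(G_n,G_*)$; by definition of $\mathcal{D}_2$ the normalized coefficients do not all vanish, so (passing to a subsequence) they converge to a nontrivial limit. I need a linear-independence statement: the collection of functions $\{X^{\nu}\exp((\beta^*_{1\ell_j})^{\top}X)\,\partial^{\alpha}_{b}\partial^{\gamma}_{a}\partial^{\delta}_{\sigma} f(Y|(a^*_{\ell_j})^{\top}X+b^*_{\ell_j},\sigma^*_{\ell_j})\}$, ranging over $j$ and admissible multi-indices, is linearly independent as functions of $(X,Y)$ — this uses the distinct-experts assumption (U.3), the identifiability assumption (U.2), and an argument that polynomials in $X$ times Gaussian-in-$Y$ densities with distinct location/scale parameters cannot cancel (a Fubini/moment-generating-function argument in $Y$ followed by an argument in $X$). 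Given this independence, the vanishing limit forces every coefficient to be zero; in the cells with $|\mathcal{C}_{\ell_j}|=1$ the order-$1$ coefficients being zero is a direct contradiction with the normalization (the singleton terms in $\mathcal{D}_2$ are first-order), and in the cells with $|\mathcal{C}_{\ell_j}|>1$ the coefficient equations are precisely system~\eqref{eq:system} at order $r=\overline{r}(|\mathcal{C}_{\ell_j}|)$, which by the definition of $\overline{r}$ has \emph{no} nontrivial solution — again contradicting nontriviality of the limit (one checks the limit is nontrivial in the required sense: the $z_{5i}=\lim\exp(\beta_{0i})$ stay bounded below, and not all the $z_{3i}$, i.e.\ the $\Delta b$ deviations, vanish, which is arranged by how $\mathcal{D}_2$ weights $|\Delta b_{i\ell_j}|^{\overline{r}(|\mathcal{C}_{\ell_j}|)}$). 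The maximum over $\{\ell_j\}_{j=1}^K$ in $\mathcal{D}_2$ is handled by noting that the contradiction can be derived on whichever input region $\mathcal{X}^*_\ell$ realizes the maximizing index set, since that region has positive measure. Assembling the per-region contradictions yields $\bbE_X[h(\overline{g}_{G_n},g_{G_*})]/\mathcal{D}_2(G_n,G_*)\not\to 0$, completing step (i); step (ii) then finishes the proof.
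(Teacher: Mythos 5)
Your proposal matches the paper's proof essentially step for step: reduce to a Total Variation lower bound split into local and global parts, argue the local part by contradiction along a sequence $G_n$, use Lemma~\ref{lemma:partition_over} to match the input regions, Taylor-expand to order $1$ on singleton Voronoi cells and order $\brj$ on larger ones, merge linearly dependent derivatives via the PDEs in~\eqref{eq:PDE}, and close with the non-solvability of system~\eqref{eq:system} together with Fatou's lemma and linear independence of the surviving terms. The only cosmetic difference is ordering: your assertion that "by definition of $\mathcal{D}_2$ the normalized coefficients do not all vanish" is not immediate and is exactly what the paper's Step 2 proves via the polynomial-system argument — but since you invoke that same argument in your final contradiction, the logical content is equivalent.
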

Proof of Theorem~\ref{theorem:over_fitted_MLE} is in Appendix~\ref{appendix:over_fitted_MLE}. A few remarks on this theorem are in order. 

\textbf{(i)} Under the over-specified settings, the MLE $\widehat{G}_n$ converges to the true mixing measure $G_*$ at the rate of order $\widetilde{\mathcal{O}}(n^{-1/2})$ under the Voronoi metric $\mathcal{D}_2$. Let us denote $\mathcal{C}^n_j=\mathcal{C}_j(\widehat{G}_n)$, then this result indicates that the estimation rates for ground-truth parameters $\exp(\beta^*_{0j}),\beta^*_{1j},a^*_j,b^*_j,\sigma^*_j$ fitted by only one component, i.e. $|\mathcal{C}^n_j|=1$, remain the same at $\widetilde{\mathcal{O}}(n^{-1/2})$ as those in the exact-specified settings.

\textbf{(ii)} However, for ground-truth parameters which are approximated by at least two components, i.e. $|\mathcal{C}^n_j|>1$, the rates for estimating them depend on their corresponding Voronoi cells, and become significantly low when the cardinality of those Voronoi cells increase. In particular, both $\beta^*_{1j}$ and $b^*_j$ admit the estimation rate of order $\widetilde{\mathcal{O}}(n^{-1/2\overline{r}(|\mathcal{C}^n_j|)})$. Meanwhile, the convergence rates of estimating $a^*_j$ and $\sigma^*_j$ are of the same order $\mathcal{O}(n^{-1/\overline{r}(|\mathcal{C}^n_j|)})$. For instance, assume that $(\beta^*_{1j},a^*_j,b^*_j,\sigma^*_j)$ has three fitted components, which means that $|\mathcal{C}^n_j|=3$ and therefore, $\overline{r}(|\mathcal{C}^n_j|)=6$. Then, the estimation rates for $\beta^*_{1j},b^*_j$ and $a^*_j,\sigma^*_j$ are $\widetilde{\mathcal{O}}(n^{-1/12})$ and $\widetilde{\mathcal{O}}(n^{-1/6})$, respectively.

\section{Practical Implications}
\label{sec:practical_implications}
In this section, we present three main practical implications of our theoretical results for the use of top-K sparse softmax gating function in mixture of experts as follows:

\textbf{1. No trade-off between model capacity and model performance:} In the top-K sparse softmax gating Gaussian mixture of experts, since the gating function is discontinuous and only a portion of experts are activated for each input to scale up the model capacity, the parameter estimation rates under that model are expected to be slower than those under the dense softmax gating Gaussian mixture of experts \citep{nguyen2023demystifying}. However, from our theories it turns out that the parameter estimation rates under these two models are the same under both the exact-specified and over-specified settings. As a result, we point out that using the top-K sparse softmax gating function allows us to scale up the model capacity without sacrificing the computational cost as well as the convergence rates of parameter and density estimation.

\textbf{2. Favourable gating function:} As mentioned in Section~\ref{sec:over_specified}, due to an intrinsic interaction between gating parameters and expert parameters via the first PDE in equation~\eqref{eq:PDE}, the rates for estimating those parameters under the over-specified settings are determined by the solvability of the system of polynomial equations~\eqref{eq:system}, which are significantly slow. However, if we use the top-1 sparse softmax gating function, i.e. activating only a single expert for each input, then the gating value is either one or zero. As a result, the previous interaction no longer occurs, which helps improve the parameter estimation rates. This partially accounts for the efficacy of top-1 sparse softmax gating mixture of experts in scaling up deep learning architectures (see \citep{Shazeer_JMLR}).

\textbf{3. True/ Minimal number of experts:} Another challenge is to choose the true/ minimal number of experts, which can be partially addressed using theories developed from the paper. In particular, suppose that the MLE $\widehat{G}_n$ consists of $\hat{k}_n$ components. When the sample size $n$ goes to infinity, every Voronoi cell among $\mathcal{C}^n_1,\ldots,\mathcal{C}^n_{k_*}$ contains at least one element. Since the total number of elements of those Voronoi cells is $\hat{k}_n$, the maximum cardinality of a Voronoi cell turns out to be $\hat{k}_n-k_*+1$. This maximum value is attained when there is exactly one ground-truth component $(\beta^*_{1j},a^*_j,b^*_j,\sigma^*_j)$ fitted by more than one component. An instance for this scenario is when $|\mathcal{C}^n_1|=\hat{k}_n-k_*+1$ and $|\mathcal{C}^n_2|=\ldots=|\mathcal{C}^n_{k_*}|=1$. Under this setting, the first true parameters $\beta^*_{11},b^*_1$ and $a^*_1,\sigma^*_1$ achieve their worst possible estimation rates of order $\widetilde{\mathcal{O}}(n^{-1/2\overline{r}(\hat{k}_n-k_*+1)})$ and $\widetilde{\mathcal{O}}(n^{-1/\overline{r}(\hat{k}_n-k_*+1)})$, respectively, which become significantly slow when the difference $\hat{k}_n-k_*$ increases. As a consequence, the estimated number of experts $\hat{k}_n$ should not be very large compared to the true number of experts $k_*$. 
\vspace{-0.15cm}
\section{Conclusion and Future Directions}
\label{sec:conclusion}
In this paper, we provide a convergence analysis for density estimation and parameter estimation in the top-K sparse softmax gating Gaussian MoE. To overcome the complex structure of top-K sparse softmax gating function, we first establish a connection between the outputs of $\topK$ function associated with the density estimation and the true density in each input region partitioned by the gating function, and then construct novel Voronoi loss functions among parameters to capture different behaviors of these input regions. Under the exact-specified settings, we show that the rates for estimating the true density and true parameters are both parametric on the sample size. 
On the other hand, although the density estimation rate remains parametric under the over-specified settings, the parameter estimation rates witness a sharp drop because of an interaction between the softmax gating and expert functions. 

Based on the results of this paper, there are a few potential future directions. Firstly, as we mentioned in Section~\ref{sec:practical_implications}, a question of how to estimate the true number of experts $k_*$ and the number of experts selected in the top-K sparse softmax gating function $K$ naturally arises from this work. Since the parameter estimation rates under the over-specified settings decrease proportionately to the number of fitted experts $k$, one possible approach to estimating $k_*$ is to reduce $k$ until these rates reach the optimal order $\widetilde{\mathcal{O}}(n^{-1/2})$. That reduction can be done by merging close estimated parameters within their convergence rates to the true parameters~\citep{guha2021Bernoulli} or by penalizing the log-likelihood function of the top-K sparse softmax gating Gaussian MoE using the differences among the parameters~\citep{manole_selection_2021}. As a result, the number of experts chosen in the density estimation $\overline{K}$ also decreases accordingly and approximates the value of $K$. Secondly, the theoretical results established in the paper are under the assumption that the data are assumed to be generated from a top-K sparse softmax gating Gaussian MoE, which can be violated in real-world settings when the data are not necessarily generated from that model. Under those misspecified settings, the MLE converges to a mixing measure $\overline{G} \in \argmin_{G \in \mathcal{O}_{k}(\Omega)} \text{KL}(P(Y|X) || g_{G}(Y|X))$ where $P(Y|X)$ is the true conditional distribution of $Y$ given $X$ and KL stands for the Kullback-Leibler divergence. As the space $\mathcal{O}_{k}(\Omega)$ is non-convex, the existence of $\overline{G}$ is not unique. Furthermore, the current analysis of the MLE under the misspecified settings of statistical models is mostly conducted when the function space is convex~\citep{Vandegeer-2000}, which is inapplicable to the current non-convex misspecified settings. Thus, it is necessary to develop a new analysis and a new metric to establish the convergence rate of the MLE to the set of $\overline{G}$. Finally, 
since the log-likelihood function of the top-K sparse softmax gating Gaussian MoE is highly non-concave, the EM algorithm is used to approximate the MLE. While the statistical guarantee of the EM has been established for vanilla Gaussian mixture models~\citep{Siva_2017, Raaz_Ho_Koulik_2020, Raaz_Ho_Koulik_2018_second}, to the best of our knowledge such guarantee has not been studied in the setting of top-K sparse softmax gating Gaussian MoE. A potential approach to this problem is to utilize the population-to-sample analysis that has been widely used in previous works to study the EM algorithm~\cite{Siva_2017, Ho_Instability}. We leave that direction for the future work.

\section*{Acknowledgements}
NH acknowledges support from the NSF IFML 2019844 and the NSF AI Institute for Foundations of Machine Learning.

\bibliography{iclr2024_conference}
\bibliographystyle{iclr2024_conference}

\clearpage
\appendix
In this supplementary material, we first provide rigorous proofs for all results under the exact-specified settings in Appendix~\ref{appendix:exact-fitted}, while those for the over-specified settings are then presented in Appendix~\ref{appendix:over_fitted}. Next, we study the identifiability of the top-K sparse softmax gating Gaussian mixture of experts (MoE) in Appendix~\ref{appendix:identifiability}. 
We then carry out several numerical experiments in Appendix~\ref{appendix:experiments} to empirically justify our theoretical results. Finally, we establish the theories for parameter and density estimation beyond the settings of top-K sparse softmax Gaussian MoE in Appendix~\ref{appendix:additional_results}. 

\section{Proof for Results under the Exact-specified Settings}
\label{appendix:exact-fitted}

In this appendix, we present the proofs for Theorem~\ref{theorem:exact_fitted_density} in Appendix~\ref{appendix:exact_fitted_density}, while that for Theorem~\ref{theorem:exact_fitted_MLE} is then given in Appendix~\ref{appendix:exact_fitted_MLE}. Lastly, the proof of Lemma~\ref{lemma:partition_exact} is provided in Appendix~\ref{appendix:partition_exact}.

\subsection{Proof of Theorem~\ref{theorem:exact_fitted_density}}
\label{appendix:exact_fitted_density}

In this appendix, we conduct a convergence analysis for density estimation in the top-K sparse softmax gating Gaussian MoE using proof techniques in~\citep{Vandegeer-2000}. For that purpose, it is necessary to introduce some essential notations and key results first. 

Let $\mathcal{P}_{k_*}(\Theta):=\{g_{G}(Y|X):G\in\mathcal{E}_{k_*}(\Omega)\}$ be the set of all conditional density functions of mixing measures in $\mathcal{E}_{k_*}(\Omega)$. Next, we denote by $N(\varepsilon,\mathcal{P}_{k_*}(\Omega),\|\cdot\|_1)$ the covering number of metric space $(\mathcal{P}_{k_*}(\Omega),\|\cdot\|_1)$. Meanwhile, $H_B(\varepsilon,\mathcal{P}_{k_*}(\Omega),h)$ represents for the bracketing entropy of $\mathcal{P}_{k_*}(\Omega)$ under the Hellinger distance. Then, we provide in the following lemma the upper bounds of those terms. 
\begin{lemma}
    \label{lemma:covering_bracketing_bound}
    If $\Omega$ is a bounded set, then the following inequalities hold for any $0<\eta<1/2$:
    \begin{itemize}
        \item[(i)] $\log N(\eta,\mathcal{P}_{k_*}(\Omega),\|\cdot\|_1)\lesssim\log(1/\eta)$;
        \item[(ii)] $H_B(\eta,\mathcal{P}_{k_*}(\Omega),h)\lesssim\log(1/\eta)$. 
    \end{itemize}
\end{lemma}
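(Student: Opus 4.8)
Both bounds reduce to the observation that $\mathcal{P}_{k_*}(\Omega)$ is a parametric family indexed by the bounded set $\Omega\subset\mathbb{R}\times\mathbb{R}^d\times\mathbb{R}^d\times\mathbb{R}\times\mathbb{R}_+$ repeated $k_*$ times, so up to a constant depending on $d,k_*,K$ its log covering and bracketing numbers should scale like $\log(1/\eta)$. The plan is to (a) quantify how $g_G$ changes under a small perturbation of the mixing measure $G$, (b) deduce (i) by pulling back a finite net of the parameter space, and (c) deduce (ii) by the standard ``supremum/infimum over small parameter balls'' bracketing construction combined with the elementary inequality $(\sqrt{f_1}-\sqrt{f_2})^2\le|f_1-f_2|$, valid for any nonnegative $f_1,f_2$, which gives $h^2(f_1,f_2)\le\tfrac12\|f_1-f_2\|_1$ even when $f_1,f_2$ are not probability densities.

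For step (a), fix two mixing measures $G=\sum_i\exp(\beta_{0i})\delta_{(\beta_{1i},a_i,b_i,\sigma_i)}$ and $G'$ whose components can be matched so that all corresponding parameters differ by at most $r$. Partition $\mathcal{X}$ into the polyhedral cells on which the top-$K$ assignment induced by $G$ is constant. On the union of cells where $G$ and $G'$ select the same top-$K$ subset, $g_G$ and $g_{G'}$ are each an ordinary $\softmax$ over that subset times a Gaussian density; since $\softmax$ is globally Lipschitz, $f(\cdot|\mu,\sigma)$ is Lipschitz in $(\mu,\sigma)$ in $L^1(\mathrm{d}y)$ uniformly over the compact range of $(a,b,\sigma)$ (compactness of $\Omega$ forces $\sigma$ bounded away from $0$ and $\infty$), and $\mathcal{X}$ is bounded, the contribution to $\int|g_G-g_{G'}|$ from these cells is $\lesssim r$. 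On the remaining ``bad'' set of $x\in\mathcal{X}$ where a perturbation of size $r$ flips some ranking $(\beta_{1i})^{\top}x$ versus $(\beta_{1i'})^{\top}x$, the integrand is only $O(1)$, but such flips occur only inside $O(r)$-wide slabs around finitely many hyperplanes, so this set has measure $\lesssim r$ and again contributes $\lesssim r$. Hence $\|g_G-g_{G'}\|_1\lesssim r$. Step (b) is then immediate: an $\eta$-net of $\Omega^{k_*}$ of cardinality $(C/\eta)^{k_*(2d+3)}$ yields, via (a), an $O(\eta)$-net of $\mathcal{P}_{k_*}(\Omega)$ in $\|\cdot\|_1$, so $\log N(\eta,\mathcal{P}_{k_*}(\Omega),\|\cdot\|_1)\lesssim\log(1/\eta)$.

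For step (c), for each $G$ in an $r$-net of $\Omega^{k_*}$ define $\bar g_G(Y|X):=\sup_{\|G'-G\|\le r}g_{G'}(Y|X)$ and $\underline g_G(Y|X):=\inf_{\|G'-G\|\le r}g_{G'}(Y|X)$; these are measurable, sandwich every $g_{G'}$ with $G'$ in the ball, and by exactly the cell-splitting estimate of (a) (taking $\sup$/$\inf$ over the small ball only slightly widens or narrows each Gaussian and moves the region boundaries by $O(r)$) satisfy $\|\bar g_G-\underline g_G\|_1\lesssim r$. The inequality above then gives $h(\bar g_G,\underline g_G)\lesssim\sqrt r$, so choosing $r\asymp\eta^2$ produces a set of brackets of Hellinger-size $\le\eta$ whose log-cardinality is $\lesssim\log(1/\eta^2)\lesssim\log(1/\eta)$, i.e. $H_B(\eta,\mathcal{P}_{k_*}(\Omega),h)\lesssim\log(1/\eta)$.

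The main obstacle is the discontinuity of the $\topK$ operator: unlike in the dense softmax case, $g_G$ is only piecewise-Lipschitz in $G$, so the perturbation estimate in (a) genuinely requires isolating the ``good'' cells from the $O(r)$-measure slabs around the ranking hyperplanes (the same mechanism behind Lemma~\ref{lemma:partition_exact}) and bounding each part separately. The degenerate case of (near-)ties $\beta_{1i}=\beta_{1i'}$ within a single $G$ needs a small extra argument --- either observing that such $G$ form a lower-dimensional subset whose neighbourhood contributes negligibly, or noting that on the corresponding bad set the two gating weights in question differ by only $O(r)$ so the integrand is $O(r)$ there. Everything else --- compactness of $\Omega$, the Lipschitz constants of $\softmax$ and of the Gaussian family, and boundedness of $\mathcal{X}$ --- is routine.
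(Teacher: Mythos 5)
Your overall strategy is the paper's: both parts rest on an $L^1$ perturbation bound for $g_G$ under an $O(\eta)$ change of parameters, obtained by netting the compact parameter space and treating the expert part (Lipschitz in $L^1(\dint Y)$ uniformly over the compact $\Omega$) separately from the gating part, and part (ii) is reduced to part (i) by a bracketing construction plus $h^2\le\tfrac12\|\cdot\|_1$. For (ii) your $\sup/\inf$-over-parameter-balls brackets with $r\asymp\eta^2$ are a clean alternative to the paper's brackets $\max\{g_i-\tau,0\}$ and $\max\{g_i+\tau,Q\}$ built around an explicit Gaussian tail envelope $Q(Y|X)$; both routes yield the same logarithmic bound, and yours avoids the envelope bookkeeping.

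The substantive issue sits exactly where you flag it, and neither of your two patches closes it. Your slab around the hyperplane $(\beta_{1i}-\beta_{1i'})^{\top}x=0$ has width $O(r/\|\beta_{1i}-\beta_{1i'}\|)$, not $O(r)$, so the measure bound degenerates for near-parallel gating slopes. Your second patch is incorrect: on that slab the two scores $(\beta_{1i})^{\top}x$ and $(\beta_{1i'})^{\top}x$ are within $O(r)$ of each other, but the ranking flip swaps which \emph{expert} is active, and the biases $\beta_{0i},\beta_{0i'}$ and the expert parameters $(a_i,b_i,\sigma_i)$, $(a_{i'},b_{i'},\sigma_{i'})$ of the swapped components are unrelated, so the integrand there is of constant order, not $O(r)$. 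Your first patch (discarding a lower-dimensional set of mixing measures) is not available either, since a covering/bracketing bound must account for every $g_G$ with $G\in\mathcal{E}_{k_*}(\Omega)$. The paper instead asserts, by the argument of Lemma~\ref{lemma:partition_exact}, that the regions $\mathcal{X}_{\ell}$ and $\widetilde{\mathcal{X}}_{\ell}$ coincide exactly (or are null) for the net point; note, however, that the admissible perturbation there scales with the gap constant $c_{\ell}$ of the particular $G$, which is not bounded below uniformly over $\mathcal{E}_{k_*}(\Omega)$, so the paper's write-up glosses over the same degenerate configuration. In short: same approach, a correct and slightly cleaner part (ii), but the near-tie case you isolate is a genuine unresolved point rather than a ``small extra argument.''
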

Proof of Lemma~\ref{lemma:covering_bracketing_bound} is in Appendix~\ref{appendix:covering_bracketing_proof}. Subsequently, we denote 
\begin{align*}
    \widetilde{\mathcal{P}}_{k_*}(\Omega)&:=\{g_{(G+G_*)/2}(Y|X):G\in\mathcal{E}_{k_*}(\Omega)\};\\
    \widetilde{\mathcal{P}}^{1/2}_{k_*}(\Omega)&:=\{g^{1/2}_{(G+G_*)/2}(Y|X):G\in\mathcal{E}_{k_*}(\Omega)\}.
\end{align*}
In addition, for each $\delta>0$, we define a Hellinger ball centered around the conditional density function $g_{G_*}(Y|X)$ and met with the set $\widetilde{\mathcal{P}}^{1/2}_{k_*}(\Omega)$ as 
\begin{align*}
    \widetilde{\mathcal{P}}^{1/2}_{k_*}(\Omega,\delta):=\{g^{1/2}\in\widetilde{\mathcal{P}}^{1/2}_{k_*}(\Omega):h(g,g_{G_*})\leq\delta\}.
\end{align*}
To capture the size of the above Hellinger ball, \cite{Vandegeer-2000} suggest using the following quantity:
\begin{align}
    \label{eq:integral}
    \mathcal{J}_B(\delta,\widetilde{\mathcal{P}}^{1/2}_{k_*}(\Omega,\delta)):=\int_{\delta^2/2^{13}}^{\delta}H^{1/2}_B(t,\widetilde{\mathcal{P}}^{1/2}_{k_*}(\Omega,t),\|\cdot\|)\dint t\vee \delta,
\end{align}
where $t\vee\delta:=\max\{t,\delta\}$. Given those notations, let us recall a standard result for density estimation in \cite{Vandegeer-2000}.
\begin{lemma}[Theorem 7.4, \cite{Vandegeer-2000}]
    \label{lemma:standard_density_rate}
    Take $\Psi(\delta)\geq\mathcal{J}_B(\delta,\widetilde{\mathcal{P}}^{1/2}_{k_*}(\Omega,\delta))$ such that $\Psi(\delta)/\delta^2$ is a non-increasing function of $\delta$. Then, for some sequence $(\delta_n)$ and universal constant $c$ which satisfy $\sqrt{n}\delta^2_n\geq c\Psi(\delta)$, we obtain that
    \begin{align*}
        \mathbb{P}\left(\bbE_X\Big[h(g_{\widehat{G}_n}(\cdot|X),g_{G_*}(\cdot|X))\Big]>\delta\right)\leq c\exp(-n\delta^2/c^2),
    \end{align*}
    for any $\delta\geq \delta_n$
\end{lemma}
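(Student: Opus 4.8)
Since this lemma merely restates Theorem~7.4 of \cite{Vandegeer-2000}, the plan is to reproduce the standard empirical-process argument for the Hellinger rate of an M-estimator, specialized to the conditional-density setup; I will treat the ``Hellinger distance'' inside $H_B$ and $\mathcal{J}_B$ as the one obtained by integrating $h(g(\cdot|X),g_{G_*}(\cdot|X))^2$ against the fixed marginal law of $X$, and recover the statement for $\bbE_X[h(g_{\widehat{G}_n}(\cdot|X),g_{G_*}(\cdot|X))]$ at the end via Jensen's inequality, $\bbE_X[h]\le(\bbE_X[h^2])^{1/2}$. Write $P$ for the law of $(X,Y)$, $\mathbb{P}_n$ for the empirical measure, $\nu_n:=\sqrt{n}(\mathbb{P}_n-P)$, $g_0:=g_{G_*}$, $\widehat{g}:=g_{\widehat{G}_n}$, and $\bar{g}:=(\widehat{g}+g_0)/2$. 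The first move is the \emph{basic inequality}: since $\widehat{G}_n$ maximizes the empirical log-likelihood, $\mathbb{P}_n\log(\widehat{g}/g_0)\ge 0$; concavity of $\log$ gives $\log(\bar g/g_0)\ge\tfrac12\log(\widehat g/g_0)$, hence $\mathbb{P}_n\log(\bar g/g_0)\ge 0$; applying $\log t\le 2(\sqrt t-1)$ and splitting $\mathbb{P}_n(\sqrt{\bar g/g_0}-1)$ into $\tfrac{1}{\sqrt n}\nu_n(\sqrt{\bar g/g_0}-1)$ plus $P(\sqrt{\bar g/g_0}-1)=-h^2(\bar g,g_0)$ produces the key inequality $h^2(\bar g,g_0)\le\tfrac{1}{\sqrt n}\,\nu_n(\sqrt{\bar g/g_0}-1)$. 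Together with the elementary comparison $h(\widehat g,g_0)\le 4h(\bar g,g_0)$ (from $|\sqrt{\bar g}-\sqrt{g_0}|=\tfrac12|\widehat g-g_0|/(\sqrt{\bar g}+\sqrt{g_0})\ge\tfrac14|\sqrt{\widehat g}-\sqrt{g_0}|$), this reduces everything to bounding $h(\bar g,g_0)$.

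Next I would recast the right-hand side as an empirical-process modulus over Hellinger balls. Indexing by the weights $w_g:=\sqrt{\bar g/g_0}-1$ with $\bar g=(g+g_0)/2$, $g\in\mathcal{P}_{k_*}(\Omega)$, one has $\|w_g\|_{L^2(P)}^2=\int(\sqrt{\bar g}-\sqrt{g_0})^2\,\dint\nu=2h^2(\bar g,g_0)$, so on a Hellinger-$\delta$ shell the $w_g$'s lie in an $L^2(P)$-ball of radius $\lesssim\delta$, while the mixing with $g_0$ keeps $g_0/\bar g\le 2$ and normalizes $\|\sqrt{\bar g}\|_{L^2(\nu)}=1$. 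A bracketing $\varepsilon$-net of $\widetilde{\mathcal{P}}^{1/2}_{k_*}(\Omega)$ in $\|\cdot\|_{L^2(\nu)}$ thus controls the collection $\{w_g\}$, and the governing complexity is exactly the integral $\mathcal{J}_B(\delta,\widetilde{\mathcal{P}}^{1/2}_{k_*}(\Omega,\delta))$ from equation~\eqref{eq:integral}. Plugging this into a Bernstein-type bracketing maximal inequality from the empirical-process toolkit of \cite{Vandegeer-2000} gives, for every $\delta$ with $\sqrt n\,\delta^2\ge c\,\mathcal{J}_B(\delta,\widetilde{\mathcal{P}}^{1/2}_{k_*}(\Omega,\delta))$ and every $a\le\sqrt n\,\delta^2$,
\begin{align*}
    \mathbb{P}\Big(\sup_{g:\,h(\bar g,g_0)\le\delta}|\nu_n(w_g)|\ge a\Big)\le c\exp\!\Big(-\frac{a^{2}}{c^{2}\delta^{2}}\Big).
\end{align*}

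The final step is a peeling argument. I would split $\{h(\bar g_n,g_0)>\delta\}$ into shells $S_j:=\{2^{j}\delta<h(\bar g_n,g_0)\le 2^{j+1}\delta\}$ for $j\ge 0$; on $S_j$ the basic inequality forces $\nu_n(w_{\widehat g})\ge\sqrt n\,(2^j\delta)^2$. Because $\Psi(\delta)/\delta^2$ is non-increasing, $\mathcal{J}_B(2^{j+1}\delta,\widetilde{\mathcal{P}}^{1/2}_{k_*}(\Omega,2^{j+1}\delta))\le\Psi(2^{j+1}\delta)\le 2^{2(j+1)}\Psi(\delta)$, so the hypothesis $\sqrt n\,\delta_n^2\ge c\Psi(\delta_n)$ propagates to each shell and the maximal inequality applies there with $a=\sqrt n\,(2^j\delta)^2$. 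Summing the geometric series $\sum_{j\ge 0}c\exp(-n\,2^{2j}\delta^2/c^2)\le c\exp(-n\delta^2/c^2)$ for $\delta\ge\delta_n$, then combining with $h(\widehat g,g_0)\le 4h(\bar g,g_0)$ and the Jensen step, and relabeling constants, yields the claimed tail bound.

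The main obstacle is the middle step: the weight $w_g=\sqrt{\bar g/g_0}-1$ is not uniformly bounded (it is large wherever $g\gg g_0$), so a naive chaining bound would not reduce cleanly to $\mathcal{J}_B$. The resolution is the truncation argument of \cite{Vandegeer-2000}, for which the ``half-way'' mixture $\bar g$ is essential: it bounds $g_0/\bar g\le 2$, normalizes $\|\sqrt{\bar g}\|_{L^2(\nu)}=1$, and gives $\|w_g\|_{L^2(P)}=\sqrt 2\,h(\bar g,g_0)$, so the empirical-process \emph{variance} is controlled by $\delta$ even though the sup-norm is not; the contribution of the truncated part is absorbed, leaving exactly the bracketing integral in $\mathcal{J}_B$. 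The monotonicity of $\Psi(\delta)/\delta^2$ is precisely what permits this one estimate to be peeled over all Hellinger shells at once.
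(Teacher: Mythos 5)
Your reconstruction is correct and follows exactly the standard argument behind Theorem 7.4 of \cite{Vandegeer-2000} (basic inequality via the half-way mixture $\bar g$, the bound $h(\widehat g,g_0)\le 4h(\bar g,g_0)$, the bracketing maximal inequality over Hellinger shells governed by $\mathcal{J}_B$, and peeling using the monotonicity of $\Psi(\delta)/\delta^2$), which is precisely the machinery the paper invokes: the paper itself offers no proof of this lemma beyond the citation. Your handling of the conditional-density setting — working with the joint Hellinger distance $(\bbE_X[h^2])^{1/2}$ throughout and passing to $\bbE_X[h]$ by Jensen at the end — is the right way to match the statement as written.
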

Proof of Lemma~\ref{lemma:standard_density_rate} can be found in \cite{Vandegeer-2000}. Now, we are ready to provide the proof for convergence rate of density estimation in Theorem~\ref{theorem:exact_fitted_density} in Appendix~\ref{appendix:density_estimation_proof}.

\subsubsection{Main Proof}
\label{appendix:density_estimation_proof}

It is worth noting that for any $t>0$, we have
\begin{align*}
    H_B(t,\widetilde{\mathcal{P}}^{1/2}_{k_*}(\Omega,t),\|\cdot\|)\leq H_B(t,\mathcal{P}_{k_*}(\Omega,t),h).
\end{align*}
Then, the integral in equation~\eqref{eq:integral} is upper bounded as follows:
\begin{align}
    \label{eq:JB_inequality}
    \mathcal{J}_B(\delta,\widetilde{\mathcal{P}}^{1/2}_{k_*}(\Omega,\delta))\leq \int_{\delta^2/2^{13}}^{\delta}H^{1/2}_B(t,\mathcal{P}_{k_*}(\Omega,t),h)\dint t\vee \delta\lesssim \int_{\delta^2/2^{13}}^{\delta}\log(1/t)\dint t\vee \delta,
\end{align}
where the second inequality follows from part (ii) of Lemma~\ref{lemma:covering_bracketing_bound}. 

As a result, by choosing $\Psi(\delta)=\delta\cdot\sqrt{\log(1/\delta)}$, we can verify that $\Psi(\delta)/\delta^2$ is a non-increasing function of $\delta$. Furthermore, the inequality in equation~\eqref{eq:JB_inequality} indicates that $\Psi(\delta)\geq \mathcal{J}_B(\delta,\widetilde{\mathcal{P}}^{1/2}_{k_*}(\Omega,\delta))$. Next, let us consider a sequence $(\delta_n)$ defined as $\delta_n:=\sqrt{\log(n)/n}$. This sequence can be validated to satisfy the condition $\sqrt{n}\delta_n^2\geq c\Psi(\delta)$ for some universal constant $c$. Therefore, by Lemma~\ref{lemma:standard_density_rate}, we reach the conclusion of Theorem~\ref{theorem:exact_fitted_density}:
\begin{align*}
    \mathbb{P}\Big(\bbE_X[h(g_{\widehat{G}_n}(\cdot|X),g_{G_*}(\cdot|X))]>C\sqrt{\log(n)/n}\Big)\lesssim n^{-c},    
\end{align*}
for some universal constant $C$ depending only on $\Omega$.

\subsubsection{Proof of Lemma~\ref{lemma:covering_bracketing_bound}}
\label{appendix:covering_bracketing_proof}
\textbf{Part (i).} In this part, we will derive the following upper bound for the covering number of metric space $(\mathcal{P}_{k_*}(\Omega),\|\cdot\|_1)$ for any $0<\eta<1/2$ given the bounded set $\Omega$:
\begin{align*}
    \log N(\eta,\mathcal{P}_{k_*}(\Omega),\|\cdot\|_1)\lesssim\log(1/\eta).
\end{align*}
To begin with, we define $\Theta:=\{(a,b,\sigma)\in\mathbb{R}^d\times\mathbb{R}\times\mathbb{R}_+:(\beta_0,\beta_1,a,b,\sigma)\in\Omega\}$. Note that $\Omega$ is a bounded set, then $\Theta$ also admits this property. Thus, there exists an $\eta$-cover of $\Theta$, denoted by $\overline{\Theta}_{\eta}$. Additionally, we also define $\Delta:=\{(\beta_{0},\beta_{1})\in\mathbb{R}\times\mathbb{R}^d:(\beta_0,\beta_1,a,b,\sigma)\in\Omega\}$, and $\overline{\Delta}_{\eta}$ be an $\eta$-cover of $\Delta$. Then, it can be validated that $|\overline{\Theta}_{\eta}|\leq\mathcal{O}(\eta^{-(d+1)k_*})$ and $|\overline{\Delta}_{\eta}|\leq\mathcal{O}(\eta^{-(d+3)k_*})$.

Subsequently, for each $G=\sum_{i=1}^{k_*}\exp(\beta_{0i})\delta_{(\beta_{1i},a_i,b_i,\sigma_i)}\in\mathcal{E}_{k_*}(\Omega)$, we take into account two other mixing measures. The first measure is $G'=\sum_{i=1}^{k_*}\exp(\beta_{0i})\delta_{(\beta_{1i},\overline{a}_i,\overline{b}_i,\overline{\sigma}_i)}$, where $(\overline{a}_i,\overline{b}_i,\overline{\sigma}_i)\in\overline{\Theta}_{\eta}$ is the closest points to $(a_i,b_i,\sigma_i)$ in this set for all $i\in[k_*]$. The second one is $\overline{G}:=\sum_{i=1}^{k_*}\exp(\overline{\beta}_{0i})\delta_{(\overline{\beta}_{1i},\overline{a}_i,\overline{b}_i,\overline{\sigma}_i)}$ in which $(\overline{\beta}_{0i},\overline{\beta}_{1i})\in\overline{\Delta}_{\eta}$ for any $i\in[k_*]$.
Next, let us define
\begin{align*}
    \mathcal{T}:=\{g_{\overline{G}}\in\mathcal{P}_{k_*}(\Omega):(\overline{\beta}_{0i},\overline{\beta}_{1i})\in\overline{\Delta}_{\eta}, \ (\overline{a}_i,\overline{b}_i,\overline{\sigma}_i)\in\overline{\Theta}_{\eta}, \forall i\in[k_*]\},
\end{align*}
then it is obvious that $g_{\overline{G}}\in\mathcal{T}$. Now, we will show that $\mathcal{T}$ is an $\eta$-cover of metric space $(\mathcal{P}_{k_*}(\Omega),\|\cdot\|_1)$ with a note that it is not necessarily the smallest cover. Indeed, according to the triangle inequality, 
\begin{align}
    \label{eq:triangle_inequality}
    \|g_{G}-g_{\overline{G}}\|_1\leq\|g_{G}-g_{G'}\|_1+\|g_{G'}-g_{\overline{G}}\|_1.
\end{align}
The first term in the right hand side can be upper bounded as follows:
\begin{align}
    \|g_{G}-g_{G'}\|_1&\leq\sum_{i=1}^{k_*}\int_{\mathcal{X}\times\mathcal{Y}}\Big|f(Y|a_i^{\top}X+b_i,\sigma_i)-f(Y|\overline{a}_i^{\top}X+\overline{b}_i,\overline{\sigma}_i)\Big|\dint (X,Y)\nonumber\\
    &\lesssim\sum_{i=1}^{k_*}\int_{\mathcal{X}\times\mathcal{Y}}\Big(\|a_i-\overline{a}_i\|+\|b_i-\overline{b}_i\|+\|\sigma_i-\overline{\sigma}_i\|\Big)\dint (X,Y)\nonumber\\
    &=\sum_{i=1}^{k_*}\Big(\|a_i-\overline{a}_i\|+\|b_i-\overline{b}_i\|+\|\sigma_i-\overline{\sigma}_i\|\Big)\nonumber\\
    \label{eq:first_term_triangle}
    &\lesssim\eta.
\end{align}
Next, we will also demonstrate that $\|g_{G'}-g_{\overline{G}}\|_1\lesssim\eta$. For that purpose, let us consider $q:=\binom{k}{K}$ $K$-element subsets of $\{1,\ldots,k\}$, which are assumed to take the form $\{\ell_1,\ell_2,\ldots,\ell_K\}$ for any $\ell\in[q]$. Additionally, we also denote $\{\ell_{K+1},\ldots,\ell_k\}:=\{1,\ldots,k\}\setminus\{\ell_1,\ldots,\ell_K\}$ for any $\ell\in[q]$. Then, we define
\begin{align*}
    \mathcal{X}_{\ell}&:=\{x\in\mathcal{X}:\beta_{1i}^{\top}x\geq \beta_{1i'}^{\top}x:i\in\{\ell_1,\ldots,\ell_K\},i'\in\{\ell_{K+1},\ldots,\ell_{k_*}\}\},\\
    \widetilde{\mathcal{X}}_{\ell}&:=\{x\in\mathcal{X}:\overline{\beta}_{1i}^{\top}x\geq \overline{\beta}_{1i'}^{\top}x:i\in\{\ell_1,\ldots,\ell_K\},i'\in\{\ell_{K+1},\ldots,\ell_{k_*}\}\}.
\end{align*}
By using the same arguments as in the proof of Lemma~\ref{lemma:partition_exact} in Appendix~\ref{appendix:partition_exact}, we achieve that either $\mathcal{X}_{\ell}=\widetilde{\mathcal{X}}_{\ell}$ or $\mathcal{X}_{\ell}$ has measure zero for any $\ell\in[q]$. As the $\softmax$ function is differentiable, it is a Lipschitz function with some Lipschitz constant $L\geq 0$. Since $\mathcal{X}$ is a bounded set, we may assume that $\|X\|\leq B$ for any $X\in\mathcal{X}$. Next, we denote
\begin{align*}
    \pi_{{\ell}}(X):=\Big(\beta^{\top}_{1{\ell}_i}x+\beta^{\top}_{0{\ell}_i}\Big)_{i=1}^{{K}}; \qquad \overline{\pi}_{{\ell}}(X):=\Big(\overline{\beta}^{\top}_{1{\ell}_i}x+\overline{\beta}^{\top}_{0{\ell}_i}\Big)_{i=1}^{{K}},
\end{align*}
for any ${K}$-element subset $\{{\ell}_1,\ldots{\ell}_K\}$ of $\{1,\ldots,k_*\}$. Then, we get
\begin{align*}
   \|\softmax(\pi_{{\ell}}(X))-\softmax(\overline{\pi}_{{\ell}}(X))\|&\leq L\cdot\|\pi_{{\ell}}(X)-\overline{\pi}_{{\ell}}(X)\|\\
   &\leq L\cdot\sum_{i=1}^{{K}}\Big(\|\beta_{1{\ell}_i}-\overline{\beta}_{1{\ell}_i}\|\cdot\|X\|+|\beta_{0{\ell}_i}-\overline{\beta}_{0{\ell}_i}|\Big)\\
   &\leq L\cdot\sum_{i=1}^{{K}}\Big(\eta B+\eta\Big)\\
   &\lesssim\eta.
\end{align*}
Back to the proof for $\|g_{G'}-g_{\overline{G}}\|_1\lesssim\eta$, it follows from the above results that
\begin{align}
    &\|g_{G'}-g_{\overline{G}}\|_1=\int_{\mathcal{X}\times\mathcal{Y}}|g_{G'}(Y|X)-g_{\overline{G}}(Y|X)|~\dint (X,Y)\nonumber\\
    &\leq\sum_{\ell=1}^{q}\int_{\mathcal{X}_{\ell}\times\mathcal{Y}}|g_{G'}(Y|X)-g_{\overline{G}}(Y|X)|~\dint (X,Y)\nonumber\\
    &\leq\sum_{\ell=1}^{q}\int_{\mathcal{X}_{\ell}\times\mathcal{Y}}\sum_{i=1}^{K}\Big|\softmax(\pi_{\ell}(X)_i)-\softmax(\overline{\pi}_{\ell}(X)_i)\Big|\cdot\Big|f(Y|\overline{a}_{\ell_i}^{\top}X+\overline{b}_{\ell_i},\overline{\sigma}_{\ell_i})\Big|~\dint(X,Y)\nonumber\\
    \label{eq:second_term_triangle}
    &\lesssim \eta,
\end{align}
From the results in equations~\eqref{eq:triangle_inequality}, \eqref{eq:first_term_triangle} and \eqref{eq:second_term_triangle}, we deduce that $\|g_{G}-g_{\overline{G}}\|_1\lesssim\eta$. This implies that $\mathcal{T}$ is an $\eta$-cover of the metric space $(\mathcal{P}_{k_*}(\Omega),\|\cdot\|_1)$. Consequently, we achieve that
\begin{align*}
    N(\eta,\mathcal{P}_{k_*}(\Omega),\|\cdot\|_1)\lesssim |\overline{\Delta}_{\eta}|\times|\overline{\Theta}_{\eta}|\leq \mathcal{O}(1/\eta^{(d+3)k}),
\end{align*}
which induces the conclusion of this part
\begin{align*}
    \log N(\eta,\mathcal{P}_{k_*}(\Omega),\|\cdot\|_1)\lesssim \log(1/\eta).
\end{align*}

\textbf{Part (ii).} Moving to this part, we will provide an upper bound for the bracketing entropy of $\mathcal{P}_{k_*}(\Omega)$ under the Hellinger distance:
\begin{align*}
    H_B(\eta,\mathcal{P}_{k_*}(\Omega),h)\lesssim\log(1/\eta).
\end{align*}
Recall that $\Theta$ and $\mathcal{X}$ are bounded sets, we can find positive constants $-\gamma\leq a^{\top}X+b\leq \gamma$ and $u_1\leq \sigma\leq u_2$. Let us define
\begin{align*}
    Q(Y|X):=\begin{cases}
        \frac{1}{\sqrt{2\pi u_1}}\exp\Big(-\frac{Y^2}{8u_2}\Big), \quad \text{for } |Y|\geq 2\gamma\\
        \frac{1}{\sqrt{2\pi u_1}}, \hspace{2.4cm} \text{for } |Y|< 2\gamma
    \end{cases}
\end{align*}
Then, it can be validated that $f(Y|a^{\top}X+b,\sigma)\leq Q(X,Y)$ for any $(X,Y)\in\mathcal{X}\times\mathcal{Y}$. 

Next, let $\tau\leq\eta$ which will be chosen later and $\{g_1,\ldots,g_N\}$ be an $\tau$-cover of metric space $(\mathcal{P}_{k_*}(\Omega),\|\cdot\|_1)$ with the covering number $N:=N(\tau,\mathcal{P}_{k_*}(\Omega),\|\cdot\|_1)$. Additionally, we also consider brackets of the form $[\Psi_i^L(Y|X),\Psi_i^U(Y|X)]$ where
\begin{align*}
    \Psi_i^L(Y|X)&:=\max\{g_i(Y|X)-\tau,0\}\\
    \Psi_i^U(Y|X)&:=\max\{g_i(Y|X)+\tau,Q(Y|X)\}.
\end{align*}
Then, we can check that $\mathcal{P}_{k_*}(\Omega)\subseteq\bigcup_{i=1}^N[\Psi_i^L(Y|X),\Psi_i^U(Y|X)]$ and $\Psi_i^U(Y|X)-\Psi_i^L(Y|X)\leq\min\{2\eta,Q(Y|X)\}$. Let $S:=\max\{2\gamma,\sqrt{8u_2}\}\log(1/\tau)$, we have for any $i\in[N]$ that
\begin{align*}
    \|\Psi_i^U-\Psi_i^L\|_1&=\int_{|Y|<2\gamma}[\Psi_i^U(Y|X)-\Psi_i^L(Y|X)]~\dint X\dint Y + \int_{|Y|\geq 2\gamma}[\Psi_i^U(Y|X)-\Psi_i^L(Y|X)]~\dint X\dint Y\\
    &\leq S\tau+\exp\Big(-\frac{S^2}{2u_2}\Big)\leq S'\tau,
\end{align*}
where $S'$ is some positive constant. This inequality indicates that
\begin{align*}
    H_B(S'\tau,\mathcal{P}_{k_*}(\Omega),\|\cdot\|_1)\leq \log N(\tau,\mathcal{P}_{k_*}(\Omega),\|\cdot\|_1)\leq \log(1/\tau).
\end{align*}
By setting $\tau=\eta/S'$, we obtain that $H_B(\eta,\mathcal{P}_{k_*}(\Omega),\|\cdot\|_1)\lesssim\log(1/\eta)$. Finally, since the norm $\|\cdot\|_1$ is upper bounded by the Hellinger distance, we reach the conclusion of this part:
\begin{align*}
    H_B(\eta,\mathcal{P}_{k_*}(\Omega),h)\lesssim\log(1/\eta).
\end{align*}
Hence, the proof is completed.

\subsection{Proof of Theorem~\ref{theorem:exact_fitted_MLE}}
\label{appendix:exact_fitted_MLE}
Since the Hellinger distance is lower bounded by the Total Variation distance, that is $h\geq V$, we will prove the following Total Variation lower bound:
\begin{align*}
    \bbE_X[V(g_{G}(\cdot|X),g_{G_*}(\cdot|X))]\gtrsim \mathcal{D}_1(G,G_*),
\end{align*}
which is then respectively broken into local part and global part as follows:
\begin{align}
    \label{eq:local_part}
     \inf_{G\in\mathcal{E}_{k_*}(\Omega): \mathcal{D}_1(G,G_*)\leq\varepsilon'}\frac{\bbE_X[V(g_{G}(\cdot|X),g_{G_*}(\cdot|X))]}{\mathcal{D}_1(G,G_*)}>0,\\
     \label{eq:global_part}
     \inf_{G\in\mathcal{E}_{k_*}(\Omega): \mathcal{D}_1(G,G_*)>\varepsilon'}\frac{\bbE_X[V(g_{G}(\cdot|X),g_{G_*}(\cdot|X))]}{\mathcal{D}_1(G,G_*)}>0,
\end{align}
for some constant $\varepsilon'>0$.

\textbf{Proof of claim~\eqref{eq:local_part}}: It is sufficient to show that
\begin{align*}
    \lim_{\varepsilon\to 0}\inf_{G\in\mathcal{E}_{k_*}(\Omega): \mathcal{D}_1(G,G_*)\leq\varepsilon}\frac{\bbE_X[V(g_{G}(\cdot|X),g_{G_*}(\cdot|X))]}{\mathcal{D}_1(G,G_*)}>0.
\end{align*}
Assume that this inequality does not hold, then since the number of experts $k_*$ is known in this case, there exists a sequence of mixing measure $G_n:=\sum_{i=1}^{k_*}\exp(\beta^n_{0i})\delta_{(\boin,\ain,\bin,\sigmain)}\in\mathcal{E}_{k_*}(\Omega)$ such that both $\mathcal{D}_1(G_n,G_*)$ and $\bbE_X[V(g_{G_n}(\cdot|X),g_{G_*}(\cdot|X))]/\mathcal{D}_1(G_n,G_*)$ approach zero as $n$ tends to infinity. Now, we define
\begin{align*}
    \mathcal{C}^n_j=\mathcal{C}_j(G_n):=\{i\in[k_*]:\|\omega^n_i-\omega^*_j\|\leq\|\omega^n_i-\omega^*_{s} \|, \ \forall s\neq j\},
\end{align*}
for any $j\in[k_*]$ as $k_*$ Voronoi cells with respect to the mixing measure $G_n$, where we denote $\omega^n_i:=(\boin,\ain,\bin,\sigmain)$ and $\omega^*_j:=(\boj,\aj,\bj,\sigmaj)$. As we use asymptotic arguments in this proof, we can assume without loss of generality (WLOG) that these Voronoi cells does not depend on $n$, that is, $\mathcal{C}_j=\mathcal{C}^n_j$. Next, it follows from the hypothesis $\mathcal{D}_1(G_n,G_*)\to 0$ as $n\to\infty$ that each Voronoi cell contains only one element. Thus, we continue to assume WLOG that $\mathcal{C}_j=\{j\}$ for any $j\in[k_*]$, which implies that $(\bojn,\ajn,\bjn,\sigmajn)\to(\boj,\aj,\bj,\sigmaj)$ and $\exp(\bzjn)\to\exp(\bzj)$ as $n\to\infty$.

Subsequently, to specify the top $K$ selection in the formulations of $g_{G_n}(Y|X)$ and $g_{G_*}(Y|X)$, we divide the covariate space $\mathcal{X}$ into some subsets in two ways. In particular, we first consider $q:=\binom{k_*}{K}$ different $K$-element subsets of $[k_*]$, which are assumed to take the form $\{\ell_1,\ldots,\ell_K\}$, for $\ell\in[q]$. Additionally, we denote $\{\ell_{K+1},\ldots,\ell_{k_*}\}:=[k_*]\setminus\{\ell_1,\ldots,\ell_K\}$. Then, we define for each $\ell\in[q]$ two following subsets of $\mathcal{X}$:
\begin{align*}
    \mathcal{X}^n_{\ell}:=\Big\{x\in\mathcal{X}:(\bojn)^{\top}x\geq (\beta^n_{1j'})^{\top}x:\forall j\in\{\ell_1,\ldots,\ell_K\},j'\in\{\ell_{K+1},\ldots,\ell_{k_*}\}\Big\},\\
    \mathcal{X}^*_{\ell}:=\Big\{x\in\mathcal{X}:(\boj)^{\top}x\geq (\beta^*_{1j'})^{\top}x:\forall j\in\{\ell_1,\ldots,\ell_K\},j'\in\{\ell_{K+1},\ldots,\ell_{k_*}\}\Big\}.
\end{align*}
Since $(\beta^n_{0j},\beta^n_{1j})\to(\beta^*_{0j},\beta^*_{1j})$ as $n\to\infty$ for any $j\in[k_*]$, we have for any arbitrarily small $\eta_j>0$ that $\|\beta^n_{1j}-\beta^*_{1j}\|\leq\eta_j$ and $|\beta^n_{0j}-\beta^*_{0j}|\leq\eta_j$ for sufficiently large $n$. By applying Lemma~\ref{lemma:partition_exact}, we obtain that $\mathcal{X}^n_{\ell}=\mathcal{X}^*_{\ell}$ for any $\ell\in[q]$ for sufficiently large $n$.
WLOG, we assume that 
\begin{align*}
\mathcal{D}_1(G_n,G_*)=\sum_{i=1}^{K}\Big[\exp(\bzin)\Big(\|\Delta\beta^n_{1i}\|+\|\Delta \ain\|+\|\Delta\bin\|+\|\Delta\sigmain\|\Big)+\Big|\exp(\bzin)-\exp(\bzi)\Big|\Big],
\end{align*}
where we denote $\Delta\beta^n_{1i}:=\beta^n_{1i}-\beta^*_{1i}$, $\Delta a^n_i:=a^n_i-a^*_i$, $\Delta b^n_i:=b^n_i-b^*_i$ and $\Delta\sigma^n_i:=\sigma^n_i-\sigma^*_i$.

Let $\ell\in[q]$ such that $\{\ell_1,\ldots,\ell_K\}=\{1,\ldots,K\}$. Then, for almost surely $(X,Y)\in\mathcal{X}^*_{\ell}\times\mathcal{Y}$, we can rewrite the conditional densities $g_{G_n}(Y|X)$ and $g_{G_*}(Y|X)$ as
\begin{align*}
    g_{G_n}(Y|X)&=\sum_{i=1}^{K}\frac{\exp((\boin)^{\top}X+\bzin)}{\sum_{j=1}^{K}\exp((\bojn)^{\top}X+\bzjn)}\cdot f(Y|(\ain)^{\top}X+\bin,\sigmain),\\
    g_{G_*}(Y|X)&=\sum_{i=1}^{K}\frac{\exp((\boi)^{\top}X+\bzi)}{\sum_{j=1}^{K}\exp((\boj)^{\top}X+\bzj)}\cdot f(Y|(\ai)^{\top}X+\bi,\sigmai).
\end{align*}


Now, we break the rest of our arguments into three steps:

\textbf{Step 1 - Taylor expansion}:

In this step, we take into account  $H_n:=\Big[\sum_{i=1}^{K}\exp((\boi)^{\top}X+\bzi)\Big]\cdot[g_{G_n}(Y|X)-g_{G_*}(Y|X)]$. 
Then, $H_n$ can be represented as follows:
\begin{align*}
    &H_n=\sum_{i=1}^{K}\exp(\bzin)\Big[\exp((\boin)^{\top}X)f(Y|(\ain)^{\top}X+\bin,\sigmain)-\exp((\boi)^{\top}X)f(Y|(\ai)^{\top}X+\bi,\sigmai)\Big]\\
    &-\sum_{i=1}^{K}\exp(\bzin)\Big[\exp((\boin)^{\top}X)g_{G_n}(Y|X)-\exp((\boi)^{\top}X)g_{G_n}(Y|X)\Big]\\
    &+\sum_{i=1}^{K}\Big[\exp(\bzin)-\exp(\bzi)\Big]\Big[\exp((\boi)^{\top}X)f(Y|(\ai)^{\top}X+\bi,\sigmai)-\exp((\boi)^{\top}X)g_{G_n}(Y|X)\Big].
\end{align*}
By applying the first-order Taylor expansion to the first term in the above representation, which is denoted by $A_n$, we get that
\begin{align*}
    A_n&=\sum_{i=1}^{K}\sum_{|\alpha|=1}\frac{\exp(\bzin)}{\alpha!}\cdot(\Delta\boin)^{\alpha_1}(\Delta \ain)^{\alpha_2}(\Delta\bin)^{\alpha_3}(\Delta\sigmain)^{\alpha_4}\\
    &\hspace{2cm}\times X^{\alpha_1+\alpha_2}\exp((\boi)^{\top}X)\cdot\frac{\partial^{|\alpha_2|+\alpha_3+\alpha_4}f}{\partial h_1^{|\alpha_2|+\alpha_3}\partial \sigma^{\alpha_4}}(Y|(\ai)^{\top}X+\bi,\sigmai)+R_1(X,Y),
\end{align*}
where $R_1(X,Y)$ is a Taylor remainder that satisfies $R_1(X,Y)/\mathcal{D}'_1(X,Y)\to 0$ as $n\to\infty$. Recall that $f$ is the univariate Gaussian density, then we have
\begin{align*}
    \frac{\partial^{\alpha_4}f}{\partial \sigma^{\alpha_4}}(Y|(\ai)^{\top}X+\bi,\sigmai)=\frac{1}{2^{\alpha_4}}\cdot\frac{\partial^{2\alpha_4}f}{\partial h_1^{2\alpha_4}}(Y|(\ai)^{\top}X+\bi,\sigmai),
\end{align*}
which leads to 
\begin{align}
    A_n&=\sum_{i=1}^{K}\sum_{|\alpha|=1}\frac{\exp(\bzin)}{2^{\alpha_4}\alpha!}\cdot(\Delta\boin)^{\alpha_1}(\Delta \ain)^{\alpha_2}(\Delta\bin)^{\alpha_3}(\Delta\sigmain)^{\alpha_4}\nonumber\\
    &\hspace{2cm}\times X^{\alpha_1+\alpha_2}\exp((\boi)^{\top}X)\cdot\frac{\partial^{|\alpha_2|+\alpha_3+2\alpha_4}f}{\partial h_1^{|\alpha_2|+\alpha_3+2\alpha_4}}(Y|(\ai)^{\top}X+\bi,\sigmai)+R_1(X,Y)\nonumber\\
    &=\sum_{i=1}^{K}\sum_{|\eta_1|+\eta_2=1}^{2}\sum_{\alpha\in\mathcal{J}_{\eta_1,\eta_2}}\frac{\exp(\bzin)}{2^{\alpha_4}\alpha!}\cdot(\Delta\boin)^{\alpha_1}(\Delta \ain)^{\alpha_2}(\Delta\bin)^{\alpha_3}(\Delta\sigmain)^{\alpha_4}\nonumber\\
    \label{eq:An1_decompose}
    &\hspace{2cm}\times X^{\eta_1}\exp((\boi)^{\top}X)\cdot\frac{\partial^{\eta_2}f}{\partial h_1^{\eta_2}}(Y|(\ai)^{\top}X+\bi,\sigmai)+R_1(X,Y),
\end{align}
where we denote $\eta_1=\alpha_1+\alpha_2\in\mathbb{N}^d$, $\eta_2=|\alpha_2|+\alpha_3+2\alpha_4\in\mathbb{N}$ and an index set
\begin{align}
    \label{eq:index_set}
    \mathcal{J}_{\eta_1,\eta_2}:=\{(\alpha_i)_{i=1}^{4}\in\mathbb{N}^d\times\mathbb{N}^d\times\mathbb{N}\times\mathbb{N}:\alpha_1+\alpha_2=\eta_1,\ \alpha_3+2\alpha_4=\eta_2-|\alpha_2|\}.
\end{align}
By arguing in a similar fashion for the second term in the representation of $H_n$, we also get that
\begin{align*}
    B_n:=-\sum_{i=1}^{K}\sum_{|\gamma|=1}\frac{\exp(\bzin)}{\gamma!}(\Delta\beta^n_{1i})\cdot X^{\gamma}\exp((\boin)^{\top}X)g_{G_n}(Y|X)+R_2(X,Y),
\end{align*}
where $R_2(X,Y)$ is a Taylor remainder such that $R_2(X,Y)/\mathcal{D}_{1}(G_n,G_*)\to0$ as $n\to\infty$. Putting the above results together, we rewrite the quantity $H_n$ as follows:
\begin{align}
    H_n&=\sum_{i=1}^{K}\sum_{0\leq|\eta_1|+\eta_2\leq 2}U^n_{i,\eta_1,\eta_2}\cdot X^{\eta_1}\exp((\boi)^{\top}X)\frac{\partial^{\eta_2}f}{\partial h_1^{\eta_2}}(Y|(\ai)^{\top}X+\bi,\sigmai)\nonumber\\
    \label{eq:Hn_formulation}
    &+\sum_{i=1}^{K}\sum_{0\leq|\gamma|\leq 1}W^n_{i,\gamma}\cdot X^{\gamma}\exp((\boi)^{\top}X)g_{G_n}(Y|X) +R_1(X,Y)+R_2(X,Y),
\end{align}
in which we respectively define for each $i\in[K]$ that
\begin{align*}
    U^n_{i,\eta_1,\eta_2}&:=\sum_{\alpha\in\mathcal{J}_{\eta_1,\eta_2}}\frac{\exp(\bzin)}{2^{\alpha_4}\alpha!}\cdot(\Delta\boin)^{\alpha_1}(\Delta \ain)^{\alpha_2}(\Delta\bin)^{\alpha_3}(\Delta\sigmain)^{\alpha_4},\\
    W^n_{i,\gamma}&:=-\frac{\exp(\bzin)}{\gamma!}(\Delta\boin)^{\gamma},
\end{align*}
for any $(\eta_1,\eta_2)\neq (\zerod,0)$ and $|\gamma|\neq\zerod$. Otherwise, $U^n_{i,\zerod,0}=-W^n_{i,\zerod}:=\exp(\bzin)-\exp(\bzi)$.

\textbf{Step 2 - Non-vanishing coefficients}:

Moving to the second step, we will show that not all the ratios $U^n_{i,\eta_1,\eta_2}/\mathcal{D}_{1}(G_n,G_*)$ tend to zero as $n$ goes to infinity. Assume by contrary that all of them approach zero when $n\to\infty$, then for $(\eta_1,\eta_2)=(\zerod,0)$, it follows that
\begin{align}
    \label{eq:first_limit}
    \frac{1}{\mathcal{D}_{1}(G_n,G_*)}\cdot\sum_{i=1}^{K}\Big|\exp(\bzin)-\exp(\bzi)\Big|=\sum_{i=1}^{K}\frac{|U^n_{j,\eta_1,\eta_2}|}{\mathcal{D}_{1}(G_n,G_*)}\to0.
\end{align}
Additionally, for tuples $(\eta_1,\eta_2)$ where $\eta_1\in\{e_1,e_2,\ldots,e_d\}$ with $e_j:=(0,\ldots,0,\underbrace{1}_{j-th},0,\ldots,0)$ and $\eta_2=0$, we get
\begin{align*}
    \frac{1}{\mathcal{D}_{1}(G_n,G_*)}\cdot\sum_{i=1}^{K}\exp(\bzin)\|\Delta\boin\|_1=\sum_{i=1}^{K}\frac{|U^n_{j,\eta_1,\eta_2}|}{\mathcal{D}_{1}(G_n,G_*)}\to0.
\end{align*}
By using similar arguments, we end up having
\begin{align*}
     \frac{1}{\mathcal{D}_{1}(G_n,G_*)}\cdot\sum_{i=1}^{K}\exp(\bzin)\Big[\|\Delta\boin\|_1+\|\Delta\ain\|_1+|\Delta\bin|+|\Delta\sigmain|\Big]\to0.
\end{align*}
Due to the topological equivalence between norm-1 and norm-2, the above limit implies that
\begin{align}
    \label{eq:last_limit}
    \frac{1}{\mathcal{D}_{1}(G_n,G_*)}\cdot\sum_{i=1}^{K}\exp(\bzin)\Big[\|\Delta\boin\|+\|\Delta\ain\|+|\Delta\bin|+|\Delta\sigmain|\Big]\to0.
\end{align}
Combine equation~\eqref{eq:first_limit} with equation~\eqref{eq:last_limit}, we deduce that $\mathcal{D}_{1}(G_n,G_*)/\mathcal{D}_{1}(G_n,G_*)\to0$, which is a contradiction. Consequently, at least one among the ratios $U^n_{i,\eta_1,\eta_2}/\mathcal{D}_{1}(G_n,G_*)$ does not vanish as $n$ tends to infinity.

\textbf{Step 3 - Fatou's contradiction}:

Let us denote by $m_n$ the maximum of the absolute values of $U^n_{i,\eta_1,\eta_2}/\mathcal{D}_1(G_n,G_*)$ and $W^n_{i,\gamma}/\mathcal{D}_1(G_n,G_*)$. It follows from the result achieved in Step 2 that $1/m_n\not\to\infty$. 
 
Recall from the hypothesis that $\mathbb{E}_X[V(g_{G_n}(\cdot|X),g_{G_*}(\cdot|X))]/\mathcal{D}_1(G_n,G_*)\to0$ as $n\to\infty$. Thus, by the Fatou's lemma, we have
\begin{align*}
    0=\lim_{n\to\infty}\dfrac{\mathbb{E}_X[V(g_{G_n}(\cdot|X),g_{G_*}(\cdot|X))]}{\mathcal{D}_1(G_n,G_*)}\geq\frac{1}{2}\cdot\int\liminf_{n\to\infty}\frac{|g_{G_n}(Y|X)-g_{G_*}(Y|X)|}{\mathcal{D}_1(G_n,G_*)}\dint X\dint Y.
\end{align*}
This result indicates that $|g_{G_n}(Y|X)-g_{G_*}(Y|X)|/\mathcal{D}_1(G_n,G_*)$ tends to zero as $n$ goes to infinity for almost surely $(X,Y)$. As a result, it follows that
\begin{align*}
    \lim_{n\to\infty}\frac{H_n}{m_n\mathcal{D}_{1}(G_n,G_*)}=\lim_{n\to\infty}\frac{|g_{G_n}(Y|X)-g_{G_*}(Y|X)|}{m_n\mathcal{D}_{1}(G_n,G_*)}=0.
\end{align*}
Next, let us denote $U^n_{i,\eta_1,\eta_2}/[m_n\mathcal{D}_{1}(G_n,G_*)]\to\tau_{i,\eta_1,\eta_2}$ and $W^n_{i,\gamma}/[m_n\mathcal{D}_{1}(G_n,G_*)]\to\kappa_{i,\gamma}$ with a note that at least one among them is non-zero. From the formulation of $H_n$ in equation~\eqref{eq:Hn_formulation}, we deduce that
\begin{align}
    \sum_{i=1}^{K}\sum_{0\leq|\eta_1|+\eta_2\leq 2}\tau_{i,\eta_1,\eta_2}\cdot X^{\eta_1}\exp((\boi)^{\top}X)\frac{\partial^{\eta_2}f}{\partial h_1^{\eta_2}}(Y|(\ai)^{\top}X+\bi,\sigmai)\nonumber\\
    \label{eq:Fatou_equation}
    +\sum_{i=1}^{K}\sum_{0\leq|\gamma|\leq 1}\kappa_{i,\gamma}\cdot X^{\gamma}\exp((\boi)^{\top}X)g_{G_*}(Y|X)=0,
\end{align}
for almost surely $(X,Y)$. This equation is equivalent to
\begin{align*}
    \sum_{i=1}^{K}\sum_{0\leq|\eta_1|\leq 1}\left[\sum_{0\leq\eta_2\leq 2-|\gamma|}\tau_{i,\eta_1,\eta_2}\frac{\partial^{\eta_2}f}{\partial h_1^{\eta_2}}(Y|(\ai)^{\top}X+\bi,\sigmai)+\kappa_{i,\eta_1}g_{G_*}(Y|X)\right]\\
    \times~ X^{\eta_1}\exp((\boi)^{\top}X)=0,
\end{align*}
for almost surely $(X,Y)$. Note that $\beta^*_{11},\ldots,\beta^*_{1K}$ admits pair-wise different values, then $\{\exp((\boi)^{\top}X):i\in[K]\}$ is a linearly independent set, which leads to
\begin{align*}
   \sum_{0\leq|\eta_1|\leq 1}\left[\sum_{0\leq\eta_2\leq 2-|\gamma|}\tau_{i,\eta_1,\eta_2}\frac{\partial^{\eta_2}f}{\partial h_1^{\eta_2}}(Y|(\ai)^{\top}X+\bi,\sigmai)+\kappa_{i,\eta_1}g_{G_*}(Y|X)\right]X^{\eta_1}=0,
\end{align*}
for any $i\in[K]$ for almost surely $(X,Y)$. It is clear that the left hand side of the above equation is a polynomial of $X$ belonging to the compact set $\mathcal{X}$. As a result, we get that
\begin{align*}
    \sum_{0\leq\eta_2\leq 2-|\gamma|}\tau_{i,\eta_1,\eta_2}\frac{\partial^{\eta_2}f}{\partial h_1^{\eta_2}}(Y|(\ai)^{\top}X+\bi,\sigmai)+\kappa_{i,\eta_1}g_{G_*}(Y|X)=0,
\end{align*}
for any $i\in[K]$, $0\leq|\eta_1|\leq 1$ and almost surely $(X,Y)$. Since $(a^*_{1},b^*_{1},\sigma^*_{1}),\ldots,(a^*_{K},b^*_{K},\sigma^*_{K})$ have pair-wise distinct values, those of $((a^*_{1})^{\top}X+b^*_{1},\sigma^*_{1}),\ldots,((a^*_{K})^{\top}X+b^*_{K},\sigma^*_{K})$ are also pair-wise different. Thus, the set $\Big\{\frac{\partial^{\eta_2} f}{\partial h_1^{\eta_2}}(Y|(\ai)^{\top}X+\bi,\sigmai), \ g_{G_*}(Y|X):i\in[K]\Big\}$ is linearly independent. Consequently, we obtain that $\tau_{i,\eta_1,\eta_2}=\kappa_{i,\gamma}=0$ for any $i\in[K]$, $0\leq|\eta_1|+\eta_2\leq 2$ and $0\leq|\gamma|\leq 1$, which contradicts the fact that at least one among these terms is different from zero.

Hence, we can find some constant $\varepsilon'>0$ such that
\begin{align*}
    \inf_{G\in\mathcal{E}_{k_*}(\Omega): \mathcal{D}_1(G,G_*)\leq\varepsilon'}\frac{\bbE_X[V(g_{G}(\cdot|X),g_{G_*}(\cdot|X))]}{\mathcal{D}_1(G,G_*)}>0.
\end{align*}

\textbf{Proof of claim~\eqref{eq:global_part}}: Assume by contrary that this claim is not true, then we can seek a sequence $G'_n\in\mathcal{E}_{k_*}(\Omega)$ such that $\mathcal{D}_1(G'_n,G_*)>\varepsilon'$ and
\begin{align*}
    \lim_{n\to\infty}\frac{\bbE_X[V(g_{G'_n}(\cdot|X),g_{G_*}(\cdot|X))]}{\mathcal{D}_1(G'_n,G_*)}=0,
\end{align*}
which directly implies that $\bbE_X[V(g_{G'_n}(\cdot|X),g_{G_*}(\cdot|X))]\to0$ as $n\to\infty$. Recall that $\Omega$ is a compact set, therefore, we can replace the sequence $G'_n$ by one of its subsequences that converges to a mixing measure $G'\in\mathcal{E}_{k_*}(\Omega)$. Since $\mathcal{D}_1(G'_n,G_*)>\varepsilon'$, this result induces that $\mathcal{D}_1(G',G_*)>\varepsilon'$. 

Subsequently, by means of the Fatou's lemma, we achieve that
\begin{align*}
    0=\lim_{n\to\infty}\bbE_X[2V(g_{G'_n}(\cdot|X),g_{G_*}(\cdot|X))]\geq \int\liminf_{n\to\infty}\Big|g_{G'_n}(Y|X)-g_{G_*}(Y|X)\Big|~\dint\mu(Y)\nu(X).
\end{align*}
It follows that $g_{G'}(Y|X)=g_{G_*}(Y|X)$ for almost surely $(X,Y)$. From Proposition~\ref{prop:identifiable}, we know that the top-K sparse softmax gating Gaussian mixture of experts is identifiable, thus, we obtain that $G'\equiv G_*$. As a consequence, $\mathcal{D}_1(G',G_*)=0$, contradicting the fact that $\mathcal{D}_1(G',G_*)>\varepsilon'>0$. 

Hence, the proof is completed.

\subsection{Proof of Lemma~\ref{lemma:partition_exact}}
\label{appendix:partition_exact}
Let $\eta_i=M_i\varepsilon$, where $\varepsilon$ is some fixed positive constant and $M_i$ will be chosen later. For an arbitrary $\ell\in[q]$, since $\mathcal{X}$ and $\Omega$ are bounded sets, there exists some constant $c^*_{\ell}\geq 0$ such that
\begin{align}
    \label{eq:lowest_dist}
    \min_{x,i,i'}\Big[(\beta^*_{1i})^{\top}x-(\beta^*_{1i'})^{\top}x\Big]=c^*_{\ell}\varepsilon,
\end{align}
where the minimum is subject to $x\in\mathcal{X}^*_{\ell},i\in\{\ell_1,\ldots,\ell_K\}$ and $i'\in\{\ell_{K+1},\ldots,\ell_{k_*}\}$. We will point out that $c^*_{\ell}>0$. Assume by contrary that $c^*_{\ell}=0$. For $x\in\mathcal{X}^*_{\ell}$, we may assume for any $1\leq i<j\leq k_*$ that
\begin{align*}
    (\beta^*_{1\ell_i})^{\top}x\geq (\beta^*_{1\ell_j})^{\top}x.
\end{align*}
Since $c^*_{\ell}=0$, it follows from equation~\eqref{eq:lowest_dist} that $(\beta^*_{1\ell_{K}})^{\top}x-(\beta^*_{1\ell_{K+1}})^{\top}x=0$, or equivalently
\begin{align*}
    (\beta^*_{1\ell_K}-\beta^*_{1\ell_{K+1}})^{\top}x=0.
\end{align*}
In other words, $\mathcal{X}^*_{\ell}$ is a subset of
\begin{align*}
    \mathcal{Z}:=\{x\in\mathcal{X}:(\beta^*_{1\ell_K}-\beta^*_{1\ell_{K+1}})^{\top}x=0\}.
\end{align*}
Since $\beta_{1\ell_K}-\beta_{1\ell_{K+1}}\neq\zerod$ and the distribution of $X$ is continuous, it follows that the set $\mathcal{Z}$ has measure zero. Since $\mathcal{X}^*_{\ell}\subseteq \mathcal{Z}$, we can conclude that $\mathcal{X}^*_{\ell}$ also has measure zero, which contradicts the hypothesis of Lemma~\ref{lemma:partition_exact}. Therefore, we must have $c^*_{\ell}>0$.

As $\mathcal{X}$ is a bounded set, we assume that $\|x\|\leq B$ for any $x\in\mathcal{X}$. Let $x\in\mathcal{X}^*_{\ell}$, then we have for any $i\in\{\ell_1,\ldots,\ell_K\}$ and $i'\in\{\ell_{K+1},\ldots,\ell_{k_*}\}$ that
\begin{align*}
    {\beta}_{1i}^{\top}x&=({\beta}_{1i}-\beta^*_{1i})^{\top}x+(\beta^*_{1i})^{\top}x\\
    &\geq -M_i\varepsilon B+(\beta^*_{1i'})^{\top}x+c^*_{\ell}\varepsilon\\
    &=-M_i\varepsilon B+c^*_{\ell}\varepsilon+(\beta^*_{1i'}-{\beta}_{1i'})^{\top}x+{\beta}_{1i'}^{\top}x\\
    &\geq -2M_i\varepsilon B++c^*_{\ell}\varepsilon+{\beta}_{1i'}^{\top}x.
\end{align*}
By setting $M_i\leq \dfrac{c^*_{\ell}}{2B}$, we get that $x\in{\mathcal{X}}_{\ell}$, which means that $\mathcal{X}^*_{\ell}\subseteq{\mathcal{X}}_{\ell}$. Similarly, assume that there exists some constant $c_{\ell}\geq 0$ that satisfies
\begin{align*}
    \min_{x,i,i'}\Big[(\beta^*_{1i})^{\top}x-(\beta^*_{1i'})^{\top}x\Big]=c^*_{\ell}\varepsilon.
\end{align*}
Here, the above minimum is subject to $x\in\mathcal{X}_{\ell}$, $i\in\{\ell_1,\ldots,\ell_K\}$ and $i'\in\{\ell_{K+1},\ldots,\ell_{k_*}\}$. If $M_i\leq \dfrac{c_{\ell}}{2B}$, then we also receive that $\mathcal{X}_{\ell}\subseteq\mathcal{X}^*_{\ell}$. 

Hence, if we set $M_i=\dfrac{1}{2B}\min\{c^*_{\ell},c_{\ell}\}$, we reach the conclusion that $\mathcal{X}^*_{\ell}=\mathcal{X}_{\ell}$.


\section{Proof for Results under Over-specified Settings}
\label{appendix:over_fitted}

In this appendix, we first provide the proofs of Theorem~\ref{appendix:over_fitted_density} and Theorem~\ref{theorem:over_fitted_MLE} in Appendix~\ref{appendix:over_fitted_density} and Appendix~\ref{appendix:over_fitted_MLE}, respectively. Subsequently, we present the proof for Proposition~\ref{prop:K_bar_bound} in Appendix~\ref{appendix:K_bar_bound}, while that for Lemma~\ref{lemma:partition_over} is put in Appendix~\ref{appendix:partition_over}.

\subsection{Proof of Theorem~\ref{theorem:over_fitted_density}}
\label{appendix:over_fitted_density}
In this appendix, we follow proof techniques presented in Appendix~\ref{appendix:exact_fitted_density} to demonstrate the result of Theorem~\ref{theorem:over_fitted_density}. Recall that under the over-specified settings, the MLE $\widehat{G}_n$ belongs to the set of all mixing measures with at most $k>k_*$ components, i.e. $\mathcal{O}_k(\Omega)$. Interestingly, if we can adapt the result of part (i) of Lemma~\ref{lemma:covering_bracketing_bound} to the over-specified settings, then other results presented in Appendix~\ref{appendix:exact_fitted_density} will also hold true. Therefore, our main goal is to derive following bound for any $0<\eta<1/2$ under the over-specified settings:
\begin{align*}
    \log N(\eta,\mathcal{P}_{k}(\Omega),\|\cdot\|_1)\lesssim\log(1/\eta),
\end{align*}
where $\mathcal{P}_k(\Omega):=\{\overline{g}_{G}(Y|X):G\in\mathcal{O}_k(\Omega)\}$. For ease of presentation, we will reuse the notations defined in Appendix~\ref{appendix:exact_fitted_density} with $\mathcal{E}_{k_*}(\Omega)$ being replaced by $\mathcal{O}_k(\Omega)$. Now, let us recall necessary notations for this proof.

Firstly, we define $\Theta=\{(a,b,\sigma)\in\mathbb{R}^d\times\mathbb{R}\times\mathbb{R}_+:(\beta_0,\beta_1,a,b,\sigma)\in\Omega\}$, and $\overline{\Theta}_{\eta}$ is an $\eta$-cover of $\Theta$. Additionally, we also denote $\Delta:=\{(\beta_{0},\beta_{1})\in\mathbb{R}^d\times\mathbb{R}:(\beta_0,\beta_1,a,b,\sigma)\in\Omega\}$, and $\overline{\Delta}_{\eta}$ be an $\eta$-cover of $\Delta$. Next, for each mixing measure $G=\sum_{i=1}^{k}\exp(\beta_{0i})\delta_{(\beta_{1i},a_i,b_i,\sigma_i)}\in\mathcal{O}_k(\Omega)$, we denote $G'=\sum_{i=1}^{k}\exp(\beta_{0i})\delta_{({\beta}_{1i},\overline{a}_i,\overline{b}_i,\overline{\sigma}_i)}$ in which $(\overline{a}_i,\overline{b}_i,\overline{\sigma}_i)\in\overline{\Theta}_{\eta}$ is the closest point to $(a_i,b_i,\sigma_i)$ in this set for any $i\in[k]$. We also consider another mixing measure $\overline{G}:=\sum_{i=1}^{k}\exp(\overline{\beta}_{0i})\delta_{(\overline{\beta}_{1i},\overline{a}_i,\overline{b}_i,\overline{\sigma}_i)}\in\mathcal{O}_k(\Omega)$ where $(\overline{\beta}_{0i},\overline{\beta}_{1i})\in\overline{\Delta}_{\eta}$ is the closest point to $(\beta_{0i},\beta_{1i})$ in this set for any $i\in[k]$.

Subsequently, we define
\begin{align*}
    \mathcal{L}:=\{\overline{g}_{\overline{G}}\in\mathcal{P}_k(\Omega):(\overline{\beta}_{0i},\overline{\beta}_{1i})\in\overline{\Delta}_{\eta}, \ (\overline{a}_i,\overline{b}_i,\overline{\sigma}_i)\in\overline{\Theta}_{\eta}\}.
\end{align*}
We demonstrate that $\mathcal{L}$ is an $\eta$-cover of the metric space $(\mathcal{P}_k(\Omega),\|\cdot\|_1)$, that is, for any $\overline{g}_{G}\in\mathcal{P}_k(\Omega)$, there exists a density $\overline{g}_{\overline{G}}\in\mathcal{L}$ such that $\|\overline{g}_{G}-\overline{g}_{\overline{G}}\|_1\leq\eta$. By the triangle inequality, we have
\begin{align}
    \label{eq:new_triangle}
    \|\overline{g}_{G}-\overline{g}_{\overline{G}}\|_1\leq\|\overline{g}_{G}-\overline{g}_{G'}\|_1+\|\overline{g}_{G'}-\overline{g}_{\overline{G}}\|_1.
\end{align}
From the formulation of $G'$, we get that
\begin{align}
    \|\overline{g}_{G}-\overline{g}_{G'}\|_1&\leq\sum_{i=1}^{k}\int_{\mathcal{X}\times\mathcal{Y}}\Big|f(Y|a_i^{\top}X+b_i,\sigma_i)-f(Y|\overline{a}_i^{\top}X+\overline{b}_i,\overline{\sigma}_i)\Big|\dint (X,Y)\nonumber\\
    &\lesssim\sum_{i=1}^{k}\int_{\mathcal{X}\times\mathcal{Y}}\Big(\|a_i-\overline{a}_i\|+|b_i-\overline{b}_i|+|\sigma_i-\overline{\sigma}_i|\Big)\dint(X,Y)\nonumber\\
    \label{eq:first_term}
    &\lesssim\eta
\end{align}
Based on inequalities in equations~\eqref{eq:new_triangle} and \eqref{eq:first_term}, it is sufficient to show that $\|\overline{g}_{G'}-\overline{g}_{\overline{G}}\|_1\lesssim\eta$. For any $\overline{\ell}\in[\overline{q}]$, let us define
\begin{align*}
    \overline{\mathcal{X}}_{\overline{\ell}}&:=\{x\in\mathcal{X}:(\beta_{1i})^{\top}x\geq (\beta_{1i'})^{\top}x, \ \forall i\in\{\overline{\ell}_1,\ldots,\overline{\ell}_{\overline{K}}\},i'\in\{\overline{\ell}_{\overline{K}+1},\ldots,\overline{\ell}_{k}\}\},\\
    \mathcal{X}'_{\overline{\ell}}&:=\{x\in\mathcal{X}:(\overline{\beta}_{1i})^{\top}x\geq(\overline{\beta}_{1i'})^{\top}x, \ \forall i\in\{\overline{\ell}_1,\ldots,\overline{\ell}_{\overline{K}}\},i'\in\{\overline{\ell}_{\overline{K}+1},\ldots,\overline{\ell}_{k}\}\}.
\end{align*}
Since the $\softmax$ function is differentiable, it is a Lipschitz function with some Lipschitz constant $L\geq 0$. Assume that $\|X\|\leq B$ for any $X\in\mathcal{X}$ and denote  
\begin{align*}
    \pi_{\overline{\ell}}(X):=\Big(\beta^{\top}_{1\overline{\ell}_i}x+\beta^{\top}_{0\overline{\ell}_i}\Big)_{i=1}^{\overline{K}}; \qquad \overline{\pi}_{\overline{\ell}}(X):=\Big(\overline{\beta}^{\top}_{1\overline{\ell}_i}x+\overline{\beta}^{\top}_{0\overline{\ell}_i}\Big)_{i=1}^{\overline{K}},
\end{align*}
for any $\overline{K}$-element subset $\{\overline{\ell}_1,\ldots\overline{\ell}_K\}$ of $\{1,\ldots,k\}$. 
Then, we have
\begin{align*}
   \|\softmax(\pi_{\overline{\ell}}(X))-\softmax(\overline{\pi}_{\overline{\ell}}(X))\|&\leq L\cdot\|\pi_{\overline{\ell}}(X)-\overline{\pi}_{\overline{\ell}}(X)\|\\
   &\leq L\cdot\sum_{i=1}^{\overline{K}}\Big(\|\beta_{1\overline{\ell}_i}-\overline{\beta}_{1\overline{\ell}_i}\|\cdot\|X\|+|\beta_{0\overline{\ell}_i}-\overline{\beta}_{0\overline{\ell}_i}|\Big)\\
   &\leq L\cdot\sum_{i=1}^{\overline{K}}\Big(\eta B+\eta\Big)\\
   &\lesssim\eta.
\end{align*}
By arguing similarly to the proof of Lemma~\ref{lemma:partition_over} in Appendix~\ref{appendix:partition_over}, we receive that either $\overline{\mathcal{X}}_{\overline{\ell}}=\mathcal{X}'_{\overline{\ell}}$ or $\overline{\mathcal{X}}_{\overline{\ell}}$ has measure zero for any $\overline{\ell}\in[\overline{q}]$. As a result, we deduce that
\begin{align*}
    &\|\overline{g}_{G'}-\overline{g}_{G_*}\|_1\leq\sum_{\overline{\ell}=1}^{\overline{q}}\int_{\overline{\mathcal{X}}_{\overline{\ell}}\times\mathcal{Y}}|\overline{g}_{G'}(Y|X)-\overline{g}_{\overline{G}}(Y|X)|\dint(X,Y)\\
    &\leq \sum_{\overline{\ell}=1}^{\overline{q}}\int_{\overline{\mathcal{X}}_{\overline{\ell}}\times\mathcal{Y}}\sum_{i=1}^{\overline{K}}\Big|\softmax(\pi_{\overline{\ell}}(X)_i)-\softmax(\overline{\pi}_{\overline{\ell}}(X)_i)\Big|\cdot \Big|f(Y|\overline{a}_{\overline{\ell}_i}^{\top}X+\overline{b}_{\overline{\ell}_i},\overline{\sigma}_{\overline{\ell}_i})\Big|\dint(X,Y)\\
    &\lesssim\eta.
\end{align*}
Thus, $\mathcal{L}$ is an $\eta$-cover of the metric space $(\mathcal{P}_k(\Omega),\|\cdot\|_1)$, which implies that
\begin{align}
    N(\eta,\mathcal{P}_k(\Omega),\|\cdot\|_1)\lesssim|\overline{\Delta}_{\eta}|\times|\overline{\Theta}_{\eta}|\leq \mathcal{O}(\eta^{-(d+1)k})\times\mathcal{O}(\eta^{-(d+3)k})=\mathcal{O}(\eta^{-(2d+4)k}).
\end{align}
Hence, $\log N(\eta,\mathcal{P}_k(\Omega),\|\cdot\|_1)\lesssim\log(1/\eta)$.

\subsection{Proof of Theorem~\ref{theorem:over_fitted_MLE}}
\label{appendix:over_fitted_MLE}
Similar to the proof of Theorem~\ref{theorem:exact_fitted_MLE} in Appendix~\ref{appendix:exact-fitted}, our objective here is also to derive the Total Variation lower bound adapted to the over-fitted settings:
\begin{align*}
    \bbE_X[V(\overline{g}_{G}(\cdot|X),g_{G_*}(\cdot|X))]\gtrsim\mathcal{D}_{2}(G,G_*).
\end{align*}
Since the global part of the above inequality can be argued in the same fashion as in Appendix~\ref{appendix:exact-fitted}, we will focus only on demonstrating the following local part via the proof by contradiction method:
\begin{align}
    \label{eq:local_over_fitted}
    \lim_{\varepsilon\to0}\inf_{G\in\mathcal{O}_{k}(\Theta):\mathcal{D}_2(G,G_*)\leq\varepsilon}\frac{\bbE_X[V(\overline{g}_{G}(\cdot|X),g_{G_*}(\cdot|X))]}{\mathcal{D}_2(G,G_*)}>0.
\end{align}
Assume that the above claim does not hold true, then we can find a sequence of mixing measures  $G_n:=\sum_{i=1}^{k_n}\exp(\bzin)\delta_{(\boin,\ain,\bin,\sigmain)}\in\mathcal{O}_k(\Omega)$ such that both $\mathcal{D}_2(G_n,G_*)$ and $\bbE_X[V(\overline{g}_{G_n}(\cdot|X),g_{G_*}(\cdot|X))]/\mathcal{D}_2(G_n,G_*)$  vanish when $n$ goes to infinity. Additionally, by abuse of notation, we reuse the set of Voronoi cells $\mathcal{C}_j$, for $j\in[k_*]$, defined in Appendix~\ref{appendix:exact-fitted}. Due to the limit $\mathcal{D}_2(G_n,G_*)\to0$ as $n\to\infty$, it follows that for any $j\in[k_*]$, we have $\sum_{i\in\mathcal{C}_j}\exp(\bzin)\to\exp(\bzj)$ and $(\boin,\ain,\bin,\sigmain)\to(\boj,\aj,\bj,\sigmaj)$ for all $i\in\mathcal{C}_j$. WLOG, we may assume that 
\begin{align*}
    &\mathcal{D}_2(G_n,G_*)=\sum_{\substack{j\in[K],\\ |\mathcal{C}_j|>1}}\sum_{i\in\mathcal{C}_j}\exp(\bzin)\Big[\|\dboijn\|^{\brj}+\|\daijn\|^{\frac{\brj}{2}}+|\dbijn|^{\brj}+|\dsijn|^{\frac{\brj}{2}}\Big]\\
    &+\sum_{\substack{j\in[K],\\ |\mathcal{C}_j|=1}}\sum_{i\in\mathcal{C}_j}\exp(\bzin)\Big[\|\dboijn\|+\|\daijn\|+|\dbijn|+|\dsijn|\Big]+\sum_{j=1}^{K}\Big|\sum_{i\in\mathcal{C}_j}\exp(\bzin)-\exp(\bzj)\Big|.
\end{align*}

Regarding the top-$K$ selection in the conditional density $g_{G_*}$, we partition the covariate space $\mathcal{X}$ in a similar fashion to Appendix~\ref{appendix:exact-fitted}. More specifically, we consider $q=\binom{k_*}{K}$ subsets $\{\ell_1,\ldots,\ell_K\}$ of $\{1,\ldots,k_*\}$ for any $\ell\in[q]$, and denote $\{\ell_{K+1},\ldots,\ell_{k_*}\}:=[k_*]\setminus\{\ell_1,\ldots,\ell_K\}$. Then, we define
\begin{align*}
    \mathcal{X}^*_{\ell}:=\Big\{x\in\mathcal{X}:(\boj)^{\top}x\geq (\beta^*_{1j'})^{\top}x, \forall j\in\{\ell_1,\ldots,\ell_K\},j'\in\{\ell_{K+1},\ldots,\ell_{k_*}\}\Big\},
\end{align*}
for any $\ell\in[q]$. On the other hand, we need to introduce a new partition method of the covariate space for the weight selection in the conditional density $g_{G_n}$. In particular, let $\overline{K}\in\mathbb{N}$ such that $\max_{\{\ell_j\}_{j=1}^{K}\subset[k_*]}\sum_{j=1}^{K}|\mathcal{C}_{\ell_j}|\leq\overline{K}\leq k$ and $\overline{q}:=\binom{k}{\overline{K}}$. Then, for any $\overline{\ell}\in[\overline{q}]$, we denote $(\overline{\ell}_1,\ldots,\overline{\ell}_k)$ as a subset of $[k]$ and $\{\overline{\ell}_{\overline{K}+1},\ldots,\overline{\ell}_{k}\}:=[k]\setminus\{\overline{\ell}_1,\ldots,\overline{\ell}_{\overline{K}}\}$. Additionally, we define
\begin{align*}
    \mathcal{X}^n_{\overline{\ell}}:=\Big\{x\in\mathcal{X}:(\boin)^{\top}x\geq (\beta^n_{1i'})^{\top}x, \forall i\in\{\overline{\ell}_1,\ldots,\overline{\ell}_{\overline{K}}\},i'\in\{\overline{\ell}_{\overline{K}+1},\ldots,\overline{\ell}_{k}\}\Big\}.
\end{align*}
Let $X\in\mathcal{X}^*_{\ell}$ for some $\ell\in[q]$ such that $\{\ell_1,\ldots,\ell_K\}=\{1,\ldots,K\}$. If $\{\overline{\ell}_1,\ldots\overline{\ell}_{\overline{K}}\}\neq\mathcal{C}_{1}\cup\ldots\cup\mathcal{C}_{K}$ for any $\overline{\ell}\in[\overline{q}]$, then $\bbE_{X}[V(\overline{g}_{G_n}(\cdot|X),g_{G_*}(\cdot|X))]/\mathcal{D}_2(G_n,G_*)\not\to0$ as $n$ tends to infinity. This contradicts the fact that this term must approach zero. Therefore, we only need to consider the scenario when there exists $\overline{\ell}\in[\overline{q}]$ such that $\{\overline{\ell}_1,\ldots\overline{\ell}_{\overline{K}}\}=\mathcal{C}_{1}\cup\ldots\cup\mathcal{C}_{K}$. Recall that we have $(\beta^n_{0i},\beta^n_{1i})\to(\beta^*_{0j},\beta^*_{1j})$ as $n\to\infty$ for any $j\in[k_*]$ and $i\in\mathcal{C}_j$. Thus, for any arbitrarily small $\eta_j>0$, we have that $\|\beta^n_{1i}-\beta^*_{1j}\|\leq\eta_j$ and $|\beta^n_{0i}-\beta^*_{0j}|\leq\eta_j$ for sufficiently large $n$. Then, it follows from Lemma~\ref{lemma:partition_over} that $\mathcal{X}^*_{\ell}=\mathcal{X}^n_{\overline{\ell}}$ for sufficiently large $n$. This result indicates that $X\in\mathcal{X}^n_{\overline{\ell}}$.

Then, we can represent the conditional densities $g_{G_*}(Y|X)$ and $g_{G_n}(Y|X)$ for any sufficiently large $n$ as follows:
\begin{align*}
    g_{G_*}(Y|X)&=\sum_{j=1}^{K}\frac{\exp((\boj)^{\top}X+\bzj)}{\sum_{j'=1}^{K}\exp((\beta^*_{1j'})^{\top}X+\beta^*_{0j'})}\cdot f(Y|(\aj)^{\top}X+\bj,\sigmaj),\\
     \overline{g}_{G_n}(Y|X)&=\sum_{j=1}^{K}\sum_{i\in\mathcal{C}_j}\frac{\exp((\boin)^{\top}X+\bzin)}{\sum_{j'=1}^{K}\sum_{i'\in\mathcal{C}_{j'}}\exp((\beta^n_{1i'})^{\top}X+\beta^n_{0i'})}\cdot f(Y|(\ain)^{\top}X+\bin,\sigmain).
\end{align*}


Now, we reuse the three-step framework in Appendix~\ref{appendix:exact-fitted}. 

\textbf{Step 1 - Taylor expansion}:

Firstly, by abuse of notations, let us consider the quantity
\begin{align*}
    H_n:=\Big[\sum_{j=1}^{K}\exp((\boj)^{\top}X+\bzj)\Big]\cdot[\overline{g}_{G_n}(Y|X)-g_{G_*}(Y|X)].
\end{align*}
Similar to Step 1 in Appendix~\ref{appendix:exact-fitted}, we can express this term as 
\begin{align*}
    &H_n=\sum_{j=1}^K\sum_{i\in\mathcal{C}_j}\exp(\bzin)\Big[\exp((\boin)^{\top}X)f(Y|(\ain)^{\top}X+\bin,\sigmain)-\exp((\boj)^{\top}X)f(Y|(\aj)^{\top}X+\bj,\sigmaj)\Big]\\
    &-\sum_{j=1}^K\sum_{i\in\mathcal{C}_j}\exp(\bzin)\Big[\exp((\boin)^{\top}X)g_{G_n}(Y|X)-\exp((\boj)^{\top}X)g_{G_n}(Y|X)\Big]\\
    &+\sum_{j=1}^{K}\Big[\sum_{i\in\mathcal{C}_j}\exp(\bzin)-\exp(\bzj)\Big]\Big[\exp((\boj)^{\top}X)f(Y|(\ai)^{\top}X+\bi,\sigmai)-\exp((\boj)^{\top}X)g_{G_n}(Y|X)\Big]\\
    &:=A_n+B_n+E_n.
\end{align*}
Next, we proceed to decompose $A_n$ based on the cardinality of the Voronoi cells as follows:
\begin{align*}
    A_n&=\sum_{j:|\mathcal{C}_j|=1}\sum_{i\in\mathcal{C}_j}\exp(\bzin)\Big[\exp((\boin)^{\top}X)f(Y|(\ain)^{\top}X+\bin,\sigmain)-\exp((\boj)^{\top}X)f(Y|(\aj)^{\top}X+\bj,\sigmaj)\Big]\\
    &+\sum_{j:|\mathcal{C}_j|>1}\sum_{i\in\mathcal{C}_j}\exp(\bzin)\Big[\exp((\boin)^{\top}X)f(Y|(\ain)^{\top}X+\bin,\sigmain)-\exp((\boj)^{\top}X)f(Y|(\aj)^{\top}X+\bj,\sigmaj)\Big].
\end{align*}
By applying the Taylor expansions of order 1 and $\brj$ to the first and second terms of $A_n$, respectively, and following the derivation in equation~\eqref{eq:An1_decompose}, we arrive at
\begin{align*}
    A_n&=\sum_{j:|\mathcal{C}_j|=1}\sum_{i\in\mathcal{C}_j}\sum_{1\leq|\eta_1|+\eta_2\leq 2}\sum_{\alpha\in\mathcal{J}_{\eta_1,\eta_2}}\frac{\exp(\bzin)}{2^{\alpha_4}\alpha!}\cdot(\Delta\boin)^{\alpha_1}(\Delta \ain)^{\alpha_2}(\Delta\bin)^{\alpha_3}(\Delta\sigmain)^{\alpha_4}\\
    &\hspace{2cm}\times X^{\eta_1}\exp((\boi)^{\top}X)\cdot\frac{\partial^{\eta_2}f}{\partial h_1^{\eta_2}}(Y|(\ai)^{\top}X+\bi,\sigmai)+R_3(X,Y)\\
    &+\sum_{j:|\mathcal{C}_j|>1}\sum_{i\in\mathcal{C}_j}\sum_{1\leq|\eta_1|+\eta_2\leq 2\brj}\sum_{\alpha\in\mathcal{J}_{\eta_1,\eta_2}}\frac{\exp(\bzin)}{2^{\alpha_4}\alpha!}\cdot(\Delta\boin)^{\alpha_1}(\Delta \ain)^{\alpha_2}(\Delta\bin)^{\alpha_3}(\Delta\sigmain)^{\alpha_4}\\
    &\hspace{2cm}\times X^{\eta_1}\exp((\boi)^{\top}X)\cdot\frac{\partial^{\eta_2}f}{\partial h_1^{\eta_2}}(Y|(\ai)^{\top}X+\bi,\sigmai)+R_4(X,Y),
\end{align*}
where the set $\mathcal{J}_{\eta_1,\eta_2}$ is defined in equation~\eqref{eq:index_set} while $R_{i}(X,Y)$ is a Taylor remainder such that $R_{i}(X,Y)/\mathcal{D}_2(G_n,G_*)\to0$ as $n\to\infty$ for $i\in\{3,4\}$. Similarly, we also decompose $B_n$ according to the Voronoi cells as $A_n$ but then invoke the Taylor expansions of order 1 and 2 to the first term and the second term, respectively. In particular, we get
\begin{align*}
    B_n&=-\sum_{j:|\mathcal{C}_j|=1}\sum_{i\in\mathcal{C}_j}\sum_{|\gamma|=1}\frac{\exp(\bzin)}{\gamma!}(\Delta\beta^n_{1i})\cdot X^{\gamma}\exp((\boin)^{\top}X)g_{G_n}(Y|X)+R_5(X,Y)\\
    &-\sum_{j:|\mathcal{C}_j|>1}\sum_{i\in\mathcal{C}_j}\sum_{1\leq|\gamma|\leq 2}\frac{\exp(\bzin)}{\gamma!}(\Delta\beta^n_{1i})\cdot X^{\gamma}\exp((\boin)^{\top}X)g_{G_n}(Y|X)+R_6(X,Y),
\end{align*}
where $R_5(X,Y)$ and $R_6(X,Y)$ are Taylor remainders such that their ratios over $\mathcal{D}_2(G_n,G_*)$ approach zero as $n\to\infty$. Subsequently, let us define
\begin{align*}
    S^n_{j,\eta_1,\eta_2}&:=\sum_{i\in\mathcal{C}_j}\sum_{\alpha\in\mathcal{J}_{\eta_1,\eta_2}}\frac{\exp(\bzin)}{2^{\alpha_4}\alpha!}\cdot(\Delta\boin)^{\alpha_1}(\Delta \ain)^{\alpha_2}(\Delta\bin)^{\alpha_3}(\Delta\sigmain)^{\alpha_4},\\
    T^n_{j,\gamma}&:=\sum_{i\in\mathcal{C}_j}\frac{\exp(\bzin)}{\gamma!}(\Delta\boin)^{\gamma},
\end{align*}
for any $(\eta_1,\eta_2)\neq (\zerod,0)$ and $|\gamma|\neq\zerod$, while for $(\eta_1,\eta_2)= (\zerod,0)$ we set
\begin{align*}
    S^n_{i,\zerod,0}=-T^n_{i,\zerod}:=\sum_{i\in\mathcal{C}_j}\exp(\bzin)-\exp(\bzi).
\end{align*}
As a consequence, it follows that
\begin{align}
    \label{eq:Hn_over_fitted}
    H_n&=\sum_{j=1}^{K}\sum_{|\eta_1|+\eta_2=0}^{2\brj}S^n_{j,\eta_1,\eta_2}\cdot X^{\eta_1}\exp((\boi)^{\top}X)\cdot\frac{\partial^{\eta_2}f}{\partial h_1^{\eta_2}}(Y|(\ai)^{\top}X+\bi,\sigmai)\nonumber\\
    &+\sum_{j=1}^K\sum_{|\gamma|=0}^{1+\mathbf{1}_{\{|\mathcal{C}_j|>1\}}}T^n_{j,\gamma}\cdot X^{\gamma}\exp((\boin)^{\top}X)g_{G_n}(Y|X)+R_5(X,Y)+R_6(X,Y).
\end{align}

\textbf{Step 2 - Non-vanishing coefficients}:

In this step, we will prove by contradiction that at least one among the ratios $S^n_{j,\eta_1,\eta_2}/\mathcal{D}_2(G_n,G_*)$ does not converge to zero as $n\to\infty$. Assume that all these terms go to zero, then by employing arguments for deriving equations~\eqref{eq:first_limit} and \eqref{eq:last_limit}, we get that
\begin{align*}
    \frac{1}{\mathcal{D}_2(G_n,G_*)}\cdot\Big[\sum_{j=1}^K&\Big|\sum_{i\in\mathcal{C}_j}\exp(\bzin)-\exp(\bzj)\Big|\\
    &+\sum_{j:|\mathcal{C}_j|=1}\sum_{i\in\mathcal{C}_j}\exp(\bzin)\Big(\|\dboijn\|+\|\daijn\|+|\dbijn|+|\dsijn|\Big)\Big]\to0.
\end{align*}
Combine this limit with the representation of $\mathcal{D}_2(G_n,G_*)$, we have that
\begin{align*}
    \frac{1}{\mathcal{D}_2(G_n,G_*)}\cdot\sum_{j:|\mathcal{C}_j|>1}\sum_{i\in\mathcal{C}_j}\exp(\bzin)\Big(\|\dboijn\|^{\brj}+\|\daijn\|^{\frac{\brj}{2}}+|\dbijn|^{\brj}+|\dsijn|^{\frac{\brj}{2}}\Big)\not\to0.
\end{align*}
This result implies that we can find some index $j'\in[K]:|\mathcal{C}_{j'}|>1$ that satisfies
\begin{align*}
    \frac{1}{\mathcal{D}_2(G_n,G_*)}\cdot\sum_{i\in\mathcal{C}_{j'}}\exp(\bzin)\Big(\|\Delta\beta^n_{1ij'}\|^{\brjp}+\|\Delta a^n_{ij'}\|^{\frac{\brjp}{2}}+|\Delta b^n_{ij'}|^{\brjp}+|\Delta\sigma^n_{ij'}|^{\frac{\brjp}{2}}\Big)\not\to0.
\end{align*}
For simplicity, we may assume that $j'=1$. Since $S^n_{1,\eta_1,\eta_2}/\mathcal{D}_2(G_n,G_*)$ vanishes as $n\to\infty$ for any $(\eta_1,\eta_2)\in\mathbb{N}^d\times\mathbb{N}$ such that $1\leq |\eta_1|+\eta_2\leq \brj$, we divide this term by the left hand side of the above equation and achieve that
\begin{align}
    \label{eq:ratio_polynomial}
    \dfrac{\sum_{i\in\mathcal{C}_1}\sum_{\alpha\in\mathcal{J}_{\eta_1,\eta_2}}\dfrac{\exp(\bzin)}{2^{\alpha_4}\alpha!}(\dboione)^{\alpha_1}(\daione)^{\alpha_2}(\dbione)^{\alpha_3}(\dsione)^{\alpha_4}}{\sum_{i\in\mathcal{C}_1}\exp(\bzin)\Big(\|\dbione\|^{\brone}+\|\daione\|^{\frac{\brone}{2}}+|\dbione|^{\brone}+|\dsione|^{\frac{\brone}{2}}\Big)}\to0,
\end{align}
for any $(\eta_1,\eta_2)\in\mathbb{N}^d\times\mathbb{N}$ such that $1\leq|\eta_1|+\eta_2\leq\brone$.

Subsequently, we define $M_n:=\max\{\|\dboione\|,\|\daione\|^{1/2},|\dbione|,|\dsione|^{1/2}:i\in\mathcal{C}_1\}$ and $p_n:=\max\{\exp(\bzin):i\in\mathcal{C}_1\}$. As a result, the sequence $\exp(\bzin)/p_n$ is bounded, which indicates that we can substitute it with its subsequence that admits a positive limit $z^2_{5i}:=\lim_{n\to\infty}\exp(\bzin)/p_n$. Therefore, at least one among the limits $z^2_{5i}$ equals to one. Furthermore, we also denote
\begin{align*}
    (\dboione)/M_n\to z_{1i},\  (\daione)/M_n\to z_{2i}, \ (\dbione)/M_n\to z_{3i}, \  (\dsione)/(2M_n)\to z_{4i}.
\end{align*}
From the above definition, it follows that at least one among the limits $z_{1i},z_{2i},z_{3i}$ and $z_{4i}$ equals to either 1 or $-1$. By dividing both the numerator and the denominator of the term in equation~\eqref{eq:ratio_polynomial} by $p_nM_n^{|\eta_1|+\eta_2}$, we arrive at the following system of polynomial equations:
\begin{align*}
    \sum_{i\in\mathcal{C}_1}\sum_{\alpha\in\mathcal{J}_{\eta_1,\eta_2}}\dfrac{z^2_{5i}~z^{\alpha_1}_{1i}~z^{\alpha_2}_{2i}~z^{\alpha_3}_{3i}~z^{\alpha_4}_{4i}}{\alpha_1!~\alpha_2!~\alpha_3!~\alpha_4!}=0,
\end{align*}
for all $(\eta_1,\eta_2)\in\mathbb{N}^d\times\mathbb{N}:1\leq|\eta_1|+\eta_2\leq\brone$. Nevertheless, from the definition of $\brone$, we know that the above system does not admit any non-trivial solutions, which is a contradiction. Consequently, not all the ratios $S^n_{j,\eta_1,\eta_2}/\mathcal{D}_2(G_n,G_*)$ tend to zero as $n$ goes to infinity.

\textbf{Step 3 - Fatou's contradiction}:

It follows from the hypothesis that $\mathbb{E}_X[V(\overline{g}_{G_n}(\cdot|X),g_{G_*}(\cdot|X))]/\mathcal{D}_2(G_n,G_*)\to0$ as $n\to\infty$. Then, by applying the Fatou's lemma, we get
\begin{align*}
    0=\lim_{n\to\infty}\dfrac{\mathbb{E}_X[V(\overline{g}_{G_n}(\cdot|X),g_{G_*}(\cdot|X))]}{\mathcal{D}_2(G_n,G_*)}=\frac{1}{2}\cdot\int\liminf_{n\to\infty}\frac{|\overline{g}_{G_n}(Y|X)-g_{G_*}(Y|X)|}{\mathcal{D}_2(G_n,G_*)}\dint X\dint Y,
\end{align*}
which implies that $|\overline{g}_{G_n}(Y|X)-g_{G_*}(Y|X)|/\mathcal{D}_2(G_n,G_*)\to0$ as $n\to\infty$ for almost surely $(X,Y)$. 

Next, we define $m_n$ as the maximum of the absolute values of $S^n_{j,\eta_1,\eta_2/\mathcal{D}_2(G_n,G_*)}$. It follows from Step 2 that $1/m_n\not\to\infty$. Moreover, by arguing in the same way as in Step 3 in Appendix~\ref{appendix:exact-fitted}, we receive that
\begin{align}
    \label{eq:Hn_limit}
    H_n/[m_n\mathcal{D}_2(G_n,G_*)]\to0
\end{align}
as $n\to\infty$. By abuse of notations, let us denote
\begin{align*}
    S^n_{j,\eta_1,\eta_2}/[m_n\mathcal{D}_2(G_n,G_*)]&\to\tau_{j,\eta_1,\eta_2},\\
    T^n_{j,\gamma}/[m_n\mathcal{D}_2(G_n,G_*)]&\to\kappa_{j,\gamma}.
\end{align*}
Here, at least one among $\tau_{j,\eta_1,\eta_2},\kappa_{j,\gamma}$ is non-zero. Then, by putting the results in equations~\eqref{eq:Hn_over_fitted} and ~\eqref{eq:Hn_limit} together, we get
\begin{align*}
   \sum_{i=1}^{K}\sum_{|\eta_1|+\eta_2=0}^{2\brj}\tau_{i,\eta_1,\eta_2}\cdot X^{\eta_1}\exp((\boi)^{\top}X)\frac{\partial^{\eta_2}f}{\partial h_1^{\eta_2}}(Y|(\ai)^{\top}X+\bi,\sigmai)\\
    +\sum_{i=1}^{K}\sum_{|\gamma|=0}^{1+\mathbf{1}_{\{|\mathcal{C}_j|>1\}}}\kappa_{i,\gamma}\cdot X^{\gamma}\exp((\boi)^{\top}X)g_{G_*}(Y|X)=0.
\end{align*}
Arguing in a similar fashion as in Step 3 of Appendix~\ref{appendix:exact-fitted}, we obtain that $\tau_{j,\eta_1,\eta_2}=\kappa_{j,\gamma}=0$ for any $j\in[K]$, $0\leq|\eta_1|+\eta_2\leq 2\brj$ and $0\leq |\gamma|\leq 1+\mathbf{1}_{\{|\mathcal{C}_j|>1\}}$. This contradicts the fact that at least one among them is non-zero. Hence, the proof is completed.

\subsection{Proof of Proposition~\ref{prop:K_bar_bound}}
\label{appendix:K_bar_bound}
Since the Hellinger distance is lower bounded by the Total Variation distance, i.e. $h\geq V$, it is sufficient to show that
\begin{align*}
    \inf_{G\in\mathcal{O}_k(\Omega)}\bbE_X[V(\overline{g}_{G}(\cdot|X),g_{G_*}(\cdot|X))]>0.
\end{align*}
For that purpose, we first demonstrate that
\begin{align}
    \label{eq:prop_local}
    \lim_{\varepsilon\to0}\inf_{G\in\mathcal{O}_k(\Omega):\mathcal{D}_2(G,G_*)\leq\varepsilon}\bbE_X[V(\overline{g}(\cdot|X),g_{G_*}(\cdot|X))]>0.
\end{align}
Assume by contrary that the above claim is not true, then we can find a sequence $G_n=\sum_{i=1}^{k_n}\exp(\beta^n_{0i})\delta_{(\beta^n_{1i},a^n_i,b^n_i,\sigma^n_i)}\in\mathcal{O}_k(\Omega)$ that satisfies $\mathcal{D}_2(G_n,G_*)\to0$ and
\begin{align*}
    \bbE_X[V(\overline{g}_{G_n}(\cdot|X),g_{G_*}(\cdot|X))]\to0
\end{align*}
when $n$ tends to infinity. By applying the Fatou's lemma, we have
\begin{align}
    \label{eq:prop_fatou}
    0&=\lim_{n\to\infty}\bbE_X[V(\overline{g}_{G_n}(\cdot|X),g_{G_*}(\cdot|X))]\nonumber\\
    &\geq \frac{1}{2}\int_{\mathcal{X}\times\mathcal{Y}}\liminf_{n\to\infty}|\overline{g}_{G_n}(Y|X)-g_{G_*}(Y|X)|\dint(X,Y).
\end{align}
The above results indicates that $\overline{g}_{G_n}(Y|X)-g_{G_*}(Y|X)\to0$ as $n\to\infty$ for almost surely $(X,Y)$. WLOG, we may assume that 
\begin{align*}
    \max_{\{\ell_1,\ldots,\ell_K\}}\sum_{j=1}^{K}|\mathcal{C}_{\ell_j}|=|\mathcal{C}_1|+|\mathcal{C}_2|+\ldots+|\mathcal{C}_{K}|.
\end{align*}
Let us consider $X\in\mathcal{X}^*_{\ell}$, where $\ell\in[q]$ such that $\{\ell_1,\ldots,\ell_K\}=\{1,\ldots,K\}$. Since $\mathcal{D}_2(G_n,G_*)$ converges to zero, it follows that $(\beta^n_{1i},a^n_i,b^n_i,\sigma^n_i)\to(\beta^*_{1j},a^*_j,b^*_j,\sigma^*_j)$ and $\sum_{i\in\mathcal{C}_j}\exp(\beta^n_{0i})\to\exp(\beta^*_{0j})$ for any $i\in\mathcal{C}_j$ and $j\in[k_*]$. Thus, we must have that $X\in\overline{\mathcal{X}}_{\overline{\ell}}$ for some $\overline{\ell}\in[\overline{q}]$ such that $\{\overline{\ell}_1,\ldots,\overline{\ell}_{\overline{K}}\}=\mathcal{C}_{1}\cup\ldots\cup\mathcal{C}_{K}$. Otherwise, $\overline{g}_{G_n}(Y|X)-g_{G_*}(Y|X)\not\to 0$, which is a contradiction. However, as $\overline{K}<\sum_{j=1}^{K}|\mathcal{C}_j|$, the fact that $\{\overline{\ell}_1,\ldots,\overline{\ell}_{\overline{K}}\}=\mathcal{C}_{1}\cup\ldots\cup\mathcal{C}_{K}$ cannot occur. Therefore, we reach the claim in equation~\eqref{eq:prop_local}. Consequently, there exists some positive constant $\varepsilon'$ such that
\begin{align*}
    \inf_{G\in\mathcal{O}_k(\Omega):\mathcal{D}_2(G,G_*)\leq\varepsilon'}\bbE_X[V(\overline{g}_{G}(\cdot|X),g_{G_*}(\cdot|X))]>0.
\end{align*}
Given the above result, it suffices to point out that
\begin{align}
    \label{eq:prop_global}
    \inf_{G\in\mathcal{O}_k(\Omega):\mathcal{D}_2(G,G_*)>\varepsilon'}\bbE_X[V(\overline{g}_{G}(\cdot|X),g_{G_*}(\cdot|X))]>0.
\end{align}
We continue to use the proof by contradiction method here. In particular, assume that the inequality~\eqref{eq:prop_global} does not hold, then there exists a sequence of mixing measures $G'_n\in\mathcal{O}_k(\Omega)$ such that $\mathcal{D}_2(G'_n,G_*)>\varepsilon'$ and
\begin{align*}
    \bbE_X[V(\overline{g}_{G'_n}(\cdot|X),g_{G_*}(\cdot|X))]\to0.
\end{align*}
By invoking the Fatou's lemma as in equation~\eqref{eq:prop_fatou}, we get that $\overline{g}_{G'_n}(Y|X)-g_{G_*}(Y|X)\to 0$ as $n\to\infty$ for almost surely $(X,Y)$. Since $\Omega$ is a compact set, we can substitute $(G_n)$ with its subsequence which converges to some mixing measure $G'\in\mathcal{O}_k(\Omega)$. Then, the previous limit implies that $\overline{g}_{G'}(Y|X)=g_{G_*}(Y|X)$ for almost surely $(X,Y)$. From the result of Proposition~\ref{prop:identifiable} in Appendix~\ref{appendix:identifiability}, we know that the top-K sparse softmax gating  Gaussian MoE is identifiable. Therefore, we obtain that $G'\equiv G_*$, or equivalently, $\mathcal{D}_2(G',G_*)=0$

On the other hand, due to the hypothesis $\mathcal{D}_2(G'_n,G_*)>\varepsilon'$ for any $n\in\mathbb{N}$, we also get that $\mathcal{D}_2(G',G_*)>\varepsilon'>0$, which contradicts the previous result. Hence we reach the claim in equation~\eqref{eq:prop_global} and totally completes the proof.

\subsection{Proof of Lemma~\ref{lemma:partition_over}}
\label{appendix:partition_over}
Let $\eta_j=M_j\varepsilon$, where $\varepsilon$ is some fixed positive constant and $M_i$ will be chosen later. As $\mathcal{X}$ and $\Omega$ are bounded sets, we can find some constant $c^*_{\ell}\geq 0$ such that
\begin{align*}
    \min_{x,j,j'}\Big[(\beta^*_{1j})^{\top}x-(\beta^*_{1j'})^{\top}x\Big]=c^*_{\ell}\varepsilon,
\end{align*}
where the above minimum is subject to $x\in\mathcal{X}^*_{\ell},j\in\{\ell_1,\ldots,\ell_K\}$ and $j'\in\{\ell_{K+1},\ldots,\ell_{k_*}\}$. By arguing similarly to the proof of Lemma~\ref{lemma:partition_exact} in Appendix~\ref{appendix:partition_exact}, we deduce that $c^*_{\ell}>0$.

Since $\mathcal{X}$ is a bounded set, we may assume that $\|x\|\leq B$ for any $x\in\mathcal{X}$. Let $x\in\mathcal{X}^*_{\ell}$ and $\overline{\ell}\in[\overline{q}]$ such that $\{\overline{\ell}_{1},\ldots,\overline{\ell}_{\overline{K}}\}=\mathcal{C}_{\ell_1}\cup\ldots\cup\mathcal{C}_{\ell_K}$. Then, for any $i\in\{\overline{\ell}_1,\ldots,\overline{\ell}_{\overline{K}}\}$ and $i'\in\{\overline{\ell}_{\overline{K}+1},\ldots,\overline{\ell}_{k}\}$, we have that
\begin{align*}
    {\beta}_{1i}^{\top}x&=({\beta}_{1i}-\beta^*_{1j})^{\top}x+(\beta^*_{1j})^{\top}x\\
    &\geq -M_i\varepsilon B+(\beta^*_{1j'})^{\top}x+c^*_{\ell}\varepsilon\\
    &=-M_i\varepsilon B+c^*_{\ell}\varepsilon+(\beta^*_{1j'}-{\beta}_{1i'})^{\top}x+{\beta}_{1i'}^{\top}x\\
    &\geq -2M_i\varepsilon B+c^*_{\ell}\varepsilon+{\beta}_{1i'}^{\top}x,
\end{align*}
where $j\in\{\ell_1,\ldots,\ell_K\}$ and $j'\in\{\ell_{K+1},\ldots,\ell_{k_*}\}$ such that $i\in\mathcal{C}_j$ and $i'\in\mathcal{C}_{j'}$. If $M_j\leq\dfrac{c^*_{\ell}}{2B}$, then we get that $x\in{\mathcal{X}}_{\overline{\ell}}$, which leads to $\mathcal{X}^*_{\ell}\subseteq\overline{\mathcal{X}}_{\overline{\ell}}$. 

Analogously, assume that there exists some constant $c_{\ell}\geq 0$ such that
\begin{align*}
    \min_{x,j,j'}\Big[(\beta^*_{1j})^{\top}x-(\beta^*_{1j'})^{\top}x\Big]=c^*_{\ell}\varepsilon,
\end{align*}
where the minimum is subject to $x\in\overline{\mathcal{X}}_{\overline{\ell}}$, $i\in\{\overline{\ell}_1,\ldots,\overline{\ell}_{\overline{K}}\}$ and $i'\in\{\overline{\ell}_{\overline{K}+1},\ldots,\overline{\ell}_{k}\}$. Then, if $M_j\leq \dfrac{c_{\ell}}{2B}$, then we receive that $\overline{\mathcal{X}}_{\overline{\ell}}\subseteq\mathcal{X}^*_{\ell}$.

As a consequence, by setting $M_j=\dfrac{1}{2B}\min\{c^*_{\ell},c_{\ell}\}$, we achieve the conclusion that $\overline{\mathcal{X}}_{\overline{\ell}}=\mathcal{X}^*_{\ell}$. 

\section{Identifiability of the Top-K Sparse Softmax Gating Gaussian Mixture of Experts}
\label{appendix:identifiability}

In this appendix, we study the identifiability of the top-K sparse softmax gating Gaussian MoE, which plays an essential role in ensuring the convergence of the MLE $\widehat{G}_n$ to the true mixing measure $G_*$ under Voronoi loss functions.

\begin{proposition}[Identifiability]
    \label{prop:identifiable}
    Let $G$ and $G'$ be two arbitrary mixing measures in $\mathcal{O}_k(\Theta)$. Suppose that the equation $g_{G}(Y|X)=g_{G'}(Y|X)$ holds for almost surely $(X,Y)\in\mathcal{X}\times\mathcal{Y}$, then it follows that $G\equiv G'$.
\end{proposition}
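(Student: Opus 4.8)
The plan is to exploit that the $\topK$ operator makes $g_G$ and $g_{G'}$ piecewise defined over the input space, reduce the identity $g_G=g_{G'}$ to a purely algebraic matching problem on each piece, and then reconcile the piecewise information into a single identification of atoms. Throughout I would put $G$ and $G'$ in canonical form (atoms distinct, weights positive, and the harmless additive gauge $\beta_{1i}\mapsto\beta_{1i}+c,\ \beta_{0i}\mapsto\beta_{0i}+c_0$ normalised away as in assumption (U.2)).

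First I would partition $\mathcal{X}$ by the $\topK$ selection: for each $K$-element subset $S$ of the index set of $G$ put $\mathcal{X}_S:=\{x\in\mathcal{X}:(\beta_{1i})^\top x\ge(\beta_{1i'})^\top x\ \text{for all } i\in S,\ i'\notin S\}$, and analogously $\mathcal{X}'_{S'}$ from $G'$. These finite families cover $\mathcal{X}$ up to a finite union of hyperplanes, which is null since the law of $X$ is continuous (as used in Lemma~\ref{lemma:partition_exact}). On any cell $\mathcal{U}=\mathcal{X}_S\cap\mathcal{X}'_{S'}$ of positive measure the densities $g_G$ and $g_{G'}$ reduce to explicit softmax-gated Gaussian mixtures over the active sets $S$ and $S'$ (recalling equation~\eqref{eq:explicit_form}), and since both sides are continuous in $X$ on the interior of $\mathcal{U}$, the hypothesis forces equality there for every $Y$. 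Clearing the two softmax denominators then turns this into an identity of the form
\[
\sum_{i\in S,\,j\in S'}e^{(\beta_{1i}+\beta'_{1j})^\top X+\beta_{0i}+\beta'_{0j}}\,f(Y\,|\,a_i^\top X+b_i,\sigma_i)=\sum_{i\in S,\,j\in S'}e^{(\beta_{1i}+\beta'_{1j})^\top X+\beta_{0i}+\beta'_{0j}}\,f(Y\,|\,(a'_j)^\top X+b'_j,\sigma'_j)
\]
valid for all $X$ in the interior of $\mathcal{U}$ and all $Y$.

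The core step is a linear-independence lemma: the functions $(X,Y)\mapsto e^{v^\top X}f(Y\,|\,a^\top X+b,\sigma)$, indexed by pairwise distinct tuples $(v,a,b,\sigma)$, are linearly independent on every open subset of $\mathcal{X}\times\mathcal{Y}$. I would prove it by taking the characteristic function in $Y$, which turns such a term into $e^{(v+\mathrm{i}ta)^\top X}\cdot e^{\mathrm{i}bt-\sigma t^2/2}$: for every $t\neq 0$ the complex frequencies $v+\mathrm{i}ta$ are pairwise distinct across tuples with distinct $(v,a)$, so linear independence of $X\mapsto e^{w^\top X}$ over distinct $w\in\mathbb{C}^d$ collapses the identity into separate identities $\sum_\ell c_\ell e^{\mathrm{i}b_\ell t-\sigma_\ell t^2/2}=0$ over the tuples sharing a given $(v,a)$ (hence with distinct $(b_\ell,\sigma_\ell)$), and distinct Gaussian characteristic functions being linearly independent forces every $c_\ell=0$. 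Applying this lemma to the displayed identity, after grouping its terms by the tuple $(v,a,b,\sigma)$, yields equality of the two (positive) coefficient sums for each tuple; reading these off — first summing over the gating frequency $v$, then keeping $v$ fixed — produces a bijection between the $\topK$-active experts of $G$ on $\mathcal{U}$ and those of $G'$ on $\mathcal{U}$, after merging atoms that share a common $(a,b,\sigma)$, along which the triples $(a,b,\sigma)$ coincide and the aggregated gating exponentials over matched groups coincide for all $X\in\mathcal{U}$; from the latter, linear independence of $X\mapsto e^{w^\top X}$ then recovers the multisets of the $\beta_1$'s and of the $\beta_0$'s modulo the normalised gauge.

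Finally I would globalise. Because the affine means $x\mapsto a_i^\top x+b_i$ are pairwise distinct affine functions, the matching obtained on two overlapping positive-measure cells must agree on their overlap, so the cellwise bijections glue (choosing for each atom some cell on which it is $\topK$-active) into a single bijection $\varphi$ between the atoms of $G$ and those of $G'$ under which $(a,b,\sigma,\beta_1,\beta_0)$ all agree, whence $G\equiv G'$. I expect the main obstacle to be precisely this reconciliation: controlling how the active $K$-subsets vary from cell to cell, keeping the merging of atoms with identical $(a,b,\sigma)$ consistent across cells, and ruling out the degenerate situation in which an atom never appears — either as a top choice or as separated from another atom — on any positive-measure cell, which (together with the additive gauge, removed by the normalisation) is the only potential obstruction to uniqueness. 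The linear-independence lemma itself is routine once the Fourier reduction is set up.
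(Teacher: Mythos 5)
Your route is genuinely different from the paper's. The paper fixes $X$ and invokes the classical identifiability of finite location-scale Gaussian mixtures (Teicher) to match the component densities and gating weights of $g_G(\cdot|X)$ and $g_{G'}(\cdot|X)$; it then uses translation invariance of $\softmax$ together with the normalization in (U.2) to pin down $(\beta_{0i},\beta_{1i})$, and finally matches expert parameters within groups of equal gating weight on a single region $\mathcal{X}_{\ell}$. You instead clear both softmax denominators on each positive-measure cell and prove from scratch, via characteristic functions in $Y$, that the products $e^{v^{\top}X}f(Y|a^{\top}X+b,\sigma)$ with pairwise distinct $(v,a,b,\sigma)$ are linearly independent; that lemma is correct and your sketch (separate by the complex frequency $v+\mathrm{i}ta$ for $t\neq 0$, then by distinct Gaussian characteristic functions) is sound. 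What your version buys is that you never have to apply mixture identifiability at a fixed $X$ where the $\topK$ gate zeroes out most weights --- pointwise, Teicher only matches the $K$ active components on each side, not all $k$ and $k'$ atoms, so the paper's immediate conclusion that the full sets of $k$ gating functions coincide is itself a leap. The price is the heavier bookkeeping of the cross terms indexed by $S\times S'$ and of atoms sharing a common $(a,b,\sigma)$.

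Where your proposal stops short is exactly the step you flag: gluing the cellwise matchings into one global bijection of atoms, and dealing with an atom that is never $\topK$-active on any positive-measure cell. The latter is not a formality: a never-active atom contributes nothing to $g_G$ (its softmax weight is identically zero and its presence does not alter which other atoms are selected), so its expert parameters are invisible to the density and cannot be recovered without an additional argument or restriction. Be aware that the paper's own proof does not engage with this either --- it asserts the global matching of all gating functions at once and then assumes WLOG an index-by-index correspondence --- so you will not find the missing gluing argument there. If you supply the reconciliation (for instance by showing that the relevant pairs of atoms are simultaneously active on some common positive-measure cell, or by working with canonical representatives modulo never-active atoms), your proof is complete; as written it is a correct and somewhat more careful plan for the local analysis, with the global step still to be carried out.
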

\begin{proof}[Proof of Proposition~\ref{prop:identifiable}]
    First, we assume that two mixing measures $G$ and $G'$ take the following forms: $G=\sum_{i=1}^{k}\exp(\beta_{0i})\delta_{(\beta_{1i},a_i,b_i,\sigma_i)}$ and $G'=\sum_{i=1}^{k'}\exp(\beta'_{0i})\delta_{(\beta'_{1i},a'_i,b'_i,\sigma'_i)}$. Recall that $g_G(Y|X)=g_{G'}(Y|X)$ for almost surely $(X,Y)$, then we have
    \begin{align}
        \label{eq:identifiable_equation}
        \sum_{i=1}^{k}\softmax(\topK&((\beta_{1i})^{\top}X,K;\beta_{0i}))\cdot f(Y|a_i^{\top}X+b_i,\sigma_i)\nonumber\\
        &=\sum_{i=1}^{k'}\softmax(\topK((\beta'_{1i})^{\top}X,K;\beta'_{0i}))\cdot f(Y|(a'_i)^{\top}+b'_i,\sigma'_i).
    \end{align}
    Due to the identifiability of the location-scale Gaussian mixtures \cite{Teicher-1960,Teicher-1961,Teicher-63}, we get that $k=k'$ and
    \begin{align*}
        \Big\{\softmax(\topK&((\beta_{1i})^{\top}X,K;\beta_{0i})):i\in[k]\Big\}\equiv\Big\{\softmax(\topK((\beta'_{1i})^{\top}X,K;\beta'_{0i})):i\in[k]\Big\},
    \end{align*}
    for almost surely $X$. WLOG, we may assume that 
    \begin{align}
        \label{eq:soft-soft}
        \softmax(\topK((\beta_{1i})^{\top}X,K;\beta_{0i}))=\softmax(\topK((\beta'_{1i})^{\top}X,K;\beta'_{0i})),
    \end{align}
    for almost surely $X$ for any $i\in[k]$. Since the $\softmax$ function is invariant to translations, it follows from equation~\eqref{eq:soft-soft} that $\beta_{1i}=\beta'_{1i}+v_1$ and $\beta_{0i}=\beta'_{0i}+v_0$ for some $v_1\in\mathbb{R}^d$ and $v_0\in\mathbb{R}$. Notably, from the assumption of the model, we have $\beta_{1k}=\beta'_{1k}=\zerod$ and $\beta_{0k}=\beta'_{0k}=0$, which implies that $v_1=\zerod$ and $v_0=0$. As a result, we obtain that $\beta_{1i}=\beta'_{1i}$ and $\beta_{0i}=\beta'_{0i}$ for any $i\in[k]$.
    
    Let us consider $X\in\mathcal{X}_{\ell}$ where $\ell\in[q]$ such that $\{\ell_1,\ldots,\ell_K\}=\{1,\ldots,K\}$. Then, equation~\eqref{eq:identifiable_equation} can be rewritten as
    \begin{align}
        \label{eq:new_identifiable_equation}
        \sum_{i=1}^{K}\exp(\beta_{0i})\exp(\beta_{1i}^{\top}X)f(Y|a_i^{\top}X+b_i,\sigma_i)=\sum_{i=1}^{K}\exp(\beta_{0i})\exp(\beta_{1i}^{\top}X)f(Y|(a'_i)^{\top}X+b'_i,\sigma'_i),
    \end{align}
    for almost surely $(X,Y)$. Next, we denote $J_1,J_2,\ldots,J_m$ as a partition of the index set $[k]$, where $m\leq k$, such that $\exp(\beta_{0i})=\exp(\beta_{0i'})$ for any $i,i'\in J_j$ and $j\in[m]$. On the other hand, when $i$ and $i'$ do not belong to the same set $J_j$, we let $\exp(\beta_{0i})\neq\exp(\beta_{0i'})$. Thus, we can reformulate equation~\eqref{eq:new_identifiable_equation} as
    \begin{align*}
        \sum_{j=1}^{m}\sum_{i\in{J}_j}\exp(\beta_{0i})\exp(\beta_{1i}^{\top}X)&f(Y|a_i^{\top}X+b_i,\sigma_i)\\
        &=\sum_{j=1}^{m}\sum_{i\in{J}_j}\exp(\beta_{0i})\exp(\beta_{1i}^{\top}X)f(Y|(a'_i)^{\top}X+b'_i,\sigma'_i),
    \end{align*}
    for almost surely $(X,Y)$. This results leads to $\{((a_i)^{\top}X+b_i,\sigma_i):i\in J_j\}\equiv\{((a'_i)^{\top}X+b'_i,\sigma'_i):i\in J_j\}$, for almost surely $X$ for any $j\in[m]$. Therefore, we have
    \begin{align*}
        \{(a_i,b_i,\sigma_i):i\in J_j\}\equiv\{(a'_i,b'_i,\sigma'_i):i\in J_j\},
    \end{align*}
    for any $j\in[m]$. As a consequence, 
    \begin{align*}
        G=\sum_{j=1}^{m}\sum_{i\in J_j}\exp(\beta_{0i})\delta_{(\beta_{1i},a_i,b_i,\sigma_i)}=\sum_{j=1}^{m}\sum_{i\in J_j}\exp(\beta'_{0i})\delta_{(\beta'_{1i},a'_i,b'_i,\sigma'_i)}=G'.
    \end{align*}
    Hence, we reach the conclusion of this proposition.
\end{proof}

\section{Numerical Experiments}
\label{appendix:experiments}
In this appendix, we conduct a few numerical experiments to illustrate the theoretical convergence rates of the MLE $\widehat{G}_n$ to the true mixing measure $G_*$ under both the exact-specified and the over-specified settings.

\subsection{Experimental Setup}
\textbf{Synthetic Data.} 
First, we assume that the true mixing measure $G_*=\sum_{i=1}^{k_*}\exp(\bzi)\delta_{(\boi,\ai,\bi,\sigmai)}$ is of order $k_* = 2$ and associated with the following ground-truth parameters: 
\begin{align*}
    \beta^*_{01} &= -8, &\beta^*_{11} &= 25, &a^*_1 &= -20, &b_1^* &= 15, &\sigma_1^* &= 0.3,\\
    \beta^*_{02} &= 0, &\beta^*_{12} &= 0, &a^*_2 &= 20, &b_2^* &= -5, &\sigma_2^* &= 0.4.
\end{align*}
Then, we generate i.i.d samples $\{(X_i, Y_i)\}_{i=1}^n$ by first sampling $X_i$'s from the uniform distribution $\mathrm{Uniform}[0, 1]$ and then sampling $Y_i$'s from the true conditional density $g_{G_*}(Y | X)$ of top-K sparse softmax gating Gaussian mixture of experts (MoE) given in equation \eqref{eq:conditional_form}. In Figure \ref{fig:regplots}, we visualize the relationship between $X$ and $Y$ when the numbers of experts chosen from $g_{G_*}(Y|X)$ are $K=1$ (Figure~\ref{fig:K1}) and $K=2$ (Figure~\ref{fig:K2}), respectively. 
However, throughout the following experiments, we will consider only the scenario when $K=1$, that is, we choose the best expert from the true conditional density $g_{G_*}(Y|X)$.
\begin{figure}
     \centering
     \begin{subfigure}[b]{0.49\textwidth}
         \centering
         \includegraphics[width=\textwidth]{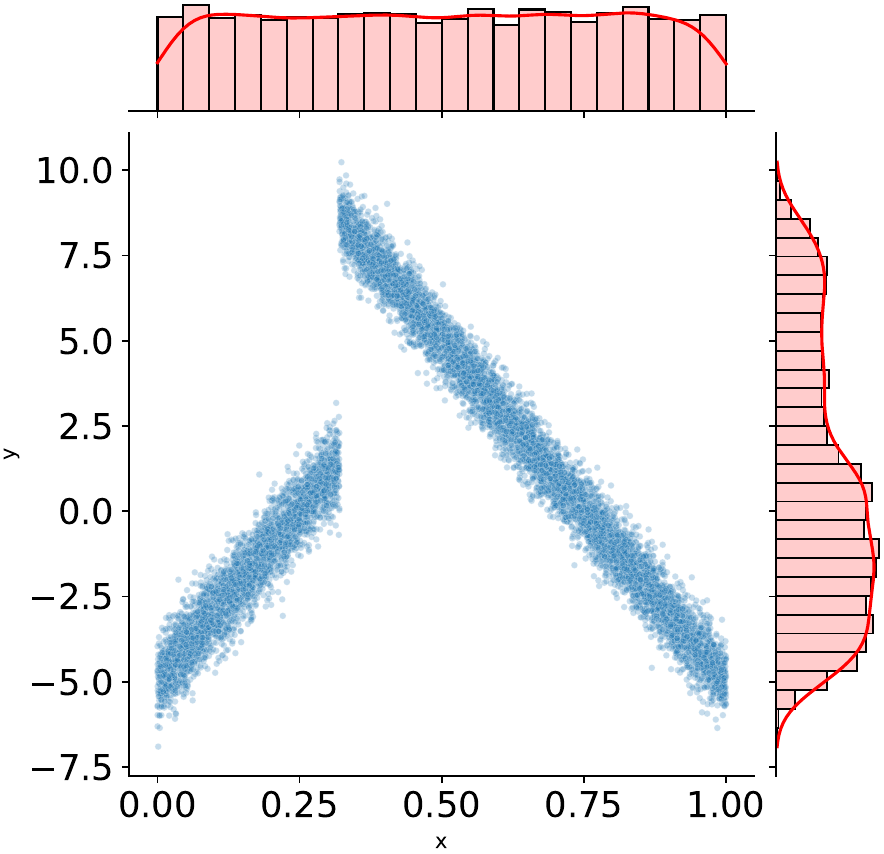}
         \caption{$K=1$}
         \label{fig:K1}
     \end{subfigure}
     \hfill
     \begin{subfigure}[b]{0.49\textwidth}
         \centering
         \includegraphics[width=\textwidth]{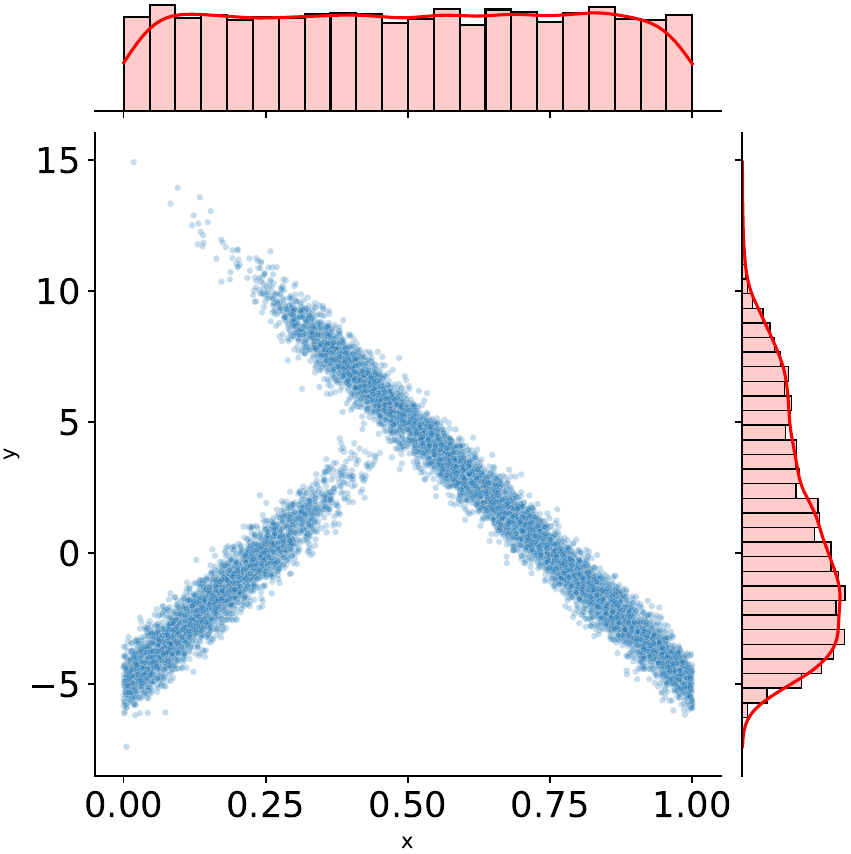}
         \caption{$K=2$}
         \label{fig:K2}
     \end{subfigure}
        \caption{A visual representation showcasing the relationship between $X$ and $Y$, along with their respective marginal distributions when $K = 1$ and $K = 2$.}
        \label{fig:regplots}
\end{figure}

\textbf{Maximum Likelihood Estimation (MLE).} A popular approach to determining the MLE $\widehat{G}_n$ for each set of samples is to use the EM algorithm \cite{dempster_maximum_1977}. However, since there are not any closed-form expressions for updating the gating parameters $\beta_{0i},\beta_{1i}$ in the maximization steps, we have to leverage an EM-based numerical scheme, which was previously used in \cite{chamroukhi_time_2009}. In particular, we utilize a simple coordinate gradient descent algorithm in the maximization steps. Additionally, we select the convergence criterion of $\epsilon = 10^{-6}$ and run a maximum of 2000 EM iterations.

\textbf{Initialization.} 
For each $k\in\{k_*,k_*+1\}$, we randomly distribute elements of the set $\{1, 2, ..., k\}$ into $k_*$ different Voronoi cells $\mathcal{C}_1, \mathcal{C}_2, \ldots, \mathcal{C}_{k_*}$, each contains at least one element. Moreover, we repeat this process for each replication. Subsequently, for each $j\in[k_*]$, we initialize parameters $\beta_{1i}$ by sampling from a Gaussian distribution centered around its true counterpart $\beta^*_{1j}$ with a small variance, where $i\in \mathcal{C}_j$. Other parameters $\beta_{0i},a_i,b_i,\sigma_i$ are also initialized in a similar fashion.


\subsection{Exact-specified Settings}
Under the exact-specified settings, we 
conduct 40 sample generations for each configuration, across a spectrum of 200 different sample sizes $n$ ranging from $10^2$ to $10^5$. It can be seen from Figure~\ref{fig:exactplot} that the MLE $\widehat{G}_n$ empirically converges to the true mixing measure $G_*$ under the Voronoi metric $\mathcal{D}_1$ at the rate of order $\widetilde{\mathcal{O}}(n^{-1/2})$, which perfectly matches the theoretical parametric convergence rate established in Theorem~\ref{theorem:exact_fitted_MLE}.
\begin{figure}[ht]
     \centering
     \begin{subfigure}[b]{0.49\textwidth}
         \centering
         \includegraphics[width=\textwidth]{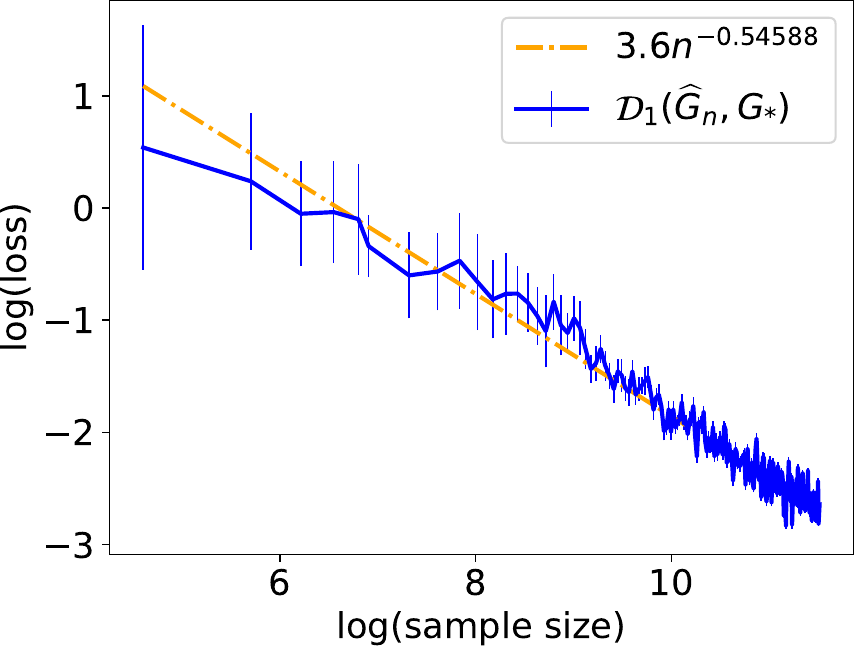}
    \caption{Exact-specified settings with $K = 1$}
    \label{fig:exactplot}
     \end{subfigure}
     \hfill
     \begin{subfigure}[b]{0.49\textwidth}
         \centering
         \includegraphics[width=\textwidth]{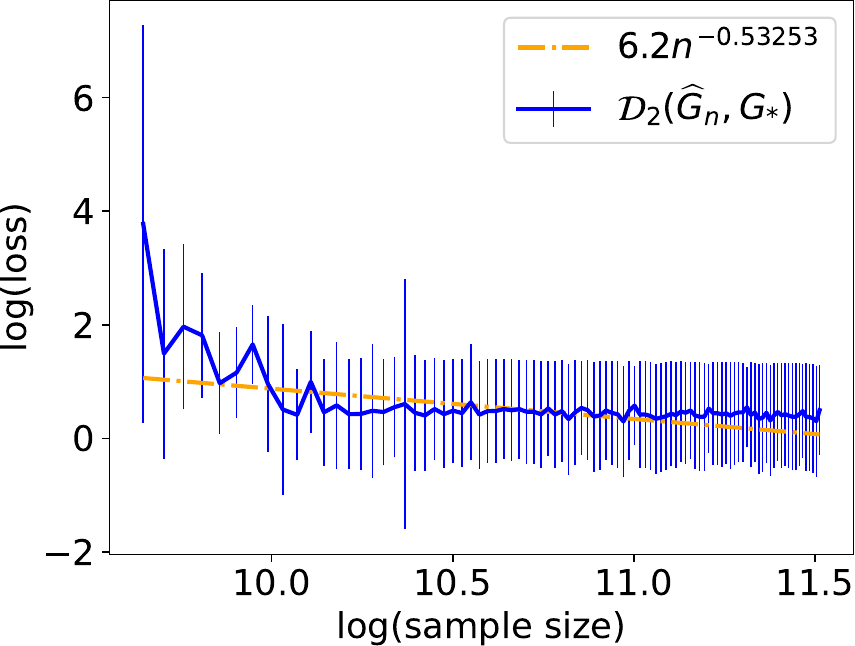}
            \caption{Over-specified settings with $K = 1$ and $\overline{K} = 2$}
            \label{fig:overplot}
     \end{subfigure}
        \caption{Log-log scaled plots illustrating simulation results under the exact-specified and the over-specified settings. We analyze the MLE $\widehat{G}_n$ across 40 independent samples, spanning sample sizes from $10^2$ to $10^5$. The blue curves depict the mean discrepancy between the MLE $\widehat{G}_n$ and the true mixing measure $G_*$, accompanied by error bars signifying two empirical standard deviations under the exact-specified settings. Additionally, an orange dash-dotted line represents the least-squares fitted linear regression line for these data points.}
        \label{fig:plots}
\end{figure}

\subsection{Over-specified Settings}
Under the over-specified settings, we continue to generate 40 samples of size $n$ for each setting, given 100 different choices of sample size $n \in [10^2, 10^5]$. As discussed in Section~\ref{sec:over_specified}, to guarantee the convergence of density estimation to the true density, we need to select $\overline{K}=2$ experts from the density estimation. As far as we know, existing works, namely \cite{pmlr-v99-kwon19a,pmlr-v108-kwon20a, pmlr-v130-kwon21b}, only focus on the global convergence of the EM algorithm for parameter estimation under the input-free gating MoE, while that under the top-K sparse softmax gating MoE has remained poorly understood. 
Additionally, it is worth noting that the sample size must be sufficiently large so that the empirical convergence rate of the MLE returned by the EM algorithm aligns with the theoretical rate of order $\widetilde{\mathcal{O}}(n^{-1/2})$ derived in Theorem~\ref{theorem:over_fitted_MLE}.

\section{Additional Results}
\label{appendix:additional_results}

In this appendix, we study the convergence rates of parameter estimation under the model~\eqref{eq:conditional_form} when $f$ is a probability density function of an arbitrary location-scale distribution. For that purpose, we first characterize the family of probability density functions of location-scale distributions
\begin{align}
    \mathcal{F}:=\{f(Y|h_1(X,a,b),\sigma):(a,b,\sigma)\in\Theta\},
\end{align}
where $h_1(X,a,b):=a^{\top}X+b$ stands for the location, $\sigma$ denotes the scale and $\Theta$ is a compact subset of $\mathbb{R}^d\times\mathbb{R}\times\mathbb{R}_+$, based on the following notion of strong identifiability, which was previously studied in \cite{manole22a} and \cite{Ho-Nguyen-EJS-16}:

\begin{definition}[Strong Identifiability]
    We say that the family $\mathcal{F}$ is strongly identifiable if the probability density function $f(Y|h_1(X,a,b),\sigma)$ is twice differentiable w.r.t its parameters and the following assumption holds true: 

For any $k\geq 1$ and $k$ pairwise different tuples $(a_1,b_1,\sigma_1),\ldots,(a_k,b_k,\sigma_k)\in\Theta$, if there exist real coefficients $\alpha^{(i)}_{\ell_1,\ell_2}$, for $i\in[k_*]$ and $0\leq \ell_1+\ell_2\leq 2$, such that 
\begin{align*}
\sum_{i=1}^{k}\sum_{\ell_1+\ell_2=0}^{2} \alpha^{(i)}_{\ell_1,\ell_2}\cdot\dfrac{\partial^{\ell_1+\ell_2}f}{\partial h_1^{\ell_1}\partial \sigma^{\ell_2}}(Y|h_1(X,a_i,b_i),\sigma(X,\sigma_i))=0,
\end{align*}
for almost surely $(X,Y)$, then we obtain that $\alpha^{(i)}_{\ell_1,\ell_2}=0$ for any $i\in[k_*]$ and $0\leq \ell_1+\ell_2\leq 2$.
\end{definition}


\begin{example}
    The families of Student's t-distributions and Laplace distributions are strongly identifiable, while the family of location-scale Gaussian distributions is not.
\end{example}

In high level, we need to establish the Total Variation lower bound $\mathbb{E}_X[V(g_{G}(\cdot|X),g_{G_*}(\cdot|X))]\gtrsim \mathcal{D}(G,G_*)$ for any $G\in\mathcal{O}_k(\Omega)$. Then, this bound together with the density estimation rate in Theorem~\ref{theorem:exact_fitted_density} (resp. Theorem~\ref{theorem:over_fitted_density}) leads to the parameter estimation rates in Theorem~\ref{theorem:exact_fitted_MLE} (resp. Theorem~\ref{theorem:over_fitted_MLE}). Here, the key step is to decompose the difference $g_{\widehat{G}_n}(Y|X)-g_{G_*}(Y|X)$ into a combination of linearly independent terms using Taylor expansions. Therefore, we have to involve the above notion of strong identifiability, and separate our convergence analysis based on that notion.  

Subsequently, since the convergence rates of maximum likelihood estimation when $\mathcal{F}$ is a family of location-scale Gaussian distributions, which is not strongly identifiable, have already been studied in Section~\ref{sec:exact_specified} and Section~\ref{sec:over_specified}, we will focus on the scenario when the family $\mathcal{F}$ is strongly identifiable in the sequel. Under that assumption, the density $f(Y|h_1,\sigma)$ is twice differentiable in $(h_1,\sigma)$, therefore, it is also Lipschitz continuous. As a consequence, the density estimation rates under both the exact-specified and over-specified in Theorem~\ref{theorem:exact_fitted_density} and Theorem~\ref{theorem:over_fitted_density} still hold true. Therefore, we aim to establish the parameter estimation rates under those settings in Appendix~\ref{appendix:strongly_exact} and Appendix~\ref{appendix:strongly_over}, respectively.


\subsection{Exact-specified Settings}
\label{appendix:strongly_exact}
In this appendix, by using the Voronoi loss function $\mathcal{D}_1(G,G_*)$ defined in equation~\eqref{eq:exact_Voronoi_loss}, we demonstrate in the following theorem that the rates for estimating true parameters $\exp(\beta^*_{0i}),\beta^*_{1i},a^*_i,b^*_i,\sigma^*_i$ are of parametric order $\widetilde{\mathcal{O}}(n^{-1/2})$, which totally match those in Theorem~\ref{theorem:exact_fitted_MLE}.
\begin{theorem}
    \label{theorem:strongly_exact}
    Under the exact-specified settings, if the family $\mathcal{F}$ is strongly identifiable, then the Hellinger lower bound $\bbE_X[h(g_{G}(\cdot|X),g_{G_*}(\cdot|X))]\gtrsim \mathcal{D}_1(G,G_*)$ holds for any mixing measure $G\in\mathcal{E}_{k_*}(\Omega)$. Consequently, we can find a universal constant $C_3>0$ depending only on $G_*$, $\Omega$ and $K$ such that
    \begin{align*}
        \bbP\Big(\mathcal{D}_1(\widehat{G}_n,G_*)> C_3\sqrt{\log(n)/n}\Big)\lesssim n^{-c_3},
    \end{align*}
    where $c_3>0$ is a universal constant that depends only on $\Omega$.  
\end{theorem}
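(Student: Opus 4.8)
The plan is to mirror the proof of Theorem~\ref{theorem:exact_fitted_MLE} in Appendix~\ref{appendix:exact_fitted_MLE}, making only the modifications needed to accommodate a general strongly identifiable family $\mathcal{F}$ in place of the location-scale Gaussian family. Since the density estimation rate in Theorem~\ref{theorem:exact_fitted_density} already holds whenever $f$ is Lipschitz (hence in particular under strong identifiability, as noted in the excerpt), the only new work is the Hellinger lower bound $\bbE_X[h(g_G(\cdot|X),g_{G_*}(\cdot|X))]\gtrsim\mathcal{D}_1(G,G_*)$; the passage from this bound to the stated high-probability rate is then immediate by combining it with Theorem~\ref{theorem:exact_fitted_density}, exactly as in Appendix~\ref{appendix:exact_fitted_MLE}.

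First I would reduce to the Total Variation lower bound $\bbE_X[V(g_G(\cdot|X),g_{G_*}(\cdot|X))]\gtrsim\mathcal{D}_1(G,G_*)$ (using $h\geq V$) and split it into the local part~\eqref{eq:local_part} and global part~\eqref{eq:global_part} around some radius $\varepsilon'>0$. The global part goes through verbatim: it uses only compactness of $\Omega$, Fatou's lemma, and the identifiability of the model (Proposition~\ref{prop:identifiable}), none of which depend on the Gaussian assumption. For the local part I would argue by contradiction, taking a sequence $G_n\in\mathcal{E}_{k_*}(\Omega)$ with $\mathcal{D}_1(G_n,G_*)\to 0$ and $\bbE_X[V(g_{G_n},g_{G_*})]/\mathcal{D}_1(G_n,G_*)\to 0$. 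As before, the Voronoi cells stabilize to singletons $\mathcal{C}_j=\{j\}$, so $\omega^n_j\to\omega^*_j$; Lemma~\ref{lemma:partition_exact} still applies and matches the input partitions $\mathcal{X}^n_\ell=\mathcal{X}^*_\ell$; and on a region where $\{\ell_1,\dots,\ell_K\}=[K]$ we write $H_n:=[\sum_{i=1}^K\exp((\beta^*_{1i})^\top X+\beta^*_{0i})]\cdot[g_{G_n}(Y|X)-g_{G_*}(Y|X)]$ and Taylor-expand to first order. The key structural difference is in Step~1: for a general $\mathcal{F}$ we do not have the Gaussian identity $\partial_\sigma f=\tfrac12\partial_{h_1}^2 f$, so the derivative terms $\tfrac{\partial^{\ell_1+\ell_2}f}{\partial h_1^{\ell_1}\partial\sigma^{\ell_2}}$ for $0\leq\ell_1+\ell_2\leq 1$ (together with an $h_1$-derivative coming from the $\beta_1$ part) stay separate rather than collapsing onto a single $\partial_{h_1}$ tower. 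Accordingly the coefficients $U^n_{i,\eta_1,\eta_2}$ are replaced by analogous first-order coefficients attached to $X^{\gamma}\exp((\beta^*_{1i})^\top X)\cdot\tfrac{\partial^{\ell_1+\ell_2}f}{\partial h_1^{\ell_1}\partial\sigma^{\ell_2}}(Y|(a^*_i)^\top X+b^*_i,\sigma^*_i)$.

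Step~2 (non-vanishing coefficients) is essentially unchanged: if every ratio $U^n_{i,\cdot}/\mathcal{D}_1(G_n,G_*)$ vanished, summing over the first-order indices reconstructs $\mathcal{D}_1(G_n,G_*)/\mathcal{D}_1(G_n,G_*)\to 0$, a contradiction. Step~3 (Fatou's contradiction) also follows the same template: after normalizing by $m_n\mathcal{D}_1(G_n,G_*)$ and passing to the limit, one gets a vanishing linear combination of the functions $X^{\gamma}\exp((\beta^*_{1i})^\top X)\cdot\tfrac{\partial^{\ell_1+\ell_2}f}{\partial h_1^{\ell_1}\partial\sigma^{\ell_2}}(Y|(a^*_i)^\top X+b^*_i,\sigma^*_i)$ and $X^\gamma\exp((\beta^*_{1i})^\top X) g_{G_*}(Y|X)$; using that $\beta^*_{11},\dots,\beta^*_{1K}$ are pairwise distinct strips off the $X^\gamma\exp((\beta^*_{1i})^\top X)$ factors (they form a linearly independent set, and the coefficients are polynomials in the bounded $X$), and then the strong identifiability of $\mathcal{F}$ — applied at the $K$ pairwise distinct tuples $(a^*_i,b^*_i,\sigma^*_i)$, with $g_{G_*}(Y|X)$ absorbed by noting it is itself a positive combination of the $f$'s — forces all coefficients to be zero, contradicting the conclusion of Step~2.

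The main obstacle, and the only genuinely new point relative to Theorem~\ref{theorem:exact_fitted_MLE}, is making the linear-independence argument in Step~3 go through cleanly: one must verify that after removing the $\exp((\beta^*_{1i})^\top X)$ gating factors, the remaining system matches exactly the hypothesis of the strong identifiability definition (pairwise distinct tuples, derivative orders $0\leq\ell_1+\ell_2\leq 2$), including the extra term $g_{G_*}(Y|X)$ that arises from differentiating the softmax denominator; handling that term requires expanding $g_{G_*}$ over the top-$K$ experts and checking the combined collection of $3K+$ functions is still covered by strong identifiability (this is where the order-$2$ allowance in the definition is used). Everything else is a routine transcription of the Gaussian argument.
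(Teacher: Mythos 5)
Your proposal follows essentially the same route as the paper's proof in Appendix D.3.1: reduce to a Total Variation lower bound, split into local and global parts, handle the global part via compactness and identifiability, and for the local part run the same three-step contradiction argument (Taylor expansion of $H_n$, non-vanishing coefficients, Fatou) with the only substantive change being that the Gaussian heat-equation identity $\partial_\sigma f = \tfrac12\partial_{h_1}^2 f$ is no longer available, so the $\sigma$-derivative index $\eta_3$ is kept separate from the $h_1$-derivative index $\eta_2$ (the paper uses the index set $\mathcal{I}_{\eta_1,\eta_2,\eta_3}$ in place of $\mathcal{J}_{\eta_1,\eta_2}$), and the final linear-independence step is supplied by the strong-identifiability hypothesis rather than by explicit Gaussian computations. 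One small imprecision: in the exact-specified case the Taylor expansion is first order, so the derivative orders $\eta_2+\eta_3$ that actually carry nonzero coefficients are at most $1$; the order-$2$ allowance in the strong-identifiability definition is genuinely needed in the over-specified companion result (Theorem~\ref{theorem:strongly_over}), not here. This does not affect correctness since the definition's hypothesis at order $2$ subsumes the order-$1$ statement you need, but it slightly mischaracterizes where the full strength of the definition is used.
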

Proof of Theorem~\ref{theorem:strongly_exact} is in Appendix~\ref{appendix:proof_strongly_exact}.

\subsection{Over-specified Settings}
\label{appendix:strongly_over}
In this appendix, we capture the convergence rates of parameter estimation under the over-specified settings when the family $\mathcal{F}$ is strongly identifiable. 

\textbf{Voronoi metric.} It is worth noting that when $\mathcal{F}$ is strongly identifiable, the interaction among expert parameters in the second PDE~\eqref{eq:PDE} no longer holds true. As a result, it is not necessary to involve the solvability of the system~\eqref{eq:system} in the formulation of the Voronoi loss as in equation~\eqref{eq:overfit_Voronoi_loss}. Instead, let us consider the Voronoi loss function $\mathcal{D}_3(G,G_*)$ defined as 
\begin{align}
    \label{eq:strongly_over_loss}
    &\mathcal{D}_3(G,G_*):=\max_{\{\ell_j\}_{j=1}^{K}\subset[k_*]}\Bigg\{\sum_{\substack{j\in[K],\\ |\mathcal{C}_{\ell_j}|=1}}\sum_{i\in\mathcal{C}_{\ell_j}}\exp(\beta_{0i})\Big[\|\Delta\beta_{1i\ell_j}\|+\|\Delta a_{i\ell_j}\|+|\Delta b_{i\ell_j}|+|\Delta \sigma_{i\ell_j}|\Big]\nonumber\\
    &\hspace{3.6cm}+\sum_{\substack{j\in[K],\\ |\mathcal{C}_{\ell_j}|>1}}\sum_{i\in\mathcal{C}_{\ell_j}}\exp(\beta_{0i})\Big[\|\Delta\beta_{1i\ell_j}\|^{2}+\|\Delta a_{i\ell_j}\|^{2}+|\Delta b_{i\ell_j}|^{2}+|\Delta \sigma_{i\ell_j}|^{2}\Big]\nonumber\\
    &\hspace{7.5cm}+\sum_{j=1}^{K}\Big|\sum_{i\in\mathcal{C}_{\ell_j}}\exp(\beta_{0i})-\exp(\beta^*_{0\ell_j})\Big|\Bigg\},
\end{align}
for any mixing measure $G\in\mathcal{O}_k(\Omega)$. Equipped with loss function, we are ready to illustrate the convergence behavior of maximum likelihood estimation in the following theorem:
\begin{theorem}
    \label{theorem:strongly_over}
    Under the over-specified settings, if the family $\mathcal{F}$ is strongly identifiable, then the Hellinger lower bound $\bbE_X[h(g_{G}(\cdot|X),g_{G_*}(\cdot|X))]\gtrsim \mathcal{D}_2(G,G_*)$ holds for any mixing measure $G\in\mathcal{O}_{k}(\Omega)$.
    As a consequence, we can find a universal constant $C_4>0$ depending only on $G_*$, $\Omega$ and $K$ such that
    \begin{align*}
        \bbP\Big(\mathcal{D}_3(\widehat{G}_n,G_*)> C_4\sqrt{\log(n)/n}\Big)\lesssim n^{-c_4},
    \end{align*}
    where $c_4>0$ is a universal constant that depends only on $\Omega$.  
\end{theorem}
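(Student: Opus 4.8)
The plan is to mirror the proof of Theorem~\ref{theorem:over_fitted_MLE} almost verbatim, with two changes: the loss $\mathcal{D}_2$ is replaced by $\mathcal{D}_3$ from equation~\eqref{eq:strongly_over_loss}, and the Taylor expansions are taken only up to order $2$ rather than order $2\overline{r}(|\mathcal{C}_j|)$ because strong identifiability removes the heat-equation PDE in equation~\eqref{eq:PDE}. Since $h\ge V$, it suffices to establish the Total Variation lower bound $\bbE_X[V(\overline{g}_{G}(\cdot|X),g_{G_*}(\cdot|X))]\gtrsim \mathcal{D}_3(G,G_*)$ for every $G\in\mathcal{O}_k(\Omega)$; combined with the density-estimation rate of Theorem~\ref{theorem:over_fitted_density} and the $M$-estimation argument already used for Theorem~\ref{theorem:over_fitted_MLE}, this yields the claimed concentration bound $\bbP(\mathcal{D}_3(\widehat{G}_n,G_*)>C_4\sqrt{\log(n)/n})\lesssim n^{-c_4}$. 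As usual the lower bound splits into a global and a local part, and the global part is handled exactly as in Appendix~\ref{appendix:over_fitted_MLE}: a violating sequence would have a subsequential limit $G'\in\mathcal{O}_k(\Omega)$ with $\overline{g}_{G'}=g_{G_*}$ almost surely, hence $G'\equiv G_*$ by Proposition~\ref{prop:identifiable}, contradicting $\mathcal{D}_3(G',G_*)>\varepsilon'$.

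For the local part I would argue by contradiction: assume a sequence $G_n=\sum_i\exp(\beta^n_{0i})\delta_{(\beta^n_{1i},a^n_i,b^n_i,\sigma^n_i)}\in\mathcal{O}_k(\Omega)$ with $\mathcal{D}_3(G_n,G_*)\to0$ and $\bbE_X[V(\overline{g}_{G_n},g_{G_*})]/\mathcal{D}_3(G_n,G_*)\to0$. Form the Voronoi cells $\mathcal{C}_j$ (stable in $n$ WLOG); $\mathcal{D}_3(G_n,G_*)\to0$ forces $(\beta^n_{1i},a^n_i,b^n_i,\sigma^n_i)\to(\beta^*_{1j},a^*_j,b^*_j,\sigma^*_j)$ for $i\in\mathcal{C}_j$ and $\sum_{i\in\mathcal{C}_j}\exp(\beta^n_{0i})\to\exp(\beta^*_{0j})$. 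Partitioning $\mathcal{X}$ as in Appendix~\ref{appendix:over_fitted_MLE} and invoking Lemma~\ref{lemma:partition_over} matches the top-$K$ region of $g_{G_*}$ with the top-$\overline{K}$ region of $\overline{g}_{G_n}$ (if no union of Voronoi cells realizes the selected index set, the ratio fails to vanish, a contradiction). On that region I would run the three-step scheme on $H_n:=\big[\sum_{j=1}^{K}\exp((\beta^*_{1j})^\top X+\beta^*_{0j})\big]\cdot[\overline{g}_{G_n}(Y|X)-g_{G_*}(Y|X)]$. \textbf{Step 1:} Taylor-expand $H_n$ to order $1$ over cells with $|\mathcal{C}_j|=1$ and to order $2$ over cells with $|\mathcal{C}_j|>1$, collecting coefficients of $X^{\eta_1}\exp((\beta^*_{1i})^\top X)\,\partial_{h_1}^{\ell_1}\partial_\sigma^{\ell_2}f$ and of $X^{\gamma}\exp((\beta^*_{1i})^\top X)g_{G_n}$ with $0\le\ell_1+\ell_2\le 2$; the only linear dependence among these basis functions is the \emph{first} PDE in equation~\eqref{eq:PDE}, which identifies the $\partial_a$ direction with $\partial_{\beta_1}\partial_b$, so I group exactly the coefficients forced by that identity while $\partial_\sigma f,\partial_\sigma^2 f,\partial_{h_1}^2 f$ remain independent. \textbf{Step 2:} show not all grouped coefficients, divided by $\mathcal{D}_3(G_n,G_*)$, vanish—assuming they do yields $\mathcal{D}_3(G_n,G_*)/\mathcal{D}_3(G_n,G_*)\to0$. \textbf{Step 3:} normalize by the largest coefficient, apply Fatou's lemma to get $|\overline{g}_{G_n}-g_{G_*}|/\mathcal{D}_3(G_n,G_*)\to0$ a.s., deduce a nontrivial a.s.\ identity $\sum\tau_{i,\eta_1,\eta_2}X^{\eta_1}\exp((\beta^*_{1i})^\top X)\partial_{h_1}^{\ell_1}\partial_\sigma^{\ell_2}f+\sum\kappa_{i,\gamma}X^{\gamma}\exp((\beta^*_{1i})^\top X)g_{G_*}=0$, and apply strong identifiability of $\mathcal{F}$ plus the polynomial-in-$X$ argument and pairwise distinctness of $(a^*_i,b^*_i,\sigma^*_i)$ to force every coefficient to $0$—a contradiction.

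The main obstacle is Step 2, specifically disentangling the interaction coming from the first PDE. The coefficient of $X^{e_t}\exp((\beta^*_{1i})^\top X)\partial_{h_1}f$ over a multi-element cell receives both the first-order $\Delta a$ contribution (which can cancel within the cell, and hence is \emph{not} what $\mathcal{D}_3$ tracks) and the second-order $\Delta\beta_1\,\Delta b$ contribution, so one cannot read off $\Delta a$ directly. The fix is to first extract $\Delta\beta_{1i\ell_j}$, $\Delta b_{i\ell_j}$, $\Delta\sigma_{i\ell_j}$ from the pure quadratic coefficients—those of $X^{\eta_1}\exp((\beta^*_{1i})^\top X)f$ with $|\eta_1|=2$, of $\partial_{h_1}^2 f$, and of $\partial_\sigma^2 f$ (the last one relies crucially on strong identifiability, which is exactly what fails in the Gaussian case where $\partial_\sigma f=\tfrac12\partial_{h_1}^2 f$)—substitute them to annihilate the cross term, and then conclude that $\Delta a_{i\ell_j}$ is controlled via the coefficient of $X^{\eta_1}\partial_{h_1}^2 f$ with $|\eta_1|=2$; this shows each of the four quadratic pieces in $\mathcal{D}_3$ is matched, so the argument closes at Taylor order $2$ and $\mathcal{D}_3$ (with its quadratic exponents) is the tight loss, whereas in the genuinely Gaussian setting the combined effect of both PDEs would push the closing order up to $\overline{r}(|\mathcal{C}_j|)$ as in equation~\eqref{eq:system}. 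The remaining bookkeeping—Taylor remainder control, $\ell_1$–$\ell_2$ norm equivalence, and the $\beta^*_{1k_*}=\zerod$, $\beta^*_{0k_*}=0$ normalization—is routine and follows Appendix~\ref{appendix:over_fitted_MLE} line by line.
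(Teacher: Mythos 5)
Your proposal is correct and follows essentially the same route as the paper's proof in Appendix~\ref{appendix:proof_strongly_over}: reduce to a Total Variation lower bound, match the top-$K$ and top-$\overline{K}$ regions via Lemma~\ref{lemma:partition_over}, and run the three-step scheme with second-order Taylor expansions on multi-element Voronoi cells using the index set $\mathcal{I}_{\eta_1,\eta_2,\eta_3}$ that encodes the single surviving PDE grouping $\partial_a = \partial_{\beta_1}\partial_b$. The subtlety you flag—that the coefficient of $X^{e_t}\exp((\beta^*_{1j})^{\top}X)\partial_{h_1}f$ mixes the linear $\Delta a$ contribution with the quadratic $\Delta\beta_1\,\Delta b$ cross term—is resolved exactly as the paper does in Step~2, by reading $\|\Delta a\|^2$ off the coefficient with $(\eta_1,\eta_2,\eta_3)=(2e_t,2,0)$ of $X^{\eta_1}\exp((\beta^*_{1j})^{\top}X)\partial_{h_1}^2 f$, while the higher-degree cross terms (cubic or more in $\Delta$) are absorbed into the Taylor remainder and hence $o(\mathcal{D}_3(G_n,G_*))$.
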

Proof of Theorem~\ref{theorem:strongly_over} is in Appendix~\ref{appendix:proof_strongly_over}. Theorem~\ref{theorem:strongly_over} indicates that true parameters $\beta^*_{1i},a^*_i,b^*_i,\sigma^*_i$, which are fitted by a single component, share the same estimation rates of order $\widetilde{\mathcal{O}}(n^{-1/2})$ as those in Theorem~\ref{theorem:over_fitted_MLE}. By contrast, the estimation rates for true parameters $\beta^*_{1i},a^*_i,b^*_i,\sigma^*_i$, which are fitted by more than one component, are of order $\widetilde{\mathcal{O}}(n^{-1/4})$. Notably, these rates are no longer determined by the solvability of the system~\eqref{eq:system}. Thus, they are significantly faster than those in Theorem~\ref{theorem:over_fitted_MLE}. The main reason for this improvement is when $\mathcal{F}$ is strongly identifiable, the interaction among expert parameters via the second PDE in equation~\eqref{eq:PDE} does not occur.
\subsection{Proofs of Additional Results}
\label{appendix:proof_additional_results}

\subsubsection{Proof of Theorem~\ref{theorem:strongly_exact}}
\label{appendix:proof_strongly_exact}
Following from the result of Theorem~\ref{theorem:exact_fitted_density} and since the Hellinger distance is lower bounded by the Total Variation distance, i.e. $h\geq V$, it is sufficient to demonstrate for any mixing measure $G\in\mathcal{O}_k(\Theta)$ that
\begin{align*}
    \bbE_X[V(g_{G}(\cdot|X),g_{G_*}(\cdot|X))]\gtrsim \mathcal{D}_1(G,G_*),
\end{align*}
which is then respectively broken into local part and global part as follows:
\begin{align}
    \label{eq:strongly_local_part}
     \inf_{G\in\mathcal{E}_{k_*}(\Omega): \mathcal{D}_1(G,G_*)\leq\varepsilon'}\frac{\bbE_X[V(g_{G}(\cdot|X),g_{G_*}(\cdot|X))]}{\mathcal{D}_1(G,G_*)}>0,\\
     \label{eq:strongly_global_part}
     \inf_{G\in\mathcal{E}_{k_*}(\Omega): \mathcal{D}_1(G,G_*)>\varepsilon'}\frac{\bbE_X[V(g_{G}(\cdot|X),g_{G_*}(\cdot|X))]}{\mathcal{D}_1(G,G_*)}>0,
\end{align}
for some constant $\varepsilon'>0$. Subsequently, we will prove only the local part~\eqref{eq:strongly_local_part}, while the proof of the global part~\eqref{eq:strongly_global_part} can be done similarly to that in Appendix~\ref{appendix:exact_fitted_MLE}.

\textbf{Proof of claim~\eqref{eq:strongly_local_part}}: It is sufficient to show that
\begin{align*}
    \lim_{\varepsilon\to 0}\inf_{G\in\mathcal{E}_{k_*}(\Omega): \mathcal{D}_1(G,G_*)\leq\varepsilon}\frac{\bbE_X[V(g_{G}(\cdot|X),g_{G_*}(\cdot|X))]}{\mathcal{D}_1(G,G_*)}>0.
\end{align*}
Assume that this inequality does not hold, then since the number of experts $k_*$ is known in this case, there exists a sequence of mixing measure $G_n:=\sum_{i=1}^{k_*}\exp(\beta^n_{0i})\delta_{(\boin,\ain,\bin,\sigmain)}\in\mathcal{E}_{k_*}(\Omega)$ such that both $\mathcal{D}_1(G_n,G_*)$ and $\bbE_X[V(g_{G_n}(\cdot|X),g_{G_*}(\cdot|X))]/\mathcal{D}_1(G_n,G_*)$ approach zero as $n$ tends to infinity. 
Since $\mathcal{D}_1(G_n,G_*)\to 0$ as $n\to\infty$, each Voronoi cell contains only one element. Thus, we may assume WLOG that $\mathcal{C}_j=\{j\}$ for any $j\in[k_*]$, which implies that $(\bojn,\ajn,\bjn,\sigmajn)\to(\boj,\aj,\bj,\sigmaj)$ and $\exp(\bzjn)\to\exp(\bzj)$ as $n\to\infty$. WLOG, we assume that 
\begin{align*}
\mathcal{D}_1(G_n,G_*)=\sum_{i=1}^{K}\Big[\exp(\bzin)\Big(\|\Delta\beta^n_{1i}\|+\|\Delta \ain\|+\|\Delta\bin\|+\|\Delta\sigmain\|\Big)+\Big|\exp(\bzin)-\exp(\bzi)\Big|\Big],
\end{align*}
where we denote $\Delta\beta^n_{1i}:=\beta^n_{1i}-\beta^*_{1i}$, $\Delta a^n_i:=a^n_i-a^*_i$, $\Delta b^n_i:=b^n_i-b^*_i$ and $\Delta\sigma^n_i:=\sigma^n_i-\sigma^*_i$.

Subsequently, by arguing in the same fashion as in Appendix~\ref{appendix:exact_fitted_MLE}, we obtain that $\mathcal{X}^n_{\ell}=\mathcal{X}^*_{\ell}$, where
\begin{align*}
    \mathcal{X}^n_{\ell}:=\Big\{x\in\mathcal{X}:(\bojn)^{\top}x\geq (\beta^n_{1j'})^{\top}x:\forall j\in\{\ell_1,\ldots,\ell_K\},j'\in\{\ell_{K+1},\ldots,\ell_{k_*}\}\Big\},\\
    \mathcal{X}^*_{\ell}:=\Big\{x\in\mathcal{X}:(\boj)^{\top}x\geq (\beta^*_{1j'})^{\top}x:\forall j\in\{\ell_1,\ldots,\ell_K\},j'\in\{\ell_{K+1},\ldots,\ell_{k_*}\}\Big\},
\end{align*}
for any $\ell\in[q]$ for sufficiently large $n$. 

Let $\ell\in[q]$ such that $\{\ell_1,\ldots,\ell_K\}=\{1,\ldots,K\}$. Then, for almost surely $(X,Y)\in\mathcal{X}^*_{\ell}\times\mathcal{Y}$, we can rewrite the conditional densities $g_{G_n}(Y|X)$ and $g_{G_*}(Y|X)$ as
\begin{align*}
    g_{G_n}(Y|X)&=\sum_{i=1}^{K}\frac{\exp((\boin)^{\top}X+\bzin)}{\sum_{j=1}^{K}\exp((\bojn)^{\top}X+\bzjn)}\cdot f(Y|(\ain)^{\top}X+\bin,\sigmain),\\
    g_{G_*}(Y|X)&=\sum_{i=1}^{K}\frac{\exp((\boi)^{\top}X+\bzi)}{\sum_{j=1}^{K}\exp((\boj)^{\top}X+\bzj)}\cdot f(Y|(\ai)^{\top}X+\bi,\sigmai).
\end{align*}


Now, we break the rest of our arguments into three steps:

\textbf{Step 1 - Taylor expansion}:

In this step, we take into account  $H_n:=\Big[\sum_{i=1}^{K}\exp((\boi)^{\top}X+\bzi)\Big]\cdot[g_{G_n}(Y|X)-g_{G_*}(Y|X)]$. 
Note that the quantity $H_n$ can be represented as follows:
\begin{align*}
    &H_n=\sum_{i=1}^{K}\exp(\bzin)\Big[\exp((\boin)^{\top}X)f(Y|(\ain)^{\top}X+\bin,\sigmain)-\exp((\boi)^{\top}X)f(Y|(\ai)^{\top}X+\bi,\sigmai)\Big]\\
    &-\sum_{i=1}^{K}\exp(\bzin)\Big[\exp((\boin)^{\top}X)g_{G_n}(Y|X)-\exp((\boi)^{\top}X)g_{G_n}(Y|X)\Big]\\
    &+\sum_{i=1}^{K}\Big[\exp(\bzin)-\exp(\bzi)\Big]\Big[\exp((\boi)^{\top}X+\bzi)f(Y|(\ai)^{\top}X+\bi,\sigmai)-\exp((\boi)^{\top}X)g_{G_n}(Y|X)\Big].
\end{align*}
By applying the first-order Taylor expansion to the first term in the above representation, which is denoted by $A_n$, we get that
\begin{align*}
    A_n&=\sum_{i=1}^{K}\sum_{|\alpha|=1}\frac{\exp(\bzin)}{\alpha!}\cdot(\Delta\boin)^{\alpha_1}(\Delta \ain)^{\alpha_2}(\Delta\bin)^{\alpha_3}(\Delta\sigmain)^{\alpha_4}\\
    &\hspace{2cm}\times X^{\alpha_1+\alpha_2}\exp((\boi)^{\top}X)\cdot\frac{\partial^{|\alpha_2|+\alpha_3+\alpha_4}f}{\partial h_1^{|\alpha_2|+\alpha_3}\partial \sigma^{\alpha_4}}(Y|(\ai)^{\top}X+\bi,\sigmai)+R_1(X,Y),
\end{align*}
where $R_1(X,Y)$ is a Taylor remainder that satisfies $R_1(X,Y)/\mathcal{D}_1(X,Y)\to 0$ as $n\to\infty$. Let $\eta_1=\alpha_1+\alpha_2\in\mathbb{N}^d$, $\eta_2=|\alpha_2|+\alpha_3\in\mathbb{N}$ and $\eta_3=\alpha_4\in\mathbb{N}$, then we can rewrite $A_n$ as follows: 
\begin{align}
    A_n&=\sum_{i=1}^{K}\sum_{\eta_3=0}^{1}\sum_{|\eta_1|+\eta_2=1-\eta_3}^{2-\eta_3}\sum_{\alpha\in\mathcal{I}_{\eta_1,\eta_2,\eta_3}}\frac{\exp(\bzin)}{\alpha!}\cdot(\Delta\boin)^{\alpha_1}(\Delta \ain)^{\alpha_2}(\Delta\bin)^{\alpha_3}(\Delta\sigmain)^{\alpha_4}\nonumber\\
    \label{eq:strongly_An1_decompose}
    &\hspace{2cm}\times X^{\eta_1}\exp((\boi)^{\top}X)\cdot\frac{\partial^{\eta_2+\eta_3}f}{\partial h_1^{\eta_2}\partial \sigma^{\eta_3}}(Y|(\ai)^{\top}X+\bi,\sigmai)+R_1(X,Y),
\end{align}
where we define
\begin{align}
    \label{eq:strongly_index_set}
    \mathcal{I}_{\eta_1,\eta_2,\eta_3}:=\{(\alpha_i)_{i=1}^{4}\in\mathbb{N}^d\times\mathbb{N}^d\times\mathbb{N}\times\mathbb{N}:\alpha_1+\alpha_2=\eta_1,\ |\alpha_2|+\alpha_3=\eta_2, \ \alpha_4=\eta_3\}.
\end{align}
By arguing in a similar fashion for the second term in the representation of $H_n$, we also get that
\begin{align*}
    B_n:=-\sum_{i=1}^{K}\sum_{|\gamma|=1}\frac{\exp(\bzin)}{\gamma!}(\Delta\beta^n_{1i})\cdot X^{\gamma}\exp((\boin)^{\top}X)g_{G_n}(Y|X)+R_2(X,Y),
\end{align*}
where $R_2(X,Y)$ is a Taylor remainder such that $R_2(X,Y)/\mathcal{D}_{1}(G_n,G_*)\to0$ as $n\to\infty$. Putting the above results together, we rewrite the quantity $H_n$ as follows:
\begin{align}
    H_n&=\sum_{i=1}^{K}\sum_{\eta_3=0}^{1}\sum_{|\eta_1|+\eta_2=0}^{2-\eta_3}U^n_{i,\eta_1,\eta_2,\eta_3}\cdot X^{\eta_1}\exp((\boi)^{\top}X)\frac{\partial^{\eta_2+\eta_3}f}{\partial h_1^{\eta_2}\partial \sigma^{\eta_3}}(Y|(\ai)^{\top}X+\bi,\sigmai)\nonumber\\
    \label{eq:strongly_Hn_formulation}
    &+\sum_{i=1}^{K}\sum_{0\leq|\gamma|\leq 1}W^n_{i,\gamma}\cdot X^{\gamma}\exp((\boi)^{\top}X)g_{G_n}(Y|X) +R_1(X,Y)+R_2(X,Y),
\end{align}
in which we respectively define for each $i\in[K]$ that
\begin{align*}
    U^n_{i,\eta_1,\eta_2,\eta_3}&:=\sum_{\alpha\in\mathcal{I}_{\eta_1,\eta_2,\eta_3}}\frac{\exp(\bzin)}{\alpha!}\cdot(\Delta\boin)^{\alpha_1}(\Delta \ain)^{\alpha_2}(\Delta\bin)^{\alpha_3}(\Delta\sigmain)^{\alpha_4},\\
    W^n_{i,\gamma}&:=-\frac{\exp(\bzin)}{\gamma!}(\Delta\boin)^{\gamma},
\end{align*}
for any $(\eta_1,\eta_2,\eta_3)\neq (\zerod,0,0)$ and $|\gamma|\neq\zerod$. Additionally, $U^n_{i,\zerod,0,0}=-W^n_{i,\zerod}:=\exp(\bzin)-\exp(\bzi)$.

\textbf{Step 2 - Non-vanishing coefficients}:

Moving to the second step, we will show that not all the ratios $U^n_{i,\eta_1,\eta_2,\eta_3}/\mathcal{D}_{1}(G_n,G_*)$ tend to zero as $n$ goes to infinity. Assume by contrary that all of them approach zero when $n\to\infty$, then for $(\eta_1,\eta_2,\eta_3)=(\zerod,0,0)$, it follows that
\begin{align}
    \label{eq:strongly_first_limit}
    \frac{1}{\mathcal{D}_{1}(G_n,G_*)}\cdot\sum_{i=1}^{K}\Big|\exp(\bzin)-\exp(\bzi)\Big|=\sum_{i=1}^{K}\frac{|U^n_{j,\eta_1,\eta_2,\eta_3}|}{\mathcal{D}_{1}(G_n,G_*)}\to0.
\end{align}
Additionally, for tuples $(\eta_1,\eta_2,\eta_3)$ where $\eta_1\in\{e_1,e_2,\ldots,e_d\}$ with $e_j:=(0,\ldots,0,\underbrace{1}_{j-th},0,\ldots,0)$ and $\eta_2=\eta_3=0$, we get
\begin{align*}
    \frac{1}{\mathcal{D}_{1}(G_n,G_*)}\cdot\sum_{i=1}^{K}\exp(\bzin)\|\Delta\boin\|_1=\sum_{i=1}^{K}\sum_{\eta_1\in\{e_1,\ldots,e_d\}}\frac{|U^n_{j,\eta_1,0,0}|}{\mathcal{D}_{1}(G_n,G_*)}\to0.
\end{align*}
By using similar arguments, we end up with
\begin{align*}
     \frac{1}{\mathcal{D}_{1}(G_n,G_*)}\cdot\sum_{i=1}^{K}\exp(\bzin)\Big[\|\Delta\boin\|_1+\|\Delta\ain\|_1+|\Delta\bin|+|\Delta\sigmain|\Big]\to0.
\end{align*}
Due to the topological equivalence between norm-1 and norm-2, the above limit implies that
\begin{align}
    \label{eq:strongly_last_limit}
    \frac{1}{\mathcal{D}_{1}(G_n,G_*)}\cdot\sum_{i=1}^{K}\exp(\bzin)\Big[\|\Delta\boin\|+\|\Delta\ain\|+|\Delta\bin|+|\Delta\sigmain|\Big]\to0.
\end{align}
Combine equation~\eqref{eq:strongly_first_limit} with equation~\eqref{eq:strongly_last_limit}, we deduce that $\mathcal{D}_{1}(G_n,G_*)/\mathcal{D}_{1}(G_n,G_*)\to0$, which is a contradiction. Consequently, at least one among the ratios $U^n_{i,\eta_1,\eta_2,\eta_3}/\mathcal{D}_{1}(G_n,G_*)$ does not vanish as $n$ tends to infinity.

\textbf{Step 3 - Fatou's contradiction}:

Let us denote by $m_n$ the maximum of the absolute values of $U^n_{i,\eta_1,\eta_2,\eta_3}/\mathcal{D}_1(G_n,G_*)$ and $W^n_{i,\gamma}/\mathcal{D}_1(G_n,G_*)$. It follows from the result achieved in Step 2 that $1/m_n\not\to\infty$. 
 
Recall from the hypothesis that $\mathbb{E}_X[V(g_{G_n}(\cdot|X),g_{G_*}(\cdot|X))]/\mathcal{D}_1(G_n,G_*)\to0$ as $n\to\infty$. Thus, by the Fatou's lemma, we have
\begin{align*}
    0=\lim_{n\to\infty}\dfrac{\mathbb{E}_X[V(g_{G_n}(\cdot|X),g_{G_*}(\cdot|X))]}{\mathcal{D}_1(G_n,G_*)}=\frac{1}{2}\cdot\int\liminf_{n\to\infty}\frac{|g_{G_n}(Y|X)-g_{G_*}(Y|X)|}{\mathcal{D}_1(G_n,G_*)}\dint X\dint Y.
\end{align*}
This result indicates that $|g_{G_n}(Y|X)-g_{G_*}(Y|X)|/\mathcal{D}_1(G_n,G_*)$ tends to zero as $n$ goes to infinity for almost surely $(X,Y)$. As a result, it follows that
\begin{align*}
    \lim_{n\to\infty}\frac{H_n}{m_n\mathcal{D}(G_n,G_*)}=\Big[\sum_{i=1}^{K}\exp((\boi)^{\top}X+\bzi)\Big]\cdot\lim_{n\to\infty}\frac{|g_{G_n}(Y|X)-g_{G_*}(Y|X)|}{m_n\mathcal{D}_{1}(G_n,G_*)}=0.
\end{align*}
Next, let us denote $U^n_{i,\eta_1,\eta_2,\eta_3}/[m_n\mathcal{D}_{1}(G_n,G_*)]\to\tau_{i,\eta_1,\eta_2,\eta_3}$ and $W^n_{i,\gamma}/[m_n\mathcal{D}_{1}(G_n,G_*)]\to\kappa_{i,\gamma}$ with a note that at least one among them is non-zero. From the formulation of $H_n$ in equation~\eqref{eq:strongly_Hn_formulation}, we deduce that
\begin{align*}
    \sum_{i=1}^{K}\sum_{\eta_3=0}^{1}\sum_{|\eta_1|+\eta_2=0}^{2-\eta_3}\tau_{i,\eta_1,\eta_2,\eta_3}\cdot X^{\eta_1}\exp((\boi)^{\top}X)\frac{\partial^{\eta_2+\eta_3}f}{\partial h_1^{\eta_2}\partial \sigma^{\eta_3}}(Y|(\ai)^{\top}X+\bi,\sigmai)\nonumber\\
    +\sum_{i=1}^{K}\sum_{0\leq|\gamma|\leq 1}\kappa_{i,\gamma}\cdot X^{\gamma}\exp((\boi)^{\top}X)g_{G_*}(Y|X)=0,
\end{align*}
for almost surely $(X,Y)$. This equation is equivalent to
\begin{align*}
    \sum_{|\eta_1|=0}^{1}\Big[\sum_{i=1}^{K}\sum_{\eta_2+\eta_3=0}^{2-|\eta_1|}\tau_{i,\eta_1,\eta_2,\eta_3}\exp((\boi)^{\top}X)&\frac{\partial^{\eta_2+\eta_3}f}{\partial h_1^{\eta_2}\partial \sigma^{\eta_3}}(Y|(\ai)^{\top}X+\bi,\sigmai)\\
    &+\kappa_{i,\eta_1}\exp((\boi)^{\top}X)g_{G_*}(Y|X)\Big]
    \times X^{\eta_1}=0,
\end{align*}
for almost surely $(X,Y)$. It is clear that the left hand side of the above equation is a polynomial of $X$ belonging to the compact set $\mathcal{X}$. As a result, we get that
\begin{align*}
    \sum_{i=1}^{K}\sum_{\eta_2+\eta_3=0}^{2-|\eta_1|}\tau_{i,\eta_1,\eta_2,\eta_3}\exp((\boi)^{\top}X)&\frac{\partial^{\eta_2+\eta_3}f}{\partial h_1^{\eta_2}\partial \sigma^{\eta_3}}(Y|(\ai)^{\top}X+\bi,\sigmai)\\
    &+\kappa_{i,\eta_1}g_{G_*}(Y|X)\exp((\boi)^{\top}X)=0,
\end{align*}
for any $i\in[K]$, $0\leq|\eta_1|\leq 1$ and almost surely $(X,Y)$. Since $(a^*_{1},b^*_{1},\sigma^*_{1}),\ldots,(a^*_{K},b^*_{K},\sigma^*_{K})$ have pair-wise distinct values and the family $\mathcal{F}$ is strongly identifiable, the set
\begin{align*}
    \Big\{\frac{\partial^{\eta_2+\eta_3} f}{\partial h_1^{\eta_2}\partial \sigma^{\eta_3}}(Y|(\ai)^{\top}X+\bi,\sigmai):i\in[K], \ 0\leq\eta_2+\eta_3\leq 2-|\eta_1|\Big\}
\end{align*}
is linearly independent w.r.t $(X,Y)$. Consequently, we obtain that $\tau_{i,\eta_1,\eta_2,\eta_3}=\kappa_{i,\eta_1}=0$ for any $i\in[K]$, $0\leq|\eta_1|\leq 1$ and $0\leq\eta_2+\eta_3\leq 2-|\eta_1|$, which contradicts the fact that at least one among these terms is different from zero.

Hence, we reach the desired conclusion.

\subsubsection{Proof of Theorem~\ref{theorem:strongly_over}}
\label{appendix:proof_strongly_over}

Similar to the proof of Theorem~\ref{theorem:strongly_exact} in Appendix~\ref{appendix:proof_strongly_exact}, our objective here is also to derive the local part of the following Total Variation lower bound:
\begin{align*}
    \bbE_X[V(\overline{g}_{G}(\cdot|X),g_{G_*}(\cdot|X))]\gtrsim\mathcal{D}_{3}(G,G_*),
\end{align*}
for any $G\in\mathcal{O}_k(\Theta)$. In particular, we aim to show that
\begin{align}
    \label{eq:strongly_local_over_fitted}
    \lim_{\varepsilon\to0}\inf_{G\in\mathcal{O}_{k}(\Theta):\mathcal{D}_3(G,G_*)\leq\varepsilon}\frac{\bbE_X[V(\overline{g}_{G}(\cdot|X),g_{G_*}(\cdot|X))]}{\mathcal{D}_2(G,G_*)}>0.
\end{align}
Assume that the above claim does not hold true, then we can find a sequence of mixing measures  $G_n:=\sum_{i=1}^{k_n}\exp(\bzin)\delta_{(\boin,\ain,\bin,\sigmain)}\in\mathcal{O}_k(\Omega)$ such that both $\mathcal{D}_3(G_n,G_*)$ and $\bbE_X[V(\overline{g}_{G_n}(\cdot|X),g_{G_*}(\cdot|X))]/\mathcal{D}_3(G_n,G_*)$  vanish when $n$ goes to infinity. 
Then, it follows that for any $j\in[k_*]$, we have $\sum_{i\in\mathcal{C}_j}\exp(\bzin)\to\exp(\bzj)$ and $(\boin,\ain,\bin,\sigmain)\to(\boj,\aj,\bj,\sigmaj)$ for all $i\in\mathcal{C}_j$. WLOG, we may assume that 
\begin{align*}
    &\mathcal{D}_3(G_n,G_*)=\sum_{\substack{j\in[K],\\ |\mathcal{C}_j|>1}}\sum_{i\in\mathcal{C}_j}\exp(\bzin)\Big[\|\dboijn\|^{2}+\|\daijn\|^{2}+|\dbijn|^{2}+|\dsijn|^{2}\Big]\\
    &+\sum_{\substack{j\in[K],\\ |\mathcal{C}_j|=1}}\sum_{i\in\mathcal{C}_j}\exp(\bzin)\Big[\|\dboijn\|+\|\daijn\|+|\dbijn|+|\dsijn|\Big]+\sum_{j=1}^{K}\Big|\sum_{i\in\mathcal{C}_j}\exp(\bzin)-\exp(\bzj)\Big|.
\end{align*}

Subsequently, let $X\in\mathcal{X}^*_{\ell}$ for some $\ell\in[q]$ such that $\{\ell_1,\ldots,\ell_K\}=\{1,\ldots,K\}$, where
\begin{align*}
    \mathcal{X}^*_{\ell}:=\Big\{x\in\mathcal{X}:(\boj)^{\top}x\geq (\beta^*_{1j'})^{\top}x, \forall j\in\{\ell_1,\ldots,\ell_K\},j'\in\{\ell_{K+1},\ldots,\ell_{k_*}\}\Big\}.
\end{align*}
Then, for any $\overline{\ell}\in[\overline{q}]$, we denote $(\overline{\ell}_1,\ldots,\overline{\ell}_k)$ as a permutation of $(1,\ldots,k)$ and 
\begin{align*}
    \mathcal{X}^n_{\overline{\ell}}:=\Big\{x\in\mathcal{X}:(\boin)^{\top}x\geq (\beta^n_{1i'})^{\top}x, \forall i\in\{\overline{\ell}_1,\ldots,\overline{\ell}_{\overline{K}}\},i'\in\{\overline{\ell}_{\overline{K}+1},\ldots,\overline{\ell}_{k}\}\Big\}.
\end{align*}
If $\{\overline{\ell}_1,\ldots\overline{\ell}_{\overline{K}}\}\neq\mathcal{C}_{1}\cup\ldots\cup\mathcal{C}_{K}$ for any $\overline{\ell}\in[\overline{q}]$, then $V(\overline{g}_{G_n}(\cdot|X),g_{G_*}(\cdot|X))/\mathcal{D}_3(G_n,G_*)\not\to0$ as $n$ tends to infinity. This contradicts the fact that this term must approach zero. Therefore, we only need to consider the scenario when there exists $\overline{\ell}\in[\overline{q}]$ such that $\{\overline{\ell}_1,\ldots\overline{\ell}_{\overline{K}}\}=\mathcal{C}_{1}\cup\ldots\cup\mathcal{C}_{K}$. By using the same arguments as in Appendix~\ref{appendix:over_fitted_MLE}, we obtain that $X\in\mathcal{X}^n_{\overline{\ell}}$.

Then, we can represent the conditional densities $g_{G_*}(Y|X)$ and $g_{G_n}(Y|X)$ for any sufficiently large $n$ as follows:
\begin{align*}
    g_{G_*}(Y|X)&=\sum_{j=1}^{K}\frac{\exp((\boj)^{\top}X+\bzj)}{\sum_{j'=1}^{K}\exp((\beta^*_{1j'})^{\top}X+\beta^*_{0j'})}\cdot f(Y|(\aj)^{\top}X+\bj,\sigmaj),\\
     \overline{g}_{G_n}(Y|X)&=\sum_{j=1}^{K}\sum_{i\in\mathcal{C}_j}\frac{\exp((\boin)^{\top}X+\bzin)}{\sum_{j'=1}^{K}\sum_{i'\in\mathcal{C}_{j'}}\exp((\beta^n_{1i'})^{\top}X+\beta^n_{0i'})}\cdot f(Y|(\ain)^{\top}X+\bin,\sigmain).
\end{align*}


Now, we reuse the three-step framework in Appendix~\ref{appendix:proof_strongly_exact}. 

\textbf{Step 1 - Taylor expansion}:

Firstly, by abuse of notations, let us consider the quantity
\begin{align*}
    H_n:=\Big[\sum_{j=1}^{K}\exp((\boj)^{\top}X+\bzj)\Big]\cdot[\overline{g}_{G_n}(Y|X)-g_{G_*}(Y|X)].
\end{align*}
Similar to Step 1 in Appendix~\ref{appendix:exact-fitted}, we can express this term as 
\begin{align*}
    &H_n=\sum_{j=1}^K\sum_{i\in\mathcal{C}_j}\exp(\bzin)\Big[\exp((\boin)^{\top}X)f(Y|(\ain)^{\top}X+\bin,\sigmain)-\exp((\boj)^{\top}X)f(Y|(\aj)^{\top}X+\bj,\sigmaj)\Big]\\
    &-\sum_{j=1}^K\sum_{i\in\mathcal{C}_j}\exp(\bzin)\Big[\exp((\boin)^{\top}X)g_{G_n}(Y|X)-\exp((\boj)^{\top}X)g_{G_n}(Y|X)\Big]\\
    &+\sum_{j=1}^{K}\Big[\sum_{i\in\mathcal{C}_j}\exp(\bzin)-\exp(\bzj)\Big]\Big[\exp((\boj)^{\top}X)f(Y|(\ai)^{\top}X+\bi,\sigmai)-\exp((\boj)^{\top}X)g_{G_n}(Y|X)\Big]\\
    &:=A_n+B_n+E_n.
\end{align*}
Next, we proceed to decompose $A_n$ based on the cardinality of the Voronoi cells as follows:
\begin{align*}
    A_n&=\sum_{j:|\mathcal{C}_j|=1}\sum_{i\in\mathcal{C}_j}\exp(\bzin)\Big[\exp((\boin)^{\top}X)f(Y|(\ain)^{\top}X+\bin,\sigmain)-\exp((\boi)^{\top}X)f(Y|(\ai)^{\top}X+\bi,\sigmai)\Big]\\
    &+\sum_{j:|\mathcal{C}_j|>1}\sum_{i\in\mathcal{C}_j}\exp(\bzin)\Big[\exp((\boin)^{\top}X)f(Y|(\ain)^{\top}X+\bin,\sigmain)-\exp((\boi)^{\top}X)f(Y|(\ai)^{\top}X+\bi,\sigmai)\Big].
\end{align*}
By applying the Taylor expansions of first and second orders to the first and second terms of $A_n$, respectively, and following the derivation in equation~\eqref{eq:strongly_An1_decompose}, we arrive at
\begin{align*}
    A_n&=\sum_{j:|\mathcal{C}_j|=1}\sum_{i\in\mathcal{C}_j}\sum_{\eta_3=0}^{1}\sum_{|\eta_1|+\eta_2=1-\eta_3}^{2-\eta_3}\sum_{\alpha\in\mathcal{I}_{\eta_1,\eta_2,\eta_3}}\frac{\exp(\bzin)}{\alpha!}\cdot(\dboijn)^{\alpha_1}(\daijn)^{\alpha_2}(\dbijn)^{\alpha_3}(\Delta\sigmain)^{\alpha_4}\nonumber\\
    &\hspace{2cm}\times X^{\eta_1}\exp((\boj)^{\top}X)\cdot\frac{\partial^{\eta_2+\eta_3}f}{\partial h_1^{\eta_2}\partial \sigma^{\eta_3}}(Y|(\aj)^{\top}X+\bj,\sigmaj)+R_3(X,Y)\\
    &+\sum_{j:|\mathcal{C}_j|>1}\sum_{i\in\mathcal{C}_j}\sum_{\eta_3=0}^{2}\sum_{|\eta_1|+\eta_2=1-\mathbf{1}_{\{\eta_3>0\}}}^{4-\eta_3}\sum_{\alpha\in\mathcal{I}_{\eta_1,\eta_2,\eta_3}}\frac{\exp(\bzin)}{\alpha!}\cdot(\dboijn)^{\alpha_1}(\daijn)^{\alpha_2}(\dbijn)^{\alpha_3}(\dsijn)^{\alpha_4}\nonumber\\
    &\hspace{2cm}\times X^{\eta_1}\exp((\boj)^{\top}X)\cdot\frac{\partial^{\eta_2+\eta_3}f}{\partial h_1^{\eta_2}\partial \sigma^{\eta_3}}(Y|(\aj)^{\top}X+\bj,\sigmaj)+R_4(X,Y),
\end{align*}
where the set $\mathcal{I}_{\eta_1,\eta_2,\eta_3}$ is defined in equation~\eqref{eq:strongly_index_set} while $R_{i}(X,Y)$ is a Taylor remainder such that $R_{i}(X,Y)/\mathcal{D}_3(G_n,G_*)\to0$ as $n\to\infty$ for $i\in\{3,4\}$. Similarly, we also decompose $B_n$ as
\begin{align*}
    B_n&=-\sum_{j:|\mathcal{C}_j|=1}\sum_{i\in\mathcal{C}_j}\sum_{|\gamma|=1}\frac{\exp(\bzin)}{\gamma!}(\Delta\beta^n_{1i})\cdot X^{\gamma}\exp((\boj)^{\top}X)g_{G_n}(Y|X)+R_5(X,Y)\\
    &-\sum_{j:|\mathcal{C}_j|>1}\sum_{i\in\mathcal{C}_j}\sum_{1\leq|\gamma|\leq 2}\frac{\exp(\bzin)}{\gamma!}(\Delta\beta^n_{1i})\cdot X^{\gamma}\exp((\boj)^{\top}X)g_{G_n}(Y|X)+R_6(X,Y),
\end{align*}
where $R_5(X,Y)$ and $R_6(X,Y)$ are Taylor remainders such that their ratios over $\mathcal{D}_3(G_n,G_*)$ approach zero as $n\to\infty$. Subsequently, let us define
\begin{align*}
    S^n_{j,\eta_1,\eta_2,\eta_3}&:=\sum_{i\in\mathcal{C}_j}\sum_{\alpha\in\mathcal{I}_{\eta_1,\eta_2,\eta_3}}\frac{\exp(\bzin)}{\alpha!}\cdot(\dboijn)^{\alpha_1}(\daijn)^{\alpha_2}(\dbijn)^{\alpha_3}(\dsijn)^{\alpha_4},\\
    T^n_{j,\gamma}&:=-\sum_{i\in\mathcal{C}_j}\frac{\exp(\bzin)}{\gamma!}(\dboijn)^{\gamma} = - S^n_{j,\gamma,0,0},
\end{align*}
for any $(\eta_1,\eta_2,\eta_3)\neq (\zerod,0,0)$ and $|\gamma|\neq\zerod$, while for $(\eta_1,\eta_2,\eta_3)= (\zerod,0)$ we set
\begin{align*}
    S^n_{i,\zerod,0}=-T^n_{i,\zerod}:=\sum_{i\in\mathcal{C}_j}\exp(\bzin)-\exp(\bzi).
\end{align*}
As a consequence, it follows that
\begin{align}
    \label{eq:strongly_Hn_over_fitted}
    H_n&=\sum_{j=1}^{K}\sum_{\eta_3=0}^{1+\mathbf{1}_{\{|\mathcal{C}_j|>1\}}}\sum_{|\eta_1|+\eta_2=0}^{2(1+\mathbf{1}_{\{|\mathcal{C}_j|>1\}})-\eta_3}S^n_{j,\eta_1,\eta_2,\eta_3}\cdot X^{\eta_1}\exp((\boj)^{\top}X)\cdot\frac{\partial^{\eta_2}f}{\partial h_1^{\eta_2}}(Y|(\aj)^{\top}X+\bj,\sigmaj)\nonumber\\
    &+\sum_{j=1}^K\sum_{|\gamma|=0}^{1+\mathbf{1}_{\{|\mathcal{C}_j|>1\}}}T^n_{j,\gamma}\cdot X^{\gamma}\exp((\boj)^{\top}X)g_{G_n}(Y|X)+R_5(X,Y)+R_6(X,Y).
\end{align}

\textbf{Step 2 - Non-vanishing coefficients}:

In this step, we will prove by contradiction that at least one among the ratios $S^n_{j,\eta_1,\eta_2,\eta_3}/\mathcal{D}_3(G_n,G_*)$ does not converge to zero as $n\to\infty$. Assume that all these terms go to zero, then by employing arguments for deriving equations~\eqref{eq:strongly_first_limit} and \eqref{eq:strongly_last_limit}, we get that
\begin{align*}
    \frac{1}{\mathcal{D}_3(G_n,G_*)}\cdot\Big[\sum_{j=1}^K&\Big|\sum_{i\in\mathcal{C}_j}\exp(\bzin)-\exp(\bzj)\Big|\\
    &+\sum_{j:|\mathcal{C}_j|=1}\sum_{i\in\mathcal{C}_j}\exp(\bzin)\Big(\|\dboijn\|+\|\daijn\|+|\dbijn|+|\dsijn|\Big)\Big]\to0.
\end{align*}
Next, let $e_j:=(0,\ldots,0,\underbrace{1}_{j-th},0,\ldots,0)$ for any $j\in[d]$. Then, we have
\begin{align*}
    \frac{1}{\mathcal{D}_{3}(G_n,G_*)}\cdot\sum_{i=1}^{K}\exp(\bzin)\|\dboijn\|^2=\sum_{i=1}^{K}\sum_{\eta_1\in\{2e_1,\ldots,2e_d\}}\frac{|U^n_{j,\eta_1,0,0}|}{\mathcal{D}_{3}(G_n,G_*)}\to0.
\end{align*}
Similarly, we also get that
\begin{align*}
    \frac{1}{\mathcal{D}_{3}(G_n,G_*)}\cdot\sum_{i=1}^{K}\exp(\bzin)\|\dbijn\|^2\to0, \quad \frac{1}{\mathcal{D}_{3}(G_n,G_*)}\cdot\sum_{i=1}^{K}\exp(\bzin)\|\dsijn\|^2\to0
\end{align*}
Moreover, note that
\begin{align*}
    \frac{1}{\mathcal{D}_{3}(G_n,G_*)}\cdot\sum_{i=1}^{K}\exp(\bzin)\|\daijn\|^2=\sum_{i=1}^{K}\sum_{\eta_1\in\{2e_1,\ldots,2e_d\}}\frac{|U^n_{j,\eta_1,2,0}|}{\mathcal{D}_{3}(G_n,G_*)}\to0.
\end{align*}
Gathering all the above limits, we obtain that $1=\mathcal{D}_3(G_n,G_*)/\mathcal{D}_3(G_n,G_*)\to0$ as $n\to\infty$, which is a contradiction. Thus, at least one among the terms $S^n_{j,\eta_1,\eta_2,\eta_3}/\mathcal{D}_3(G_n,G_*)$ does not converge to zero as $n\to\infty$

\textbf{Step 3 - Fatou's contradiction}:

It follows from the hypothesis that $\mathbb{E}_X[V(\overline{g}_{G_n}(\cdot|X),g_{G_*}(\cdot|X))]/\mathcal{D}_3(G_n,G_*)\to0$ as $n\to\infty$. Then, by applying the Fatou's lemma, we get
\begin{align*}
    0=\lim_{n\to\infty}\dfrac{\mathbb{E}_X[V(\overline{g}_{G_n}(\cdot|X),g_{G_*}(\cdot|X))]}{\mathcal{D}_3(G_n,G_*)}=\frac{1}{2}\cdot\int\liminf_{n\to\infty}\frac{|\overline{g}_{G_n}(Y|X)-g_{G_*}(Y|X)|}{\mathcal{D}_3(G_n,G_*)}\dint X\dint Y,
\end{align*}
which implies that $|\overline{g}_{G_n}(Y|X)-g_{G_*}(Y|X)|/\mathcal{D}_3(G_n,G_*)\to0$ as $n\to\infty$ for almost surely $(X,Y)$. 

Next, we define $m_n$ as the maximum of the absolute values of $S^n_{j,\eta_1,\eta_2/\mathcal{D}_3(G_n,G_*)}$. It follows from Step 2 that $1/m_n\not\to\infty$. Moreover, by arguing in the same way as in Step 3 in Appendix~\ref{appendix:strongly_exact}, we receive that
\begin{align}
    \label{eq:strongly_Hn_limit}
    H_n/[m_n\mathcal{D}_3(G_n,G_*)]\to0
\end{align}
as $n\to\infty$. By abuse of notations, let us denote
\begin{align*}
    S^n_{j,\eta_1,\eta_2,\eta_3}/[m_n\mathcal{D}_3(G_n,G_*)]&\to\tau_{j,\eta_1,\eta_2,\eta_3}
\end{align*}
Here, at least one among $\tau_{j,\eta_1,\eta_2,\eta_3}$ is non-zero. Then, by putting the results in equations~\eqref{eq:strongly_Hn_over_fitted} and ~\eqref{eq:strongly_Hn_limit} together, we get
\begin{align*}
   \sum_{j=1}^{K}\sum_{\eta_3=0}^{1+\mathbf{1}_{\{|\mathcal{C}_j|>1\}}}\sum_{|\eta_1|+\eta_2=0}^{2(1+\mathbf{1}_{\{|\mathcal{C}_j|>1\}})-\eta_3}\tau_{j,\eta_1,\eta_2,\eta_3}\cdot X^{\eta_1}\exp((\boj)^{\top}X)\frac{\partial^{\eta_2+\eta_3}f}{\partial h_1^{\eta_2}\partial \sigma^{\eta_3}}(Y|(\aj)^{\top}X+\bj,\sigmaj)\\
    +\sum_{j=1}^{K}\sum_{|\gamma|=0}^{1+\mathbf{1}_{\{|\mathcal{C}_j|>1\}}}-\tau^n_{j,\gamma,0,0}\cdot X^{\gamma}\exp((\boj)^{\top}X)g_{G_*}(Y|X)=0,
\end{align*}
for almost surely $(X,Y)$. Arguing in a similar fashion as in Step 3 of Appendix~\ref{appendix:strongly_exact}, we obtain that $\tau_{j,\eta_1,\eta_2,\eta_3}=0$ for any $j\in[K]$, $0\leq|\eta_1|+\eta_2+\eta_3\leq 2(1+\mathbf{1}_{\{|\mathcal{C}_j|>1\}})$ and $0\leq |\gamma|\leq 1+\mathbf{1}_{\{|\mathcal{C}_j|>1\}}$. This contradicts the fact that at least one among them is non-zero. Hence, the proof is completed.
\end{document}